\newtheorem{thm}{Theorem}
\newtheorem{defn}{Definition}
\newtheorem{lem}{Lemma}
\long\def\@makecaption#1#2{
  \vskip 0.8ex
  \setbox\@tempboxa\hbox{\small {\bf #1:} #2}
  \parindent 1.5em  
  \dimen0=\hsize
  \advance\dimen0 by -3em
  \ifdim \wd\@tempboxa >\dimen0
  \hbox to \hsize{
    \parindent 0em
    \hfil 
    \parbox{\dimen0}{\def\baselinestretch{0.96}\small
      {\bf #1.} #2
    } 
    \hfil}
  \else \hbox to \hsize{\hfil \box\@tempboxa \hfil}
  \fi
}
\begin{document}

%

%

\twocolumn[

\aistatstitle{Derivative free optimization via repeated classification}

\aistatsauthor{ Tatsunori B. Hashimoto \And Steve Yadlowsky \And  John C. Duchi }

\vspace{1pt}
\aistatsaddress{Department of Statistics, Stanford University, Stanford, CA, 94305\\
  \texttt{\{thashim, syadlows, jduchi\}@stanford.edu}
}]

\begin{abstract}
  We develop an algorithm for minimizing a function using $n$ batched
  function value measurements at each of $T$ rounds by using classifiers to
  identify a function's sublevel set.  We show that sufficiently accurate
  classifiers can achieve linear convergence rates, and show that the
  convergence rate is tied to the difficulty of active learning sublevel
  sets. Further, we show that the bootstrap is a computationally efficient
  approximation to the necessary classification scheme.
  
  The end result is a computationally efficient derivative-free algorithm requiring no
  tuning that consistently outperforms other approaches on simulations,
  standard benchmarks, real-world DNA binding optimization, and airfoil
  design problems whenever batched function queries are natural.
\end{abstract}

\section{Introduction}

Consider the following abstract problem: given access to a
function $f : \mc{X} \to \R$, where $\mc{X}$ is some space, find
$x \in \mc{X}$ minimizing $f(x)$. We study an instantiation of this problem
that trades sequential access to $f$ for large batches of parallel
queries---one can query $f$ for its value over $n$ points at each of $T$
rounds.  In this setting, we propose a general algorithm that effectively
optimizes $f$ whenever there is a family of classifiers $h : \mc{X} \to
[0,1]$ that can predict sublevel sets of $f$ with high enough accuracy.

Our main motivation comes from settings in which $n$ is large---on the order
of hundreds to thousands---while possibly small relative to the size of
$\mc{X}$. These types of problems occur in biological
assays~\cite{knight2008array}, physical
simulations~\cite{marsden2004optimal}, and reinforcement learning
problems~\cite{schulman2015trust} where parallel computation or
high-throughput measurement systems allow efficient collection of large
batches of data.  More concretely, consider the optimization of protein
binding affinity to DNA sequence targets from biosensor
data~\cite{chevalier2017massively,knight2008array,wang2014particle}. In this
case, assays measure binding of $n \ge 1000$s of sequences and are
inherently parallel due to the fixed costs of setting up an experiment,
while the time to measure a collection of sequences makes multiple
sequential tests prohibitively time-consuming (so $T$ must be small).
In such problems, it is typically difficult to compute the gradients of $f$
(if they even exist); consequently, we focus on derivative-free optimization
(DFO, also known as zero-order optimization) techniques.

\subsection{Problem statement and approach}
The batched derivative free optimization problem consists of a sequence of rounds $t = 1, 2, \ldots, T$ in which we propose a distribution $p\sups{t}$, draw a sample of $n$ candidates $X_i \simiid p\sups{t}$, and observe $Y_i = f(X_i)$.
The goal is to find at least one example $X_i$ for which the gap
\begin{equation*}
  \min_i f(X_i) - \inf_{x \in \mc{X}} f(x)
\end{equation*}
is small.

Our basic idea is conceptually simple: In each round, fit a classifier $h$
predicting whether $Y_i \lessgtr \alpha\sups{t}$ for some threshold
$\alpha\sups{t}$.  Then, upweight points $x$ that $h$ predicts as $f(x) <
\alpha\sups{t}$ and downweight the other points $x$ for the proposal
distribution $p\sups{t}$ for the next round.

This algorithm is inspired by classical cutting-plane
algorithms~\cite[Sec.~3.2]{Nesterov04}, which remove a constant fraction of
the remaining feasible space at each iteration, and is extended into the
stochastic setting based on multiplicative weights
algorithms~\cite{Littlestone91,AroraHaKa12}. We present
the overall algorithm as Algorithm~\ref{alg:cutplane1}.

\begin{algorithm}[ht]
  \caption{Cutting-planes using classifiers}
  \label{alg:cutplane1}
  \begin{algorithmic}[1]
    \REQUIRE Objective $f$, Action space $\mathcal{X}$, hypothesis class $\mathcal{H}$.
  \STATE Set $p^{(0)}(x) = 1/|\mathcal{X}|$
  \STATE Draw $X^{(0)} \sim p^{(0)}$.
  \STATE Observe $Y^{(0)} = f(X^{(0)})$
  \FOR{$t\in\{1\hdots T\}$}
  \STATE Set $\alpha^{(t)} = \text{median}(\{Y^{(t)}_i\}_{i=1}^n)$
  \STATE Set $h^{(t)}\in\mathcal{H}$ as the loss minimizer of $L$ over
  $(X^{(0)},Y^{(0)}>\alpha^{(t)}) \hdots (X^{(t-1)},Y^{(t-1)}>\alpha^{(t)})$.
  \STATE Set $p^{(t)}(x) \propto p^{(t-1)}(x) (1-\eta h^{(t)}(x))$
  \STATE Draw $X^{(t)} \sim p^{(t)}$
  \STATE Observe $Y^{(t)} = f(X^{(t)})$.
  \ENDFOR
  \STATE Set $i^* = \arg\min_i Y_i^{(T)}$
  \RETURN $X_{i^*}^{(T)}$.
\end{algorithmic}
\end{algorithm}

\subsection{Related work}

When, as is typical in optimization, one has substantial \emph{sequential}
access to $f$, meaning that $T$ can be large, there are a number of major
approaches to optimization. Bayesian
optimization~\cite{shahriari2016taking,bogunovic2016truncated} and
kernel-based bandits~\cite{bubeck2016multi} construct an
explicit surrogate function to minimize; often, one assumes it is
possible to perfectly model the function $f$. Local search
algorithms~\cite{ConnScVi09,loshchilov2013cma} emulate gradient descent via
finite-difference and local function evaluations. Our work differs
conceptually in two ways: first, we think of $T$ as being small, while $n$
is large, and second, we represent a function $f$ by approximating its sublevel
sets. Existing batched derivative-free optimizers encounter computational
difficulties for batch sizes beyond dozens of
points~\cite{gonzalez2016batch}. Our sublevel set approach scales to large
batches of queries by simply sampling from the current sublevel set
approximation.

While other researchers have considered level set estimation in the context
of Bayesian optimization~\cite{gotovos2013active,bogunovic2016truncated} and
evolutionary algorithms~\cite{michalski2000learnable}, these use the level set
to augment a traditional optimization algorithm. We show good sublevel set
predictions alone are sufficient to achieve linear convergence. Moreover, given
the extraordinary empirical success of modern classification algorithms, e.g.\
deep networks for image classification~\cite{LeCunBeHi15}, it is natural to
develop algorithms for derivative-free optimization based on fitting a sequence
of classifiers.  \citet{yu2016derivative} also propose classification based on
optimization, but their approach assumes a classifier constrained to never
misclassify near the optimum, making the problem trivial.

\subsection{Contributions}

We present Algorithm~\ref{alg:cutplane1} and characterize its convergence rate
with appropriate classifiers and show how it relates to measures of difficulty
in active learning.  We extend this basic approach, which may be computationally
challenging, to an approach based on bootstrap resampling that is empirically
quite effective and---in certain nice-enough scenarios---has provable guarantees
of convergence.

We provide empirical results on a number of different tasks: random (simulated)
problems, airfoil (device) design based on physical simulators, and finding
strongly-binding proteins based on DNA assays.  We show that a black-box
approach with random forests is highly effective within a few rounds $T$ of
sequential classification; this approach provides advantages in the large batch
setting.

The approach to optimization via classification has a number of practical
benefits, many of which we verify experimentally. It is possible to incorporate
prior knowledge in DFO through domain-specific classifiers, and in more generic
optimization problems one can use black-box classifiers
such as random forests. Any sufficiently accurate classifier guarantees
optimization performance and can leverage the large-batch data collection
biological and physical problems essentially necessitate. Finally, one does not
even need to evaluate $f$: it is possible to apply this framework with pairwise
comparison or ordinal measurements of $f$.

\section{Cutting planes via classification}

Our starting point is a collection of ``basic'' results that apply to
classification-based schemes and associated convergence results.  Throughout
this section, we assume we fit classifiers using pairs $(x, z)$, where $z$
is a $0/1$ label of negative (low $f(x)$) or positive (high $f(x)$) class.
We begin by demonstrating that two quantities govern the convergence of the
optimizer: (1) the frequency with which the classifier misclassifies (and
thus downweights) the optimum $x^*$ relative to the multiplicative weight
$\eta$, and (2) the fraction of the feasible space each iteration removes.

If the classifier $h^{(t)}(x)$ exactly recovers the sublevel set
($h^{(t)}(x) < 0$ iff $f(x) < \alpha^{(t)}$), $\alpha^{(t)}$ is at most the
population median of $f(X^{(t)})$, and $\mc{X}$ is finite, the basic
cutting plane bound immediately implies that
\begin{multline*}
  \log \left[\P_{x\sim p^{(T)}}\left(
    f(x) = \min_{x^*\in\mathcal{X}} f(x^*)\right)
    \right]\\
  \geq \min\left( T \log \left(\frac{2}{2 -\eta}\right)
    - \log(|\mathcal{X}|), 0\right).
\end{multline*}
It is not obvious that such a guarantee continues to hold for inaccurate
$h^{(t)}$: it may accidentally misclassify the optimum $x^*$, and the
thresholds $\alpha^{(t)}$ may not rapidly decrease the function value. To
address these issues, we provide a careful analysis in the coming sections:
first, we show the convergence guarantees implied by Algorithm~\ref{alg:cutplane1}
as a function of classification errors (Theorem \ref{thm:comp-infeasible}), after
which we propose a classification strategy directly controlling errors
(Sec.~\ref{sec:css}), and finally we give a computationally tractable
approximation (Sec.~\ref{sec:bootstrap}).

\subsection{Cutting plane style bound}

We begin with our basic convergence result. 
Letting $p\sups{t}$ and $h\sups{t}$ be a sequence of distributions
and classifiers on $\mc{X}$, the convergence
rate depends on two quantities: the coverage (number of items cut)
\begin{equation*}
  \sum_{x \in \mc{X}} h\sups{t}(x) p\sups{t-1}(x)
\end{equation*}
and the number of times a hypothesis downweights item $x$ (because $f(x)$ is
too large), which we denote $M_T(x) \defeq \sum_{t = 1}^T h\sups{t}(x)$.
We have the following

\begin{restatable}{thm}{thmbasiccuttingplane}
\label{thm:comp-infeasible}
  Let $\gamma > 0$ and assume
  that for all $t$,
  \begin{equation*}
    \sum_{x \in \mathcal{X}}h^{(t)}(x) p^{(t-1)}(x) \geq \gamma
  \end{equation*}
  where $p\sups{t}(x) \propto p^{(t-1)}(x) (1-\eta h\sups{t}(x))$ as in
  Alg.~\ref{alg:cutplane1}.  Let $\eta \in [0,1/2]$ and $p^{(0)}$ be
  uniform. Then for all $x \in \mc{X}$,
  \begin{equation*}
    \log p\sups{T}(x)
    \ge \frac{\gamma \eta}{\eta + 2} T
    - \eta(\eta + 1) M_T(x) - \log(2 |\mc{X}|).
  \end{equation*}
\end{restatable}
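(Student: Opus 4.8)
The plan is to run the standard multiplicative-weights potential argument on \emph{unnormalized} weights, separating three contributions: the uniform start, the cumulative multiplicative discount applied to a fixed point $x$, and the telescoping normalizers that carry the coverage hypothesis.

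First I would set $w^{(t)}(x) \defeq \frac{1}{|\mc{X}|}\prod_{s=1}^{t}\big(1-\eta h^{(s)}(x)\big)$ and $W^{(t)} \defeq \sum_{x\in\mc{X}} w^{(t)}(x)$, so that $p^{(t)}(x)=w^{(t)}(x)/W^{(t)}$ and therefore $\log p^{(T)}(x)=\log w^{(T)}(x)-\log W^{(T)}$. Because each $p^{(t-1)}$ is a probability distribution, the recursion collapses to $W^{(t)}=W^{(t-1)}\big(1-\eta\sum_{x}h^{(t)}(x)p^{(t-1)}(x)\big)$ with $W^{(0)}=1$, so the whole proof reduces to lower-bounding $\log w^{(T)}(x)$ and upper-bounding $\log W^{(T)}$.

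Second, for the numerator I would use that $\eta\in[0,1/2]$ and $h^{(t)}(x)\in[0,1]$ force $\eta h^{(t)}(x)\le 1/2$, so the elementary estimate $\log(1-u)\ge -u-u^2$ (valid on $[0,1/2]$) combined with $h^{(t)}(x)^2\le h^{(t)}(x)$ gives $\log\big(1-\eta h^{(t)}(x)\big)\ge-\eta(1+\eta)h^{(t)}(x)$. Summing over $t$ and adding $\log p^{(0)}(x)=-\log|\mc{X}|$ produces exactly the negative terms of the claim, namely $\log w^{(T)}(x)\ge-\eta(\eta+1)M_T(x)-\log|\mc{X}|$. For the normalizer, the coverage hypothesis gives $W^{(t)}/W^{(t-1)}\le 1-\eta\gamma$, hence $\log W^{(T)}\le T\log(1-\eta\gamma)$, i.e.\ a per-round gain of $-\log(1-\eta\gamma)$.

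The only delicate point---and where the advertised constants are pinned down---is converting $-\log(1-\eta\gamma)$ into a clean lower bound. Even the crudest inequality $-\log(1-u)\ge u$ already yields $-\log(1-\eta\gamma)\ge\eta\gamma\ge\frac{\gamma\eta}{\eta+2}$ (since $\eta+2\ge 1$), so summing over the $T$ rounds and combining with the numerator bound gives $\log p^{(T)}(x)\ge\frac{\gamma\eta}{\eta+2}T-\eta(\eta+1)M_T(x)-\log|\mc{X}|$; the extra additive $\log 2$ in the stated $\log(2|\mc{X}|)$ is harmless slack. I expect the real care to lie not in any single step but in checking that every logarithmic inequality is applied strictly inside its range of validity---guaranteed precisely by $\eta\le 1/2$, $h^{(t)}\in[0,1]$, and $\gamma\in(0,1]$---since there is no probabilistic or combinatorial obstacle once the potential is in place, and the argument is a careful but routine chain of elementary estimates.
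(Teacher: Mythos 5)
Your proof is correct, and it takes a genuinely different---and in fact sharper---route than the paper's. The paper invokes the Arora--Hazan--Kale regret bound as a black box, applied with a comparator distribution chosen uniform over the low-mistake set $S=\{x:\sum_{t}h^{(t)}(x)<\nu T\}$; this yields $\log|S|\le \log|\mathcal{X}|-\eta\nu T$ for the choice $\nu=\gamma/(\eta+2)$, after which the normalizer is bounded by splitting it over $S$ and its complement. The $\eta/(\eta+2)$ rate and the extra $\log 2$ inside $\log(2|\mathcal{X}|)$ are artifacts of exactly that split. You instead inline the potential argument that underlies the regret bound itself: since $p^{(t-1)}(x)=w^{(t-1)}(x)/W^{(t-1)}$, the normalizer telescopes exactly,
\begin{equation*}
W^{(t)}=W^{(t-1)}\Bigl(1-\eta\sum_{x\in\mathcal{X}}h^{(t)}(x)\,p^{(t-1)}(x)\Bigr)\le (1-\eta\gamma)\,W^{(t-1)},
\end{equation*}
so the coverage hypothesis enters with no slack. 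Your elementary inequalities ($\log(1-u)\ge -u-u^2$ on $[0,1/2]$, $h^2\le h$ for $h\in[0,1]$, $-\log(1-u)\ge u$) are each applied within their range of validity, and you correctly handle fractional $h^{(t)}(x)\in[0,1]$ via $h^2\le h$ rather than assuming binary labels. The result is the strictly stronger conclusion $\log p^{(T)}(x)\ge \eta\gamma T-\eta(\eta+1)M_T(x)-\log|\mathcal{X}|$, from which the stated bound follows by the harmless weakenings $\eta\gamma\ge \gamma\eta/(\eta+2)$ and $\log|\mathcal{X}|\le\log(2|\mathcal{X}|)$, as you observe. What the paper's longer detour buys is the explicit cardinality bound on $S$---the cutting-plane-style statement that the set of rarely downweighted points shrinks exponentially---which matches the paper's narrative and is reused in the continuous analogue (Lemma~\ref{lem:cont-cut}); your telescoping argument transfers to that setting just as easily by replacing sums with integrals. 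A small bonus of your route: it sidesteps a loose step in the paper's proof, which assigns points outside $S$ weight $\exp(\log(1-\eta)\nu T)$---a bound valid for binary $h^{(t)}$ but, for fractional values, only true in the weaker form $\exp(-\eta\nu T)$, which is the form the paper's chain of inequalities ultimately relies on anyway.
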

The theorem follows from a modification of standard multiplicative weight
algorithm guarantees~\cite{AroraHaKa12}; see supplemental section
\ref{sec:cuttingplanes} for a full proof.

We say that our algorithm converges \emph{linearly} if $\log p\sups{t}(x)
\gtrsim t$. In the context of Theorem~\ref{thm:comp-infeasible},
choice of $\eta$ maximizing
$-(\eta^2+\eta)M_T(x^*)+ \frac{\eta}{\eta+2}\gamma T$
yields such convergence, as picking $\eta$
sufficiently small that
\begin{equation*}
  T - \frac{(\eta+1)(\eta+2)}{\gamma}M_T(x^*) = \Omega(T)
\end{equation*}
guarantees linear convergence if $2 M_T(x^*) < T \gamma$.
 
A simpler form of the above bound for a fixed $\eta$ shows the linear
convergence behavior.
\begin{restatable}{cor}{convergencecorollary}
  Let $x \in \mathcal{X}$, where $q_T(x) \defeq \frac{M_T(x)}{\gamma T}
  \leq 1 /4$. Under the conditions of Theorem
  \ref{thm:comp-infeasible},
  \[
    \log(p^{(T)}(x)) \geq \min\left(\frac{1}{5},
      \frac{1}{3}-\frac{4q_T(x)}{3}\right)
    \frac{\gamma T}{2} -\log(2|\mathcal{X}|)
  \]
  and
  \begin{equation*}
    \frac{1}{4} - \frac{\log(2|\mathcal{X}|) }{2\gamma T} \leq q_T(x).
  \end{equation*}
\end{restatable}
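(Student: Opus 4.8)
The plan is to specialize Theorem~\ref{thm:comp-infeasible} to a single fixed step size and then translate its bound into the stated piecewise-linear form. First I would substitute the definition $M_T(x) = q_T(x)\,\gamma T$ into the conclusion of Theorem~\ref{thm:comp-infeasible}, so that the right-hand side becomes
\[
  \log p^{(T)}(x) \ge \left( \frac{\eta}{\eta+2} - \eta(\eta+1)\, q_T(x) \right)\gamma T - \log(2|\mc{X}|),
\]
an affine function of $q_T(x)$ for each fixed $\eta$. Since the corollary asks for a bound ``for a fixed $\eta$,'' the natural move is to take $\eta$ at the endpoint $\eta = 1/2$ of the admissible interval $[0,1/2]$, which maximizes the coverage coefficient $\frac{\eta}{\eta+2}$; this yields $\frac{\eta}{\eta+2} = \frac15$ and $\eta(\eta+1) = \frac34$, so that $\log p^{(T)}(x) \ge (\frac15 - \frac34 q_T(x))\gamma T - \log(2|\mc{X}|)$.

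It then remains to check the elementary inequality $\frac15 - \frac34 q \ge \frac12 \min(\frac15, \frac13 - \frac43 q)$ for all $q \in [0,1/4]$, which gives the first displayed bound because the $\log(2|\mc{X}|)$ term is untouched. I would verify this by splitting on the breakpoint of the minimum: the two pieces $\frac15$ and $\frac13 - \frac43 q$ agree at $q = 1/10$, so for $q \le 1/10$ the target is $\frac{1}{10}$ and for $q \in [1/10, 1/4]$ it is $\frac12(\frac13 - \frac43 q)$. In each regime the claim reduces to a single linear inequality, cleanest to check where $\frac15 - \frac34 q$ is smallest; both reduce to slack conditions ($q \le 2/15$ on the first piece, $q \le 2/5$ on the second) that hold throughout $[0,1/4]$.

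For the second displayed inequality I would use only that $p^{(T)}$ is a probability distribution, so $p^{(T)}(x) \le 1$ and hence $\log p^{(T)}(x) \le 0$. Combining this with the lower bound above (equivalently, forcing the right-hand side of Theorem~\ref{thm:comp-infeasible} to be nonpositive) and solving the resulting linear inequality for $q_T(x)$ produces a lower bound of the stated form $q_T(x) \ge \frac14 - \frac{\log(2|\mc{X}|)}{2\gamma T}$, where the leading constant near $\frac14$ comes from the ratio of the coverage coefficient to the downweighting coefficient and the correction term collects the normalization constant $\log(2|\mc{X}|)$; some loosening of constants is available here to reach the clean $\frac14$ threshold.

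The main obstacle I anticipate is bookkeeping rather than anything conceptual: pinning down the fixed value $\eta = 1/2$ so that the affine lower bound dominates the piecewise-linear target $\frac12\min(\cdot,\cdot)$ uniformly on $[0,1/4]$, and carrying the constants cleanly through the rearrangement of $\log p^{(T)}(x) \le 0$ so that the $\frac14$ threshold survives.
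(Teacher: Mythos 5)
Your handling of the first display is correct, and it takes a genuinely different route from the paper's. The paper balances the two linear terms of Theorem~\ref{thm:comp-infeasible} by solving $\frac{(\eta+1)(\eta+2)}{\gamma}M_T(x) = \frac{T}{2}$, which gives the $q$-dependent step size $\eta = \min\bigl(\frac{1}{2}, \sqrt{\tfrac{1}{4} + \tfrac{1}{2q_T(x)}} - \tfrac{3}{2}\bigr)$, reduces the bound to $\min\bigl(\frac15,\, 1 + 2q_T - 2\sqrt{2q_T + q_T^2}\bigr)\frac{\gamma T}{2} - \log(2|\mathcal{X}|)$, and then lower-bounds the convex function $1+2q-2\sqrt{2q+q^2}$ by its tangent line $\frac13 - \frac43 q$ at $q = \frac14$. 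You instead fix $\eta = \frac12$, obtain the affine bound $\bigl(\frac15 - \frac34 q_T(x)\bigr)\gamma T - \log(2|\mathcal{X}|)$, and verify $\frac15 - \frac34 q \ge \frac12\min\bigl(\frac15, \frac13 - \frac43 q\bigr)$ on $[0,\frac14]$ by splitting at the breakpoint $q = \frac1{10}$; your slack conditions $q \le \frac{2}{15}$ and $q \le \frac25$ both check out. Your route buys three things: it avoids the square-root calculus entirely, it is pointwise \emph{stronger} than the stated bound on all of $[0,\frac14]$, and—unlike the paper's proof—it uses an $\eta$ that does not depend on the run-dependent, $x$-dependent quantity $q_T(x)$, so a single run of the algorithm satisfies the conclusion simultaneously for every $x$ with $q_T(x) \le \frac14$, which is what the phrase ``for a fixed $\eta$'' in the paper actually demands.

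The second display is where you have a genuine gap. Solving $0 \ge \bigl(\frac15 - \frac34 q\bigr)\gamma T - \log(2|\mathcal{X}|)$ for $q$ yields $q_T(x) \ge \frac{4}{15} - \frac{4}{3}\cdot\frac{\log(2|\mathcal{X}|)}{\gamma T}$, not the stated $\frac14 - \frac12\cdot\frac{\log(2|\mathcal{X}|)}{\gamma T}$. Since $\frac{4}{15} > \frac14$ but the slope $\frac43$ exceeds $\frac12$, your bound implies the stated one only when $\log(2|\mathcal{X}|) \le \gamma T/50$: at $\log(2|\mathcal{X}|)/(\gamma T) = \frac1{10}$ the statement asserts $q_T(x) \ge \frac15$ while yours gives only $q_T(x) \ge \frac{2}{15}$, and for $\log(2|\mathcal{X}|)/(\gamma T) \in (\frac15, \frac12)$ yours is vacuous while the statement is not. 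Your remark that ``some loosening of constants is available'' points the wrong way: loosening can lower the intercept $\frac{4}{15}$, but nothing in your derivation can shrink the slope $\frac43$ to $\frac12$. To be fair, the paper's own one-line derivation (``note $\log p^{(T)}(x) < 0$ and solve'') fares no better: solving its min-form through the piece $\frac13 - \frac43 q$ gives slope $\frac32$, and through the piece $\frac15$ gives no constraint on $q$ at all.

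A derivation that actually reaches the stated constants needs a sharper input than the first display. The cleanest is the elementary product form of the update: with coverage at least $\gamma$ each round, $p^{(T)}(x) \ge \frac{1}{|\mathcal{X}|}(1-\eta)^{M_T(x)}(1-\eta\gamma)^{-T}$, using $\log(1-\eta h) \ge h\log(1-\eta)$ for $h \in [0,1]$ and the normalizer bound $1 - \eta\gamma$. Combining with $\log p^{(T)}(x) \le 0$ and $-\log(1-\eta\gamma) \ge \eta\gamma$ at $\eta = \frac12$ gives $q_T(x) \ge \frac{1}{2\log 2} - \frac{\log|\mathcal{X}|}{(\log 2)\,\gamma T}$. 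Writing $a = \log|\mathcal{X}|/(\gamma T)$ and $b = \log 2/(\gamma T)$, the difference between this and the target is exactly $\bigl(\frac{1}{\log 2} - \frac12\bigr)\bigl(\frac12 - a\bigr) + \frac{b}{2}$, which is nonnegative whenever $a \le \frac12$; and when $a > \frac12$ the target $\frac14 - \frac{a+b}{2}$ is negative, hence trivial since $q_T(x) \ge 0$. So the stated inequality is true for your choice $\eta = \frac12$, but only via this direct argument, not by rearranging the first display.
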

\noindent
The condition $q_T(x) \geq \frac{1}{4} - \frac{1}{2\gamma T}
\log(2|\mathcal{X}|)$ arises because if $M_T(x)$ is small, then eventually
we must have $p\sups{T}(x) \ge 1-\gamma$, and any classifier $h$ which
fulfils the condition $\sum_{x \in \mathcal{X}}h^{(t)}(x) p^{(t-1)}(x) \geq
\gamma$ in Thm.~\ref{thm:comp-infeasible} must downweight $x$.  At this
point, we can identify the optimum exactly with $O(1/(1-\gamma))$ additional
draws.


The corollary shows that if $M_T(x^*)=0$ and $\gamma = (1-1/e)-1/2
< 0$, we recover a linear cutting-plane-like convergence
rate~\cite[cf.][]{Nesterov04}, which makes constant progress in volume
reduction in each iteration.

\subsection{Consistent selective strategy for strong control of error}
\label{sec:css}

The basic guarantee of Theorem~\ref{thm:comp-infeasible} requires 
relatively few mistakes on $x^*$, or at least on a point $x$ with $f(x)
\approx f(x^*)$, to achieve good performance in optimization.  It is thus
important to develop careful classification strategies that are
conservative: they do not prematurely cut out values $x$ whose performance is
uncertain. With this in mind, we now show how consistent selective
classification strategies~\cite{el2012active} (related to active learning
techniques, and which abstain on ``uncertain'' examples similar to the
Knows-What-It-Knows framework~\cite{LiLiWa09,AbernethyAmDrKe13}) allow us to
achieve linear convergence when the classification problems are realizable using
a low-complexity hypothesis class.

The central idea is to only classify an example if all zero-error hypotheses
agree on the label, and otherwise abstain. Since any hypothesis achieving
zero population error must have zero training set errors, we will only
label points in a way consistent with the true labels.
\citet{el2012active} define the following
\emph{consistent selective strategy} (CSS).
\begin{defn}[Consistent selective strategy]
  \label{defn:css}
  For a hypothesis class $\mathcal{H}$ and training sample $S$, the
  \emph{version space} $\verspace_{\mathcal{H},S_m}\subset \mathcal{H}$ is
  the set of all hypotheses which perfectly classify $S_m$.  The
  \emph{consistent selective strategy} is the classifier
  \[h(x)=
  \begin{cases}
    1 &\text{ if }\forall g \in \verspace_{\mathcal{H},S_m}, g(x)=1 \\
    0 &\text{ if }\forall g \in \verspace_{\mathcal{H},S_m}, g(x)=0 \\
    \text{no decision} & \text{ otherwise.}
    \end{cases}
  \]
\end{defn}

Applied to our optimizer, this strategy enables safely downweighting
examples whenever they are classified as being outside the sublevel
set. Optimization performance guarantees then come from demonstrating that
at each iteration the selective strategy does not abstain on too many
examples.

The rate of abstention for a selective classifier is related to the
difficulty of disagreement based active learning, controlled by the
disagreement coefficient \cite{hanneke2014theory}.

\begin{defn}
  \label{def:disagree}
  The \emph{disagreement ball} of a hypothesis class $\mathcal{H}$ for
  distribution $P$ is
  \begin{equation*}
    B_{\mathcal{H},P}(h,r) \defeq \{h' \in \mathcal{H} \mid
    P(h(X)\neq h'(X)) \leq r\}.
  \end{equation*}
  The \emph{disagreement region of a subset $\mathcal{G}\subset \mathcal{H}$} is
  \begin{equation*}
    \disagree(\mathcal{G}) \defeq
    \{x\in\mathcal{X} \mid \exists h_1, h_2 \in
    \mathcal{G} \text{ s.t. } h_1(x) \neq h_2(x)\}.
  \end{equation*}
  The \emph{disagreement coefficient} $\discoeff_h$
  of the hypothesis class $\mathcal{H}$
  for the distribution $P$ is
  \begin{equation*}
    \discoeff_h \defeq \sup_{r > 0}
    \frac{P(X\in\disagree(B_{\mc{H},P}(h,r)))}{r}.
  \end{equation*}
\end{defn}
The disagreement coefficient directly bounds the abstention rate as a function of generalization error.
\begin{restatable}{thm}{thmcsscover}
  \label{thm:csscover}
  Let $h$ be the CSS classifier in definition \ref{defn:css}, and let $h^*\in\mathcal{H}$ be a classifier achieving zero risk.  If
  $\P(g(X) \neq h^*(X)) < \epsilon$ for all $g
  \in \verspace_{\mathcal{H},S_m}$, then CSS achieves coverage
  \begin{equation*}
    \P(h(X) = \text{no decision}) \leq \discoeff_{h^*} \epsilon
  \end{equation*}
\end{restatable}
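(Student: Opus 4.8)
The plan is to rewrite the abstention event directly as a disagreement region and then control that region with the disagreement coefficient. First I would note that, with binary labels, the CSS rule outputs ``no decision'' on a point $x$ exactly when the version space is not unanimous at $x$: if not every $g \in \verspace_{\mathcal{H},S_m}$ labels $x$ as $1$ and not every such $g$ labels it $0$, then there must exist $g_1, g_2 \in \verspace_{\mathcal{H},S_m}$ with $g_1(x) \neq g_2(x)$, and conversely any such disagreeing pair forces the ``otherwise'' branch. By Definition~\ref{def:disagree} this is precisely the statement $x \in \disagree(\verspace_{\mathcal{H},S_m})$, so
\[
  \P(h(X) = \text{no decision}) = \P\!\left(X \in \disagree(\verspace_{\mathcal{H},S_m})\right).
\]

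Next I would translate the error hypothesis into a containment statement. The assumption that $\P(g(X) \neq h^*(X)) < \epsilon$ for every $g \in \verspace_{\mathcal{H},S_m}$ says exactly that each such $g$ lies in the disagreement ball $B_{\mathcal{H},P}(h^*,\epsilon)$, i.e.\ $\verspace_{\mathcal{H},S_m} \subseteq B_{\mathcal{H},P}(h^*,\epsilon)$. The disagreement region is monotone under set inclusion, since a pair of hypotheses disagreeing at $x$ inside the smaller set also disagrees inside the larger one; hence $\disagree(\verspace_{\mathcal{H},S_m}) \subseteq \disagree(B_{\mathcal{H},P}(h^*,\epsilon))$ and therefore
\[
  \P\!\left(X \in \disagree(\verspace_{\mathcal{H},S_m})\right) \leq \P\!\left(X \in \disagree(B_{\mathcal{H},P}(h^*,\epsilon))\right).
\]

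Finally I would apply the definition of the disagreement coefficient, which gives $\P(X \in \disagree(B_{\mathcal{H},P}(h^*,r))) \leq \discoeff_{h^*}\, r$ for all $r > 0$, since $\discoeff_{h^*}$ is the supremum of this ratio over $r$. Taking $r = \epsilon$ and chaining the three displays yields $\P(h(X) = \text{no decision}) \leq \discoeff_{h^*}\epsilon$, as claimed.

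The whole argument is a short reduction, so I do not expect substantive obstacles; the only step needing real care is the first equivalence. It relies on the labels being binary (so ``not all ones'' forces the existence of a hypothesis labeling $0$, and vice versa) and on the version space being nonempty, which holds because $h^*$ attains zero risk and hence zero error on the training sample, placing $h^* \in \verspace_{\mathcal{H},S_m}$.
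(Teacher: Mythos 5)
Your proposal is correct and follows essentially the same two-step argument as the paper's proof: identify the abstention event with the disagreement region of the version space, then use the hypothesis to embed $\verspace_{\mathcal{H},S_m}$ in $B_{\mathcal{H},P}(h^*,\epsilon)$ and invoke the definition of $\discoeff_{h^*}$. Your added care about the monotonicity of $\disagree(\cdot)$ under set inclusion and the nonemptiness of the version space (via $h^* \in \verspace_{\mathcal{H},S_m}$) only makes explicit what the paper leaves implicit.
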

This follows from the definition of the disagreement coefficient, and the
size of the version space (Supp. section \ref{sec:cuttingplanes} contains a
full proof).

The dependence of our results on the disagreement coefficient implies a
reduction from zeroth order optimization to disagreement based active
learning~\cite{el2012active} and selective
classification~\cite{wiener2011agnostic} over sublevel sets.

Implementing the CSS classifier may be somewhat challenging: given a
particular point $x$, one must verify that all hypotheses consistent with
the data classify it identically. In many cases, this requires training a
classifier on the current training sample $S\sups{t}$ at iteration $t$,
coupled with $x$ labeled positively, and then retraining the classifier with
$x$ labeled negatively~\cite{wiener2011agnostic}. This cost can be
prohibitive. (Of course, implementing the multiplicative weights-update
algorithm over $x \in \mc{X}$ is in general difficult as well, but in a
number of application scenarios we know enough about $\mc{H}$ to be able to
approximate sampling from $p\sups{t}$ in Alg.~\ref{alg:cutplane1}.)

A natural strategy is to use the CSS classifier as part of Algorithm
\ref{alg:cutplane1}, setting all \texttt{no decision} outputs to the zero
class, only removing points confidently above the level set
$\alpha\sups{t}$. That is, in round $t$ of the algorithm,
given samples $S=(X\sups{t}, Z\sups{t})$, we define
\begin{equation*}
  h^{(t)}(x) = \begin{cases}
    1 &\text{ if }\forall g \in \verspace_{\mathcal{H},S}, g(x)=1\\
    0 &\text{ if }\forall g \in \verspace_{\mathcal{H},S}, g(x)=0\\
    0 & \text{ otherwise.}
  \end{cases}
\end{equation*}
There is some tension between classifying examples correctly and cutting
out bad $x \in \mc{X}$, which the next theorem shows we can address
by choosing large enough sample sizes $n$.
\begin{restatable}{thm}{thmcsscut}
  \label{thm:csscut}
  Let $\mathcal{H}$ be a hypothesis class containing indicator functions for
  the sublevel sets of $f$, with VC-dimension $V$ and disagreement
  coefficient $\discoeff_h$.  There exists a numerical constant $C < \infty$
  such that for all $\delta \in [0, 1]$, $\epsilon \in [0, 1]$,
  and $\gamma \in (\discoeff_h \epsilon, \half)$, and
  \begin{multline*}
    n \geq \max\Big\{C\epsilon^{-1} [V \log(\epsilon^{-1}) + \log(\delta^{-1}) + \log(2T)],\\
    \frac{1}{2(\gamma-0.5)^2}(\log(\delta^{-1})+\log(2T))\Big\},
  \end{multline*}
  with probability at least $1 - \delta$
  \begin{multline*}
    \log(p^{(T)}(x^*)) \geq \min\Big\{(\gamma-\discoeff_h\epsilon) \frac{\eta}{\eta+2}T -\log(2|\mathcal{X}|), \\
    \log(1-\gamma) \Big\}
  \end{multline*}
  after $T$ rounds of Algorithm~\ref{alg:cutplane1}.
\end{restatable}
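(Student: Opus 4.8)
The plan is to combine the deterministic multiplicative-weights bound of Theorem~\ref{thm:comp-infeasible} with two high-probability sampling guarantees, exploiting that the CSS-with-zero rule never cuts $x^*$. First I would argue that $M_T(x^*) = 0$ deterministically. At round $t$ the labels are $z = \mathbb{1}[f > \alpha\sups{t}]$, so the indicator $h^*_t \defeq \mathbb{1}[f(\cdot) > \alpha\sups{t}]$ of the sublevel set lies in $\mathcal{H}$ by the realizability assumption and is consistent with every training pair; hence $h^*_t \in \verspace_{\mathcal{H},S\sups{t}}$. Because $\alpha\sups{t}$ is a median it satisfies $\alpha\sups{t} \ge \min_x f(x) = f(x^*)$, so $h^*_t(x^*) = 0$. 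Thus the version space contains a hypothesis labeling $x^*$ negative, the CSS rule does not output $1$ on $x^*$, and the modified rule sets $h\sups{t}(x^*) = 0$ for every $t$. Summing gives $M_T(x^*) = 0$, so the $-\eta(\eta+1)M_T(x^*)$ term in Theorem~\ref{thm:comp-infeasible} vanishes.

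Next I would establish that, on a good event of probability $\ge 1 - \delta$, the coverage condition of Theorem~\ref{thm:comp-infeasible} holds at level $\gamma' = \gamma - \discoeff_h \epsilon$ at each round. This splits into two per-round estimates, each controlled by one term in the lower bound on $n$ and each followed by a union bound over the $T$ rounds (the source of the $\log(2T)$ factors). The first is a median-concentration step: since $Y\sups{t-1}$ is i.i.d.\ from $p\sups{t-1}$, a Hoeffding bound with $n \ge \frac{1}{2(\gamma-\half)^2}(\log \delta^{-1} + \log 2T)$ ensures $\P_{x\sim p\sups{t-1}}(f(x) > \alpha\sups{t}) \ge \gamma$, i.e.\ the true classifier $h^*_t$ already cuts a $\gamma$-fraction of the current mass. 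The second is a realizable uniform-convergence step: using the fresh batch of $n$ i.i.d.\ samples from $p\sups{t-1}$ and the VC bound with $n \ge C\epsilon^{-1}[V\log\epsilon^{-1} + \log\delta^{-1} + \log 2T]$, every hypothesis agreeing with $h^*_t$ on the sample has error $< \epsilon$ under $p\sups{t-1}$; since the version space built from all rounds is a subset of those consistent with the fresh batch, this bound transfers to all $g \in \verspace_{\mathcal{H},S\sups{t}}$. A uniform-over-$\mathcal{H}$ convergence statement is needed here because $\alpha\sups{t}$, and hence $h^*_t$, is itself data-dependent. Theorem~\ref{thm:csscover} then gives an abstention probability at most $\discoeff_h\epsilon$ under $p\sups{t-1}$, and the pointwise inequality $h\sups{t}(x) \ge h^*_t(x) - \mathbb{1}[\text{abstain}](x)$ yields $\sum_x h\sups{t}(x)p\sups{t-1}(x) \ge \gamma - \discoeff_h\epsilon$, which is positive by the hypothesis $\gamma > \discoeff_h\epsilon$.

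Feeding $\gamma' = \gamma - \discoeff_h\epsilon$ and $M_T(x^*)=0$ into Theorem~\ref{thm:comp-infeasible} gives $\log p\sups{T}(x^*) \ge (\gamma - \discoeff_h\epsilon)\frac{\eta}{\eta+2}T - \log(2|\mc{X}|)$, the first branch of the minimum. The remaining point, and the one I expect to be the main obstacle, is the cap at $\log(1-\gamma)$. The median step can fail once $x^*$ carries enough mass that the sample median equals $f(x^*)$ and the fraction strictly above it falls below $\gamma$; this is precisely the regime $p\sups{t-1}(x^*) \gtrsim 1-\gamma$. I would make this a clean dichotomy: at each round either the coverage bound holds and the weight on $x^*$ keeps growing, or $p\sups{t-1}(x^*) \ge 1-\gamma$ already, in which case $\log p\sups{T}(x^*) \ge \log(1-\gamma)$ follows directly since $h\sups{t}(x^*)=0$ prevents the weight from decreasing. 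Taking the first round where the threshold is crossed and combining the two cases produces the stated minimum. The delicate part is handling the data-dependence of $\alpha\sups{t}$ together with the shifting distributions $p\sups{t-1}$, so that both the VC bound and the disagreement coefficient apply at every round's distribution; the subset-of-version-space observation and the uniform convergence bound are what make this go through.
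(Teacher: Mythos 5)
Your proposal is correct and follows essentially the same route as the paper's proof: Hoeffding concentration of the empirical median (giving the $\frac{1}{2(\gamma-0.5)^2}$ term), VC uniform convergence feeding Theorem~\ref{thm:csscover} to bound abstention by $\discoeff_h\epsilon$ and hence coverage by $\gamma-\discoeff_h\epsilon$, the observation that the CSS-with-zero rule never downweights $x^*$ so $M_T(x^*)=0$ in Theorem~\ref{thm:comp-infeasible}, a union bound over the $T$ rounds with $\delta/2$ per part (the source of the $\log(2T)$ terms), and the same dichotomy yielding the $\log(1-\gamma)$ branch. If anything, your writeup is cleaner at the two spots where the paper is garbled: you justify $M_T(x^*)=0$ deterministically via version-space membership of the true superlevel-set indicator $\mathbb{1}\{f>\alpha^{(t)}\}$, and you make explicit that $h^{(t)}(x^*)=0$ forces $p^{(t)}(x^*)$ to be nondecreasing so that the mass cap $1-\gamma$, once reached, persists to round $T$, whereas the paper asserts these steps with typo-ridden conditions such as $p^{(t)}(x^*) < \tfrac{1}{1-\gamma}$.
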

The proof follows from combining the selective classification bound with
standard VC dimension arguments to obtain the sample size requirement
(Supp.~\ref{sec:cuttingplanes} contains a full proof).


Thus if $\discoeff_h$ is small, such as $\log(|\mathcal{X}|)$, then choosing
$\epsilon = \discoeff_h^{-1}$ achieves exponential improvements over random
sampling. In the worst case, $\discoeff_h=O(|\mathcal{X}|)$, but small $\discoeff_h$ are known for many problems, for example for linear classification with continuous $\mathcal{X}$ over densities bounded away from zero, $\discoeff_h =
\text{poly}(\log(\text{Vol}(\mathcal{X})))$, which would result in linear
convergence rates (Theorem 7.16, \cite{hanneke2014theory}).

Using recent bounds for the disagreement coefficient for linear separators
\cite{BalcanLo13}, we can show that for linear optimization over a convex
domain, the CSS based optimization algorithm above achieves linear
convergence with $O(d^{3/2}\log(d^{1/2})-d^{1/2}\log(3T\delta))$ samples
with probability at least $1-\delta$ (for lack of space, we present this as
Theorem~\ref{thm:linear-opt-classifier} in
the supplement.)

When the classification problem is non-realizable, but the Bayes-optimal
hypothesis does not misclassify $x^*$, an analogous result holds
through the agnostic selective classification framework of Wiener and El-Yaniv
\cite{wiener2011agnostic}. The
full result is in supplemental Theorem~\ref{thm:cssagn}.

\section{Computationally efficient approximations}
\label{sec:bootstrap}

While selective classification provides sufficient
control of error for linear convergence, it is generally computationally
intractable. However, a bootstrap resampling
algorithm~\cite{EfronTi93} approximates selective classification well enough to
provide finite sample guarantees in parametric settings.
Our analysis provides intuition for the empirical observation that selective classification via the bootstrap works well in many real-world problems~\cite{mamitsuka1998query}.

Formally, consider a parametric family $\{P_\theta\}_{\theta \in
  \Theta}$ of conditional distributions $Z \mid X \in [0,1]$ with compact parameter space $\Theta$. Given $n$ samples $X_1,\dots,X_n$, we observe $Z_i |
X_i \sim P_{\theta^*}$ with $\theta^* \in
\operatorname{int} \Theta$.

Let $\ell_\theta(x, z) = -\log(P_\theta(z | x))$ be the negative
log likelihood of $z$, which majorizes the 0-1
loss of the linear hypothesis class $\ell_\theta(x, z) \ge \ind{ (2z-1) x^\top
\theta < 0 }$. 

Define the weighted likelihood
\[
  L_n(\theta,u) \equiv \tfrac{1}{n} \sum_{i=1}^n (1+u_i)\ell_\theta(X_i, Z_i),
\]
and consider the following multiplier bootstrap
algorithm~\cite{EfronTi93,spokoiny2012parametric}, parameterized by $B \in
\naturals$ and variance $\sigma^2$. $\sigma$ adds \emph{additional}
variation in the estimates to increase parameter coverage.
\begin{enumerate}
\item Draw $\{(X_i, Z_i)\}_{i=1}^n$ from $\P$.
  \item Compute $\mle = \arg\min_\theta L_n(\theta, 0)$.
  \item For $b = 1$ to $B$,
    \begin{enumerate}
      \item Draw $u_b \simiid \mbox{Uni}[-1, 1]$.
      \item Compute
        \[\theta^\circ_{u_b} = \sigma(\arg\min_\theta L_n(\theta, u_b)-\mle)+\mle.\]
    \end{enumerate}
  \item Define the estimator \[ h^\circ(x) =  \begin{cases}
      1 &\text{ if }\forall b\in[B], x^\top \theta^\circ_{u_b}> 0 \\
      0 &\text{ if }\forall b\in[B], x^\top \theta^\circ_{u_b} \leq 0\\
         \text{no decision} & \text{ otherwise.}
    \end{cases}.\]
\end{enumerate}

For linear classifiers with strongly convex losses,
this algorithm obtains selective classification guarantees under appropriate
regularity conditions as presented in the following theorem.

  \begin{thm}
    Assume $\ell_\theta$ is twice differentiable and fulfils $\|\nabla \ell_{\theta}(X,
Z)\| \le R$, and $\norm{\nabla^2\ell_\theta(X,Z)}_{op} \leq S$ almost surely.
      Additionally, assume $L_n(\theta, 1)$ is $\gamma$-strongly convex and
      that $\nabla^2 L_n(\theta, 1)$ is $M$-Lipschitz with probability one.

      For $h^\circ$ defined above and $x\in\mathcal{X}$,
      \[P(x^\top \theta^* \leq 0 \text{ and } h^\circ_{u}(x) = 1) < \delta.\]
      Further, the abstention rate is bounded by
      \[\int_{x\in \reals^d}\ind{h^\circ_u(x)=\emptyset}p(x)dx \leq \epsilon \discoeff_h\]
      with probability $1-\delta$ whenever
      \[B \geq 15\log(3/\delta),\]
      \[\sigma=O(d^{1/2} + \log(1/\delta)^{1/2}+n^{-1/2}),\]
      \[\epsilon=O\left(\sigma^2 n^{-1} \log(B/\delta)\right),\]
      and
      \[n  \geq 2\log(2d/\delta)S/\gamma^2.\]
\end{thm}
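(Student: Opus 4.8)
The plan is to establish the two conclusions---\emph{safety} ($\P(x^\top\theta^*\le 0\text{ and }h^\circ_u(x)=1)<\delta$) and \emph{coverage} (bounded abstention)---on top of a single finite-sample linearization of the weighted-likelihood minimizers. First I would write out the first-order optimality conditions for $\mle=\arg\min_\theta L_n(\theta,0)$ and for each bootstrap minimizer $\arg\min_\theta L_n(\theta,u_b)$ and Taylor-expand them about their common population limit. Using the $\gamma$-strong convexity of $L_n(\cdot,1)$ (so $H\defeq\nabla^2 L_n(\mle,0)\succeq\tfrac{\gamma}{2}I$), the $M$-Lipschitz Hessian, and the almost-sure bounds $\norm{\nabla\ell_\theta}\le R$, $\norm{\nabla^2\ell_\theta}_{op}\le S$, this yields
\[
  \arg\min_\theta L_n(\theta,u_b)-\mle
  = -H^{-1}\Big(\tfrac{1}{n}\sum_{i=1}^n u_{b,i}\,\nabla\ell_{\mle}(X_i,Z_i)\Big)+r_b,
\]
with $r_b$ a quadratic remainder governed by $M$ and $\gamma$. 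The sample-size requirement $n\ge 2\log(2d/\delta)S/\gamma^2$ is exactly what a matrix-Bernstein bound (using $\norm{\nabla^2\ell_\theta}_{op}\le S$) needs to guarantee that every bootstrap-weighted Hessian $\nabla^2 L_n(\cdot,u_b)$ stays strongly convex---so that $\norm{H^{-1}}_{op}\le 2/\gamma$ and the expansion is valid uniformly over $b\in[B]$---on an event of probability $1-\delta$.

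For \emph{coverage}, I would control the diameter of the bootstrap cloud $\{\theta^\circ_{u_b}\}$. Each multiplier vector $u_b$ has i.i.d.\ $\mathrm{Uni}[-1,1]$ coordinates and the gradients are bounded by $R$, so the centered sum $\tfrac{1}{n}\sum_i u_{b,i}\nabla\ell_{\mle}$ is sub-Gaussian at scale $O(R/\sqrt{n})$; a Hoeffding bound and a union bound over $b\in[B]$ give $\norm{\theta^\circ_{u_b}-\mle}=O(\sigma R\gamma^{-1}\sqrt{\log(B/\delta)/n})$ for all $b$ with probability $1-\delta$. The abstention region $\{x:\exists\,b_1,b_2,\ \mathrm{sign}(x^\top\theta^\circ_{u_{b_1}})\ne\mathrm{sign}(x^\top\theta^\circ_{u_{b_2}})\}$ is contained in the disagreement region of a disagreement ball around the central classifier; translating the parameter-space diameter into a disagreement radius via the bounded density and tracking the dependence on $\sigma,n,B,\delta$ reproduces the stated $\epsilon=O(\sigma^2 n^{-1}\log(B/\delta))$, after which Definition~\ref{def:disagree} bounds its $P$-measure by $\discoeff_h\epsilon$---precisely Theorem~\ref{thm:csscover} applied to the bootstrap cloud in place of the version space.

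For \emph{safety}, the key is an anti-concentration (coverage-of-$\theta^*$) argument rather than an upper bound on the spread: for a fixed $x$ with $x^\top\theta^*\le 0$ I would show that at least one replicate lands on the correct side, $x^\top\theta^\circ_{u_b}\le 0$, with high probability. By the linearization, $x^\top(\theta^\circ_{u_b}-\mle)=\sigma\,x^\top H^{-1}\big(\tfrac{1}{n}\sum_i u_{b,i}\nabla\ell_{\mle}\big)$ (modulo $r_b$) is approximately centered with standard deviation $\propto\sigma/\sqrt{n}$, whereas the bias $x^\top(\mle-\theta^*)$ is only $O(1/\sqrt{n})$. Choosing $\sigma=O(\sqrt{d}+\sqrt{\log(1/\delta)})$ makes the replicate spread dominate the bias, so each replicate independently has at least constant probability of falling on the $\theta^*$ side; the chance that all $B$ replicates fall strictly positive is then geometric in $B$, and $B\ge 15\log(3/\delta)$ drives it below $\delta$.

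The main obstacle is making the linearization and the anti-concentration quantitative and uniform simultaneously. Controlling the remainder $r_b$ uniformly over the $B$ draws requires a finite-sample likelihood expansion in the spirit of Spokoiny's bootstrap analysis, in which the Lipschitz-Hessian constant $M$ and the strong-convexity constant $\gamma$ fix the neighborhood on which the quadratic model is accurate; at the same time, safety needs a two-sided (lower) bound on the bootstrap spread, i.e.\ a genuine anti-concentration/Gaussian-comparison estimate, not just concentration. These requirements pull $\sigma$ in opposite directions---large enough to cover $\theta^*$, yet small enough that $\epsilon\propto\sigma^2/n$ keeps the abstention at $O(\discoeff_h\epsilon)$---and reconciling them through the stated choice of $\sigma$ is the crux of the argument.
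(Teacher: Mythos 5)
Your overall architecture matches the paper's proof of this theorem (Theorem \ref{thm:bootlin} in the appendix): localize the weighted minimizers around $\mle$ using strong convexity and matrix concentration (Lemmas \ref{lem:strong-convex} and \ref{lem:theta-u-theta-nothing}, which is exactly where $n \ge 2\log(2d/\delta)S/\gamma^2$ enters), linearize $\theta_u$ against the quadratic surrogate $\tq_u$ with an $O(R^2\sigma^2/n)$ remainder controlled by $M$ and $\gamma$ (Lemma \ref{lem:quadclose}), bound abstention by containing the replicate cloud and invoking the disagreement coefficient, and get safety from a per-replicate constant crossing probability driven geometrically small by $B$. One substantive difference on the safety side: the paper does not run anti-concentration at the hyperplane directly. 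It shows $\theta^* \in Q_\tau$ for the Spokoiny-type confidence ellipse with $\tau \asymp (2.7d + \log(1/\delta))/n$ (Lemma \ref{lem:quadcontain}), then computes in whitened coordinates that the minimum over $B$ draws undercuts $\inf_{\theta \in Q_\tau} x^\top\theta$ with per-replicate probability $\approx \Phi(\epsilon - \sqrt{2\tau})$ (Lemma \ref{lem:quadmin}), patching the non-Gaussianity of the $\mathrm{Uni}[-1,1]$ multipliers with a Berry--Esseen term $O(S^2R/(\gamma\sqrt{n}))$ (Lemma \ref{lem:quadmin-boot}). Your bias-vs-spread version is morally equivalent for the fixed-$x$ claim, but as written it does not explain the stated $\sigma$: a per-direction bias bound is $O(\sqrt{\log(1/\delta)/n})$, which would suggest $\sigma \asymp \sqrt{\log(1/\delta)}$ suffices; the $d^{1/2}$ in $\sigma$ arises precisely because the whitened confidence set has radius $\asymp \sqrt{d + \log(1/\delta)}$ while a replicate projects to $O(1)$ in any fixed direction---the gap the paper discusses right after the theorem. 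To justify $\sigma = O(d^{1/2} + \log(1/\delta)^{1/2})$ you effectively need the ellipse (or a direction-uniform bias bound), i.e.\ the paper's route.

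The genuine gap is in your abstention step. You convert the parameter-space diameter of the bootstrap cloud into a disagreement-ball radius ``via the bounded density,'' but no density assumption on $p(x)$ is available in the theorem, and for general $P$ proximity of parameters does not bound $P(h_{\theta}(X) \neq h_{\theta'}(X))$: the mass of $X$ can concentrate arbitrarily close to the common decision boundary, so an $O(\sigma R\gamma^{-1}\sqrt{\log(B/\delta)/n})$ parameter diameter yields no control of disagreement probability by itself. The paper avoids this using the assumed majorization $\ell_\theta(x,z) \ge \ind{(2z-1)x^\top\theta < 0}$ together with the sandwich $Q_{\tau} \subseteq \mathcal{L}_{\tau+\zeta} \subseteq VS_{\tau+2\zeta}$ with $\zeta = M(\tau/2\gamma)^{3/2}$ (Lemma \ref{lem:quadcontain}): the cloud lies in a quadratic ball of radius $\tau' = \sigma^2\nu$, hence in a low-excess-loss ball, hence in a version space of hypotheses with excess $0$-$1$ error at most $\epsilon = O(\sigma^2/n)$; only then does the disagreement coefficient---which is defined relative to $P$ and needs a risk ball, not a parameter ball---deliver the $\epsilon\,\discoeff_h$ abstention mass as in Theorem \ref{thm:csscover}. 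Replacing your bounded-density conversion with this majorization/version-space sandwich repairs the argument without importing assumptions the theorem does not grant.
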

Due to length, the proof and full statement with constants appears in the appendix as Theorem \ref{thm:bootlin},
with a sketch provided here: we first show that a given quadratic version space and
a multivariate Gaussian sample $\tq$ obtains the selective classification guarantees (Lemmas \ref{lem:quadmin},\ref{lem:ballmax},\ref{lem:strong-convex}).
We then show that $\theta^\circ \approx \tq$ to order $n^{-1}$ which is sufficient to
recover Theorem \ref{thm:bootlin}.

\begin{figure}[h]
  \centering
\subcaptionbox{Classification confidences formed by bootstrapping approximate
  selective classification.\label{fig:bootbound}}
{\includegraphics[scale=0.27]{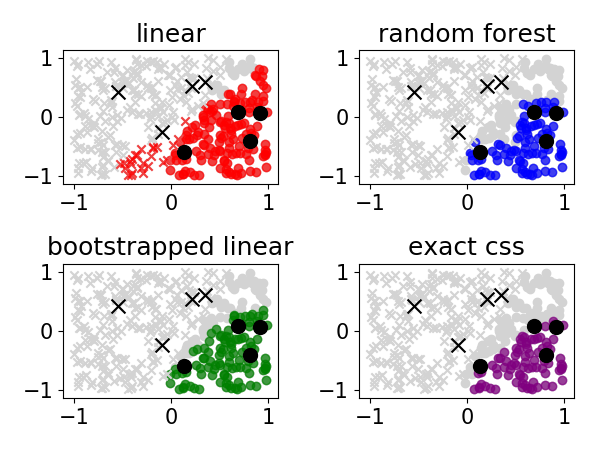}}
\quad
\subcaptionbox{Bootstrapping results in more consistent identification of
  minima. \label{fig:bootopt}}
{\includegraphics[scale=0.24]{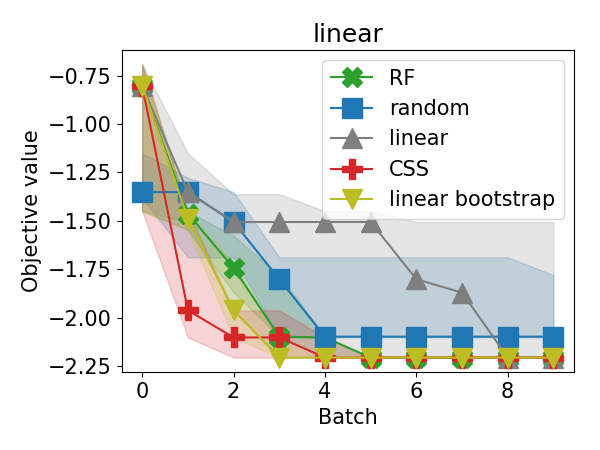}}
\caption{Bootstrap consensus provides more conservative classification boundaries which prevents repeatedly misclassifying the minimum, compared to direct loss minimization (panel b, triangle).}
\end{figure}

The $d \discoeff_h$ abstention rate in this bound is $d$
times the original selective classification result. This additional factor
of $d$ appearing in $\sigma^2$ arises from the difference between
finding an optimum within a ball and randomly sampling it: random vectors
concentrate within $O(1/d)$ of the origin, while the maximum possible value
is 1. This gap forces us to scale the variance in the decision function
by $\sigma$ (step 3b).
We present selective classification approximation bounds analogous
to Theorem~\ref{thm:csscut} for linear optimization
in the Appendix as Theorem~\ref{thm:csscutboot}.

To illustrate our results through simulations, consider a optimizing a
two-dimensional linear function in the unit box. Figure \ref{fig:bootbound}
shows the set of downweighted points (colored points) for various algorithms
on classifying a single superlevel set based on eight observations (black
points). Observe how linear downweights many points (colored `x'), in
contrast to exact CSS, which only downweights points guaranteed to be in the
superlevel set. Errors of this type combined with
Alg.~\ref{alg:cutplane1} result in optimizers which fail to find the
true minimum depending on initialization (Figure \ref{fig:bootopt}). The
bootstrapped linear classifier behaves similarly to CSS, but is looser due
to the non-asymptotic setting. Random forests, another type of bootstrapped
classifier is surprisingly good at approximating CSS, despite not making use
of the linearity of the decision boundary.

\section{Partial order based optimization}
One benefit of optimizing via classification is that the algorithm
only requires total ordering amongst the elements. Specifically, step 6 of Algorithm
\ref{alg:cutplane1} only requires threshold comparisons against a percentile selected in
step 5. This enables optimization under pairwise comparison feedback. At each round, instead of observing $f(X^{(t)})$, we observe $g(X_i^{(t)}, X_j^{(t)})=1_{f(X_i^{(t)}) < f(X_j^{(t)})}$, which is a natural form of feedback in domains such as human surveys \cite{phelps2015pairwise} or matched biological experiments \cite{harwood2013microbial}.

Given the pairwise comparison function $g$, the threshold $f(X^{(t)}) <
\alpha^{(t)}$ can be replaced with the following stochastic quantile estimator:
\begin{equation}\label{eq:paircomp}
  \hat{f}(X^{(t)}_i) = \sum_{k=1}^c g(X^{(t)}_{I_k}, X^{(t)}_i) \leq 0.5,
  \end{equation}
  where $I_k \sim \text{Unif}(\{1, 2 \hdots c\})$ with $cn$ total pairwise comparisons. We show that $c > 10$ seems to work well in practice, and more sophisticated preference aggregation algorithms may reduce the number of comparisons even further.
  
\section{Experimental evidence}
We evaluate Algorithm \ref{alg:cutplane1} as a DFO algorithm across
a few real-world experimental design benchmarks,
common synthetic toy optimization problems, and benchmarks that allow only
pairwise function value comparisons. The small-batch (n = 1-10) nature of
hyperparameter optimization problems is outside the scope of our work, even
though they are common DFO problems.

For constructing the classifier in Algorithm \ref{alg:cutplane1}, we apply
ensembled decision trees with a consensus decision defined as 75\% of trees
agreeing on the label (referred to as \textsc{classify-rf}). This particular
classifier works in a black-box setting, and is highly effective across all
problem domains with no tuning. We also empirically investigate the
importance of well-specified hypotheses and consensus ensembling and show
improved results for ensembles of linear classifiers and problem specific
classifiers, which we call \textsc{classify-tuned}.

In order to demonstrate that no special tuning is necessary, the same constants
are used in the optimizer for all experiments, and the classifiers use
off-the-shelf implementations from \textsc{scikit-learn} with no tuning.

For sampling points according to the weighted distribution in Algorithm
\ref{alg:cutplane1}, we enumerate for discrete action spaces $\mathcal{X}$, and for
continuous $\mathcal{X}$ we perturb samples from the previous rounds using a
Gaussian and use importance sampling to approximate the target
distribution. Although exact sampling for the continuous case would be
time-consuming, the Gaussian perturbation heuristic is fast, and seems to work
well enough for the functions tested here.

\begin{figure*}[h]
  \centering
\subcaptionbox{Binding to the CRX protein\label{fig:pbm1}}
{\includegraphics[scale=0.35]{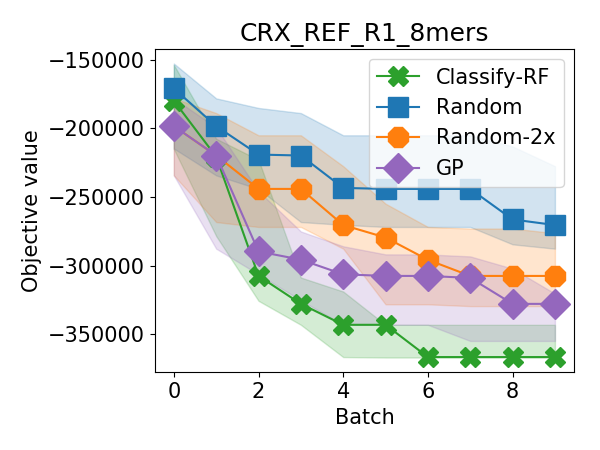}}
\subcaptionbox{Binding to the VSX1 protein\label{fig:pbm2}}
{\includegraphics[scale=0.35]{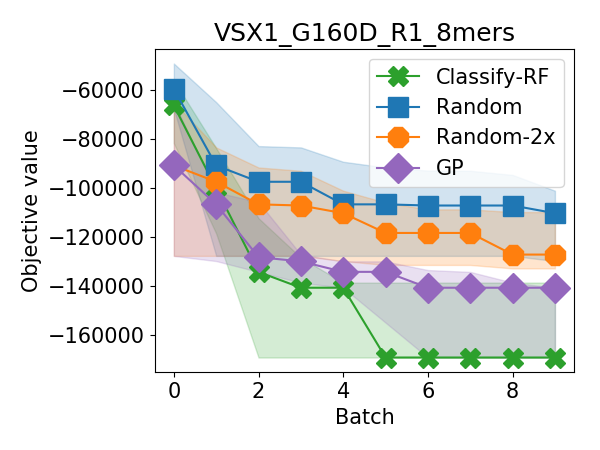}}
\subcaptionbox{High-lift airfoil design\label{fig:airfoil}}
{\includegraphics[scale=0.35]{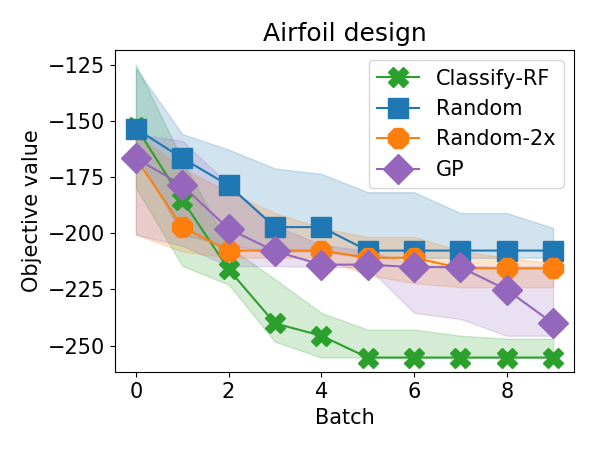}}
\caption{Performance on two types of real-world batched zeroth-order
  optimization tasks. \textsc{classify-rf} consistently outperforms baselines
  and even randomly sampling twice the batch size. The line shows median
  function value over runs, shaded area is quartiles.}
\label{fig:pbm}
\end{figure*}

As a baseline, we compare to the following algorithms\vspace{-2ex}
\begin{itemize}
\item Random sampling (\textsc{random})
  \vspace{-1ex}
\item Randomly sampling double the batch size (\textsc{random-2x}), which is a strong baseline
  recently shown to outperform many derivative-free optimizers \cite{li2016hyperband}.
  \vspace{-1ex}
\item The evolutionary strategy (\textsc{CMA-ES}) for continuous problems, due
  to its high-performance in black box optimization competitions as well as
  inherent applicability to the large batch setting \cite{loshchilov2013cma}
  \vspace{-3ex}
\item The Bayesian optimization algorithm provided by \textsc{GpyOpt}\cite{gpyopt2016} (\textsc{GP})
  for both continuous and discrete problems, using expected improvement as the
  acquisition function. We use the `random' evaluator which implements an
  epsilon-greedy batching strategy, since the large batch sizes (100-1000) makes
  the use of more sophisticated evaluators completely intractable. The default
  RBF kernel was used in all experiments presented here. The $\sfrac{3}{2}$- and
  $\sfrac{5}{2}$-Matern kernels and string kernels were tried where appropriate,
  but did not provide any performance improvements.
\vspace{-1ex}
\end{itemize}
In terms of runtime, all computations for \textsc{classify-rf} take less than 1
second per iteration compared to 0.1s for \textsc{CMA-ES} and 1.5 minutes for
\textsc{GpyOpt}. All experiments were replicated fifteen times to measure
variability with respect to initialization.

All new benchmark functions and reference implementations are made available at \url{http://bit.ly/2FgiIxA}.

\subsection{Designing optimal DNA sequences}


The publicly available protein binding microarray (PBM) dataset consisting of
201 separate assays \cite{barrera2016survey} allows us to accurately benchmark
the optimization protein binding over DNA sequences. In each assay, the binding
affinity between a particular DNA-binding protein (transcription factor) and all
8-base DNA sequences are measured using a microarray.

This dataset defines 201 separate discrete optimization problems. For each
protein, the objective function is the negative binding affinity (as measured by
fluorescence), the batch size is 100 (corresponding roughly to the size of a
typical 96-well plate), across ten rounds. Each possible action corresponds to
measuring the binding affinity of a particular 8-base DNA sequence exactly. The
actions are featurized by considering the binary encoding of whether a base
exists in a position, resulting in a 32-dimensional space. This emulates the
task of finding the DNA binding sequence of a protein using purely
low-throughput methods.

Figure \ref{fig:pbm1},\ref{fig:pbm2} shows the optimization traces of two randomly sampled
examples, where the lines indicate median achieved function value over 15 random initializations, and the shading indicates quartiles. \textsc{classify-rf} shows consistent improvements over all discrete
action space baselines. For evaluation, we further sample 20 problems and find
that the median binding affinity found across replicates is strictly better on 16 out of
20, and tied with the Gaussian process on 2.

In this case, the high performance of random forests is relatively unsurprising,
as random forests are known to be high-performance classifiers for DNA sequence
recognition tasks \cite{chen2012random,knight2008array}.

\begin{figure*}[h]
  \centering
\subcaptionbox{Random linear function\label{fig:lin}}
{\includegraphics[scale=0.35]{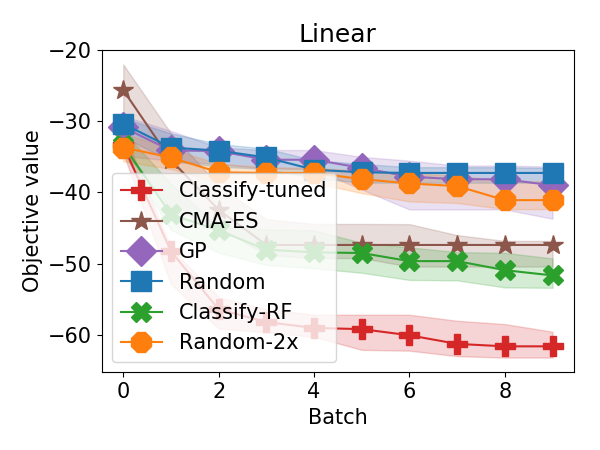}}
\subcaptionbox{Linear$+$quadratic function\label{fig:quad}}
{\includegraphics[scale=0.35]{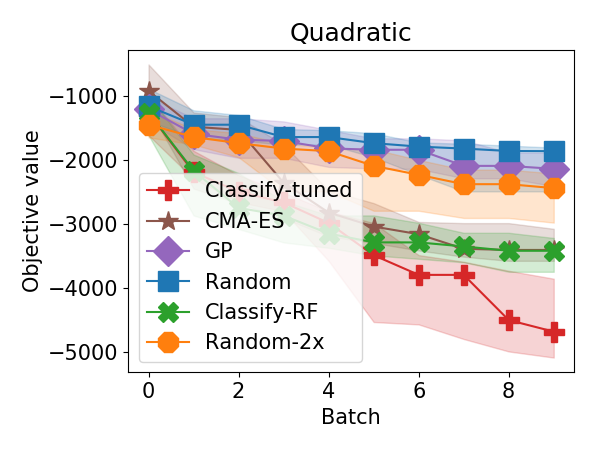}}
\subcaptionbox{Ensembling classifiers improves optimization performance\label{fig:ens}}
{\includegraphics[scale=0.35]{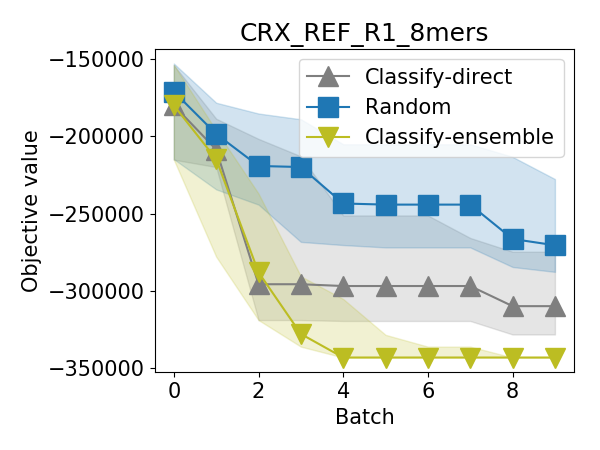}}
\caption{Testing the importance of ensembling and well-specified hypothesis class in synthetic data where the hypothesis for \textsc{Classify-tuned} exactly matches level set (panel a), matches level sets with some error (panel b). Ensembling also consistently improves performance, and reduces dependence on initialization (panel c)}
\end{figure*}

\subsection{Designing high-lift airfoils}

Airfoil design, and other simulator-based objectives are well-suited to the
batched, classification based optimization framework, as 30-40 simulations can
be run in parallel on modern multicore computers.
In the airfoil design case, the simulator is a 2-D aerodynamics simulator for
airfoils \cite{drela1989xfoil}.

The objective function is the negative of lift divided by drag (with a zero
whenever the simulator throws an error) and the action space is the set of all
common airfoils (NACA-series 4 airfoils). The airfoils are featurized by taking
the coordinates around the perimeter of the airfoil as defined in the Selig airfoil format. This results in a highly-correlated two hundred dimensional feature
space. The batch size is 30 (corresponding to the number of cores in our
machine) and $T=10$ rounds of evaluations are performed.

We find in Figure \ref{fig:airfoil} that the \textsc{classify-rf} algorithm converges to the
optimal airfoil in only five rounds, and does so consistently, unlike the baselines.
The Gaussian process beat the twice-random baseline, since the
radial basis kernel is well-suited for this task (as lift is relatively smooth
over $\ell_2$ distance between airfoils) but did not perform as well as the \textsc{classify-rf} algorithm.

\subsection{Gains from designed classifiers and ensembles}

Matching the classifier and objective function generally results in large
improvements in optimization performance. We test two continuous optimization
problems in $[-1,1]^{300}$, optimizing a random linear function, and optimizing
a random sum of a quadratic and linear functions. For this high dimensional
task, we use a batch size of 1000. In both cases we compare continuous baselines
with \textsc{classify-rf} and \textsc{classify-tune} which uses a linear
classifier.

We find that the use of the correct hypothesis class gives dramatic improvements
over baseline in the linear case (Figure \ref{fig:lin}) and continues to give
substantial improvements even when a large quadratic term is added, making the
hypothesis class misspecified (Figure \ref{fig:quad}). The \textsc{classify-rf}
does not do as well as this custom classifier, but continues to do as well as
the best baseline algorithm (\textsc{CMA-ES}).

We also find that using an ensembled classifier is an important for optimization.
Figure \ref{fig:ens} shows an example run on the DNA binding task  comparing the consensus
of an ensemble of logistic regression classifiers against a single logistic
regression classifier. Although both algorithms perform well in early iterations,
the single logistic regression algorithm gets `stuck' earlier and finds a
suboptimal local minima, due to an accumulation of 
errors. Ensembling consistently reduces such behavior.

\subsection{Low-dimensional synthetic benchmarks}

We additionally evaluate on two common synthetic benchmarks (Figure
\ref{fig:shekel},\ref{fig:hartmann}). Although these tasks are not the focus of
the work, we show that the \textsc{classify-rf} is surprisingly good as a
general black box optimizer when the batch sizes are large.

We consider a batch size of 500 and ten steps due to the moderate dimensionality
and multi-modality relative to the number of steps. We find qualitatively
similar results to before, with \textsc{classify-rf} outperforming other algorithms
and \textsc{CMA-ES} as the best baseline.

\begin{figure}[h]
  \centering
\subcaptionbox{Shekel (4d)\label{fig:shekel}}
{\includegraphics[scale=0.26]{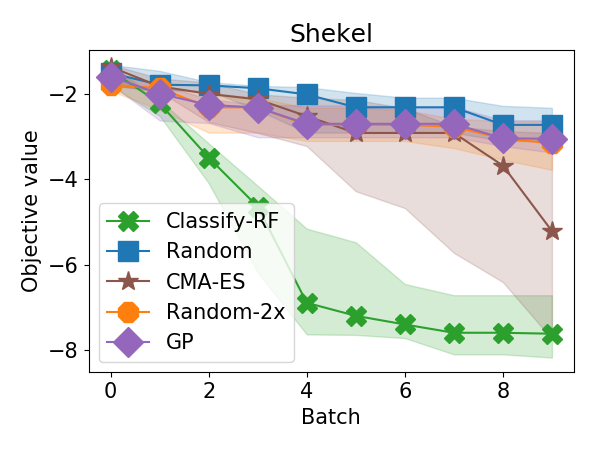}}
\subcaptionbox{Hartmann (6d)\label{fig:hartmann}}
{\includegraphics[scale=0.26]{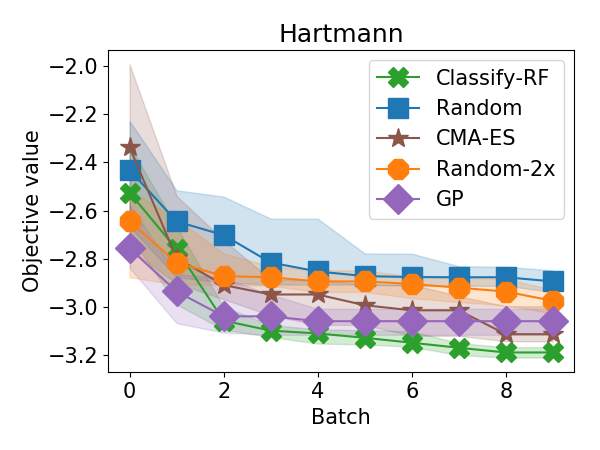}}
\caption{\textsc{classify-rf} outperforms baselines on synthetic benchmark
  functions with large batches}
\end{figure}

\subsection{Optimizing with pairwise comparisons}
Finally, we demonstrate that we can optimize a function using only
pairwise comparisons. In Figure \ref{fig:paircomp} we show the optimization performance when using the ordering estimator from equation \ref{eq:paircomp}.

For small numbers of comparisons per element $(c=5)$ we find substantial loss of
performance, but once we observe at least 10 pairwise comparisons per proposed
action, we are able to reliably optimize as well as the full function value
case. This suggests that classification based optimization can handle pairwise feedback with little loss in efficiency.

\begin{figure}[h]
  \centering
  \includegraphics[scale=0.3]{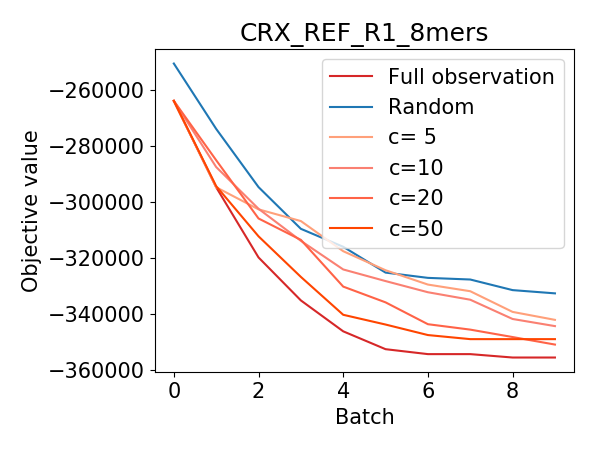}
  \caption{Optimization with pairwise comparisons between each action
    and a small set of $(c)$ randomly
    selected actions. Between 10-20 pairwise comparisons per action gives sufficient
    information to fully optimize the function.}
  \label{fig:paircomp}
\end{figure}

\section{Discussion}

Our work demonstrates that the classification-based approach to
derivative-free optimization is effective and principled, but leaves open
several theoretical and practical questions. In terms of theory, it is not
clear whether a modified algorithm can make use of empirical risk minimizers
instead of perfect selective classifiers. In practice, we have left the
question of tractably sampling from $p^{(t)}$, as well as how to
appropriately handle smaller-batch settings of $d > n$.

\clearpage


\bibliography{main,bib/bib.bib}{}
\bibliographystyle{abbrvnat}

\clearpage
\appendix

\section{Supplementary materials}

\subsection{Cutting plane algorithms}\label{sec:cuttingplanes}

\thmbasiccuttingplane*
\begin{proof}
 Since the sampling distribution is derived from multiplicative weights over
$\sum_{t=1}^T h^{(t)}(x)$, the following regret bound holds with respect to any
$p$ (Theorem 2.4, \cite{AroraHaKa12}):
 \begin{align*}
   \gamma T &\leq \sum_{t=1}^T \sum_{x\in \mathcal{X}} h^{(t)}(x) p^{(t-1)}(x) \\
            &\leq (1+\eta)\sum_{t=1}^T \sum_{x\in\mathcal{X}}h^{(t)}(x) p(x) +
\frac{\text{KL}(p||p^{(0)})}{\eta}
   \end{align*}
  Pick $S = \{i: \sum_{t}h^{(t)}(x) < \nu T\}$ and $p$ uniform over $S$ to get:
  \[\gamma T\leq (1+\eta)\nu T + \frac{\log(|\mathcal{X}|/|S|)}{\eta}\]
  From this we get the bound:
  \[\log(|S|) \leq  \log(|\mathcal{X}|) - \gamma \eta T + \nu \eta T  + \nu \eta^2 T\]

  Now we can get the following basic bound on $\log(p^{(T)}(x^*))$ by
  decomposing the normalizer using the set $S$.
  \begin{align*}
    \log(p^{(T)}(x^*)) \geq \log(1-\eta) M_T(x^*) \\
    - \log(|S| + \exp( \log(1-\eta)\nu  T)(|\mathcal{X}|-|S|))
    \end{align*}
  Note that
  \[\exp(\log(1-\eta)\nu T) < \exp(-\eta\nu T)\]
  as well as
  \[-\eta \nu T = -\gamma \eta T + \nu \eta T + \nu \eta^2 T\]
  whenever $\nu = \gamma/(\eta+2)$. This gives the normalizer bound:
  \begin{multline*}
    \log(|S| + \exp( \log(1-\eta)\nu  T)(|\mathcal{X}|-|S|)) \\
    < \log(2\exp(-\eta\gamma/(\eta+2) T)|\mathcal{X}|)
    \end{multline*}

  Combining the above:
  \begin{align*}
    &\log(p^{(T)}(x^*)) \\
    &\geq \log(1-\eta) M_T(x^*) - \log(|S| + \exp(\log(1-\eta) \nu T)|\mathcal{X}|)\\
      &\geq \log(1-\eta) M_T(x^*) - \log(2\exp(-\eta\gamma/(\eta+2)T)|\mathcal{X}|)\\
      &\geq \log(1-\eta) M_T(x^*) + \frac{\eta}{\eta+2}\gamma T -\log(2|\mathcal{X}|)\\
      &\geq -(\eta^2+\eta)M_T(x^*)+ \frac{\eta}{\eta+2}\gamma T -\log(2|\mathcal{X}|)\\
      &\geq \gamma \frac{\eta}{\eta+2}\left(T - \frac{(\eta+1)(\eta+2)}{\gamma}M_T(x^*)\right) -\log(2|\mathcal{X}|) 
  \end{align*}
  Where in the second step we use the normalizer bound, and in the third we use
  the identity $-\log(1-x) \leq x + x^2$ for $x \in [0,1/2]$.
  \end{proof}

\convergencecorollary*
  \begin{proof}
  First we balance the linear terms in Theorem \ref{thm:comp-infeasible} by
  solving for $\eta$ in
  \[\frac{(\eta+1)(\eta+2)}{\gamma} M_T(x) = \frac{T}{2}.\]
  This gives the multiplicative weight
  \[
    \eta = \min\left(\frac{1}{2},\sqrt{\frac{1}{4} + \frac{1}{2q_T(x)}}
      - \frac{3}{2}\right) > 0.
  \]
  The inequality follows from $q_T(x) \leq 1/4$ and this reduces the original
  bound to
  \begin{multline*}\log(p^{(T)}(x)) \geq \\
    \min\left(\frac{1}{5}, 1+2q_T(x)- 2\sqrt{2q_T(x) + q_T(x)^2}
    \right)\frac{\gamma T}{2}\\
    -\log(2|\mathcal{X}|).
    \end{multline*}
    Using the linear lower bound on $1+2q_T(x)- 2\sqrt{2q_T(x) + q_T(x)^2}$ at
    $q_T(x) = 1/4$ gives the first inequality.

    The second inequality follows from noting that $\log(p^{(T)}(x)) < 0$ and
    solving for $q_T(x)$.
  \end{proof}

  \thmcsscover*
\begin{proof}
  The inequality follows from two facts. The first fact is that $P(h(X) = \text{no decision})$ is
  upper bounded by the probability of sampling an $X$ is the disagreement ball of radius $\epsilon$.
  This follows from the definition of the CSS classifier which outputs no decision if and only if there exists a  classifier in $\verspace_{\mathcal{H},S_m}$ which does not agree with the
  others, which is the definition of the disagreement region.

  The second fact is that if
  $\sup_{g\in\verspace_{\mathcal{H},S_m}}P(g(x) \neq h^*(x))
  < \epsilon$, then $\verspace_{\mathcal{H},S_m} \subseteq
  B_{\mathcal{H},P}(h^*,\epsilon)$, by construction of the version space. Applying the definition of the
  disagreement coefficient completes the proof.
\end{proof}

\thmcsscut*
\begin{proof}
  The proof has three parts: first, we prove that the median of $n$ samples $Y_1
  \hdots Y_n$ is at least the $\gamma$ quantile over $p^{(t)}$.

  The probability that the $n$-sample empirical median is less than the $\gamma$
  quantile over $p^{(t)}$ is equivalent to having $X \sim \text{Binom}(\gamma, n)$
  and $\P(X > n/2)$. Applying Hoeffding's inequality,
  \[\P(X > n/2) \leq \exp(-2(\gamma-0.5)^2 n).\]

  Next, we show that the CSS prediction abstains on at most $\discoeff_h\epsilon$
  fraction of the distribution. VC dimension bounds
  \cite{kearns1994introduction} imply that we can achieve $\epsilon$ error
  uniformly over hypothesis class $\mathcal{H}$ with VC dimension $V$ in
  \[n = C\epsilon^{-1} [V \log(\epsilon^{-1}) + \log(\delta'^{-1})]\]
  samples with probability at least $1-\delta$ for some constant $C$. We can
  then apply the CSS classifier bound to get that the abstention rate is
  $\discoeff_h\epsilon$. This implies:
  \begin{equation}\label{eq:holdout}
    \sum_{x\in \mathcal{X}} h^{(t)}(x) p^{(t-1)}(x) \geq \gamma - \discoeff_h\epsilon
    \end{equation}
    
  Finally, whenever $p^{(t)}(x^*) < \frac{1}{1-\gamma}$, 
  \[h(x^*) = 1_{f(x^*) \leq \alpha^{(t)}} = 0,\]
  and thus $M_t(x^*) = 0$.

  The $\log(p^{(T)})$ inequality follows by applying Theorem
  \ref{thm:comp-infeasible} with Equation \ref{eq:holdout} noting that either
  $M_T(x^*)=0$ or $p^{(T)}(x^*) > \frac{1}{1-\gamma}$.

  Union bounding the above two probabilities, and ensuring each part has failure
  probability $\delta/2$,
  \begin{multline*}
    n = \max\big(C\epsilon^{-1} [V \log(\epsilon^{-1}) + \log(\delta^{-1}) + \log(2T)],\\
      \frac{1}{2(\gamma-0.5)^2}(\log(\delta^{-1})+\log(2T))\big)
    \end{multline*}
  implies $\delta$ failure probability.
\end{proof}

\subsection{Convergence rates for optimizing linear functions}.

The first lemma generalizes Theorem \ref{thm:comp-infeasible} to the continuous case. 
\begin{lem}
  Consider a compact $\mathcal{X} \subset \mathbb{R}^d$ and $\gamma > 0$, and
  let $(h^{(t)}: \mathcal{X} \to [0,1], p^{(t-1)})_{t \in \naturals}$ be a
  sequence such that for every $t \in \naturals$,
  \begin{gather*}
    \int_x h^{(t)}(x) p^{(t-1)}(x) dx \geq \gamma,\\
      p^{(t)}(x) \propto p^{(t-1)}(x) (1-\eta h^{(t)}(x))
  \end{gather*}
  for some $\eta \in [0,1/2]$, and $p^{(0)}$ the uniform distribution over
  $\mathcal{X}$. Further, let $M_T(x) = \sum_{t=1}^T h^{(t)}(x)$ the number of
  times $h^{(t)}$ downweights item $x$. Then the following bound on density at
  the any item $x^*$ holds on the last step:
  \begin{multline*}
    \log(p^{(T)}(x)) \geq \gamma \frac{\eta}{\eta+2}\left(T - \frac{(\eta+1)(\eta+2)}{\gamma}M_T(x)\right)\\
    -\log(\vol(\mathcal{X})/2)\\
  \end{multline*}
\label{lem:cont-cut}
\end{lem}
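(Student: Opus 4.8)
The plan is to reproduce the proof of Theorem~\ref{thm:comp-infeasible} line by line, replacing counting measure on $\mathcal{X}$ by Lebesgue measure and every sum $\sum_{x \in \mathcal{X}}$ by an integral $\int_{\mathcal{X}} \cdot\, dx$. The discrete argument invokes the finite-expert multiplicative-weights regret bound of \citet{AroraHaKa12} as a black box; the one step that does not transcribe mechanically is therefore the regret bound itself, which I would first re-establish for a continuous action space.

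First I would set up the unnormalized weights $w^{(t)}(x) = \prod_{s=1}^t (1 - \eta h^{(s)}(x))$ and the log-partition potential $\Phi_t = \log \int_{\mathcal{X}} w^{(t)}(x)\, dx$, noting that $p^{(t)}(x) = w^{(t)}(x) / \int_{\mathcal{X}} w^{(t)}$ since $p^{(0)}$ is uniform. The standard potential telescoping then applies verbatim: using $1 - \eta u \le e^{-\eta u}$ gives $\Phi_t - \Phi_{t-1} \le -\eta \int_{\mathcal{X}} h^{(t)}(x) p^{(t-1)}(x)\, dx$, and lower-bounding $\Phi_T$ by integrating $\log w^{(T)}$ against an arbitrary comparator density $p \ll p^{(0)}$ via Jensen introduces the $\mathrm{KL}(p \| p^{(0)})$ penalty. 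Because none of these manipulations uses finiteness of $\mathcal{X}$, I obtain, for every such $p$,
\[
  \gamma T \le \sum_{t=1}^T \int_{\mathcal{X}} h^{(t)}(x) p^{(t-1)}(x)\, dx \le (1 + \eta) \sum_{t=1}^T \int_{\mathcal{X}} h^{(t)}(x) p(x)\, dx + \frac{\mathrm{KL}(p \| p^{(0)})}{\eta},
\]
the exact continuous analogue of the inequality driving the discrete proof.

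Next I would choose the comparator and decompose the normalizer exactly as before. Setting $S = \{x \in \mathcal{X} : M_T(x) < \nu T\}$ and taking $p$ uniform on $S$ gives $\mathrm{KL}(p \| p^{(0)}) = \log(\vol(\mathcal{X})/\vol(S))$ and $\int h^{(t)} p \le \nu$, so the regret bound yields $\log \vol(S) \le \log \vol(\mathcal{X}) - \gamma \eta T + \nu \eta T + \nu \eta^2 T$. Writing $\log p^{(T)}(x) = \log w^{(T)}(x) - \log \int_{\mathcal{X}} w^{(T)}$ and splitting the normalizing integral over $S$ and its complement --- on $S$ the integrand is at most $1$, contributing $\vol(S)$, while on $S^c$ each point has $M_T(x) \ge \nu T$ so $w^{(T)}(x) \le e^{-\eta M_T(x)} \le e^{-\eta \nu T}$ --- reproduces the normalizer bound once $\nu = \gamma/(\eta + 2)$ is chosen to balance the two terms. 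Combining with $\log w^{(T)}(x) \ge \log(1 - \eta) M_T(x)$ and the elementary identity $-\log(1 - u) \le u + u^2$ for $u \in [0, 1/2]$ gives the claimed factored bound, with $\vol(\mathcal{X})$ playing the role of $|\mathcal{X}|$ throughout.

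The hard part is the first step: since the discrete regret bound was only cited, a self-contained continuous statement requires verifying that the potential argument is genuinely measure-agnostic and, more pointedly, that all the objects involved are well-defined --- the partition integrals $\int_{\mathcal{X}} w^{(t)}$ are finite and positive by compactness of $\mathcal{X}$ and $h^{(t)} \in [0,1]$, the set $S = \{M_T < \nu T\}$ is measurable whenever the $h^{(t)}$ are, and $\vol(S) > 0$ must hold for the comparator $p$ to be well-defined (if $\vol(S) = 0$ the bound is vacuous and handled separately). Once these regularity points are dispatched, the remainder is a verbatim transcription of the discrete argument.
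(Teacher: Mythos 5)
Your proof is correct and follows essentially the same route as the paper's: the paper likewise applies the multiplicative-weights regret bound of Arora--Hazan--Kale with sums replaced by integrals, takes the comparator uniform on $S=\{x\in\mathcal{X}: M_T(x)<\nu T\}$, decomposes the normalizer over $S$ and its complement, and balances with $\nu=\gamma/(\eta+2)$. The only difference is that you re-derive the continuous regret bound via the potential/telescoping argument (and check measurability, finiteness, and the $\mathrm{vol}(S)>0$ edge case) where the paper simply cites the discrete theorem --- a point of rigor in your favor; note also that your derivation, like the paper's own, lands on the constant $-\log(2\,\mathrm{vol}(\mathcal{X}))$, so the $-\log(\mathrm{vol}(\mathcal{X})/2)$ in the lemma statement appears to be a typo.
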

\begin{proof}
  Since the sampling distribution is derived from multiplicative weights over
  $\sum_{t=1}^T h^{(t)}(x)$, the following regret bound holds with respect to
  any $p$ (Theorem 2.4, \cite{AroraHaKa12}):
 \begin{align*}
   \gamma T &\leq \sum_{t=1}^T \int_{x\in \mathcal{X}} h^{(t)}(x) p^{(t-1)}(x) dx \\
            &\leq (1+\eta)\sum_{t=1}^T \int_{x\in\mathcal{X}}h^{(t)}(x) p(x) dx+
              \frac{\text{KL}(p||p^{(0)})}{\eta}
   \end{align*}
  Pick $S = \{x\in\mathcal{X}: \sum_{t}h^{(t)}(x) < \nu T\}$ and $p$ uniform
  over $S$ to get:
  \[\gamma T\leq (1+\eta)\nu T + \frac{\log(\vol(\mathcal{X})/\vol(S))}{\eta}\]
  From this we get the bound:
  \[
    \log(\vol(S)) \leq  \log(\vol(\mathcal{X})) - \gamma \eta T + \nu \eta T  +
    \nu \eta^2 T
  \]

  Now we can get the following basic bound on $\log(p^{(T)}(x^*))$ by
  decomposing the normalizer using $S$.
  \begin{align*}
    \log(p^{(T)}(x^*)) \geq \log(1-\eta) M_T(x^*) \\
    - \log(\vol(S) + \exp( \log(1-\eta)\nu  T)(\vol(\mathcal{X})-\vol(S)))
  \end{align*}
  The rest of the proof is identical to that of Theorem 1 and we obtain:
  \begin{align*}
    &\log(p^{(T)}(x^*)) \\
    &\geq \gamma \frac{\eta}{\eta+2}\left(T -
      \frac{(\eta+1)(\eta+2)}{\gamma}M_T(x^*)\right) -\log(2\vol(\mathcal{X})).
  \end{align*}
\end{proof}

We now show that given a well-behaved starting distribution, the distribution
induced by the multiplicative weights algorithm is close to a uniform
distribution over the sublevel set.
\begin{lem}\label{lem:convapprox}
  Define $h^{*(t)}(x) = 1_{f(x) \leq \alpha^{(t)}}$ and $q^{(t)}(x) =
  \frac{1-h^{*(t)}(x)}{\int_{x\in\mathcal{X}} 1-h^{*(t)}(x)dx}$.

  Define $p^{(t)}$ and $h^{(t)}$ such that
  \begin{gather*}
    p^{(t-1)}(x) \propto 1 \quad \forall x:h^{*(t)}(x)=1,\\
    \int_{x\in\mathcal{X}} h^{(t)}(x) p^{(t-1)}(x) dx = \gamma,\\
      p^{(t)}(x) \propto p^{(t-1)}(x) (1-\eta h^{(t)}(x)).
    \end{gather*}
    and further, $\int_{x\in\mathcal{X}} 1_{h^{*(t)}(x)=0}1_{h^{(t)}(x)=1}dx = 0$
    and $\int_{x\in\mathcal{X}} 1_{h^{*(t)}(x)=1}1_{h^{(t)}(x)=0}p^{(t-1)}(x)dx
    \leq \nu$ then
    \[
      \frac{1-\eta(\gamma-\nu)}{1-\gamma-\nu} p^{(t)}(x) \geq q^{(t)}(x)
    \]
    for all $x\in\mathcal{X}$.
  \end{lem}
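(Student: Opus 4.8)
The plan is to turn the two structural guarantees of a selective classifier---that it never cuts inside the region carrying $q^{(t)}$ and that it cuts almost all of the remaining mass---into a pointwise comparison of densities. Write $K \defeq \{x : h^{*(t)}(x) = 0\}$, so that $q^{(t)}$ is exactly the uniform distribution on $K$. The first thing I would record is the normalizer of the multiplicative-weights update: since $\int_{\mathcal{X}} p^{(t-1)}(x)\,dx = 1$ and $\int_{\mathcal{X}} h^{(t)}(x) p^{(t-1)}(x)\,dx = \gamma$,
\[
  Z \defeq \int_{\mathcal{X}} p^{(t-1)}(x)\bigl(1 - \eta\, h^{(t)}(x)\bigr)\,dx = 1 - \eta\gamma .
\]

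Next I would invoke the hypothesis $\int_{\mathcal{X}} 1_{h^{*(t)}(x)=0}\,1_{h^{(t)}(x)=1}\,dx = 0$, which says $h^{(t)} \equiv 0$ on $K$. Hence for every $x \in K$ the update leaves the unnormalized weight untouched, so $p^{(t)}(x) = p^{(t-1)}(x)/Z$; using that $p^{(t-1)}$ is uniform over $\mathcal{X}$, this is the constant $\tfrac{1}{(1-\eta\gamma)\vol(\mathcal{X})}$ on $K$. For $x \notin K$ the desired inequality is vacuous because $q^{(t)}(x) = 0$, so it suffices to treat $x \in K$.

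The remaining ingredient is an upper bound on $\vol(\mathcal{X}\setminus K)$, which controls $q^{(t)}(x) = 1/\vol(K)$. I would split the cut region $\mathcal{X}\setminus K = \{h^{*(t)}=1\}$ into the points actually downweighted, $\{h^{(t)}=1\}$, whose volume is $\gamma\,\vol(\mathcal{X})$ by the coverage identity and uniformity of $p^{(t-1)}$, and the abstention set $\{h^{(t)}=0\}\cap(\mathcal{X}\setminus K)$, whose volume is at most $\nu\,\vol(\mathcal{X})$ by the hypothesis $\int 1_{h^{*(t)}=1}\,1_{h^{(t)}=0}\,p^{(t-1)} \le \nu$. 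Adding these gives $\vol(\mathcal{X}\setminus K) \le (\gamma+\nu)\vol(\mathcal{X})$, hence $q^{(t)}(x) \le \tfrac{1}{(1-\gamma-\nu)\vol(\mathcal{X})}$ for $x \in K$.

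Combining the three displays reduces the claim to the elementary inequality $\tfrac{1-\eta(\gamma-\nu)}{1-\eta\gamma}\ge 1$, i.e.\ $\eta\nu \ge 0$, which always holds; indeed the tighter factor $\tfrac{1-\eta\gamma}{1-\gamma-\nu}$ already suffices and the stated constant is looser. I expect the main obstacle to be bookkeeping rather than any sharp estimate: one must be careful that the no-false-cut condition is precisely what pins $p^{(t)}$ to a constant on $K$, and that coverage and abstention are measured against the density $p^{(t-1)}$ rather than against volume, so that the uniformity of $p^{(t-1)}$ is exactly what converts the mass budgets $\gamma$ and $\nu$ into the volume bound on $\mathcal{X}\setminus K$.
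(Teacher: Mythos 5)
Your overall architecture---restrict attention to $K=\{x: h^{*(t)}(x)=0\}$, compute the normalizer of the multiplicative-weights update, split $\mathcal{X}\setminus K$ into the cut set and the abstention set with budgets $\gamma$ and $\nu$, and close by noting $1-\eta(\gamma-\nu)\geq 1-\eta\gamma$---is the same as the paper's. But there is one genuine gap: you assume $p^{(t-1)}$ is uniform over \emph{all} of $\mathcal{X}$. That is not a hypothesis of the lemma: the proportionality condition only makes $p^{(t-1)}$ constant on one of the two regions determined by $h^{*(t)}$, and nothing constrains it elsewhere. Moreover, global uniformity fails exactly where the lemma is needed: in the induction inside Theorem~\ref{thm:linear-opt-classifier}, $p^{(t-1)}$ is the multiplicative-weights distribution after $t-1$ rounds, constant on the current sublevel set but geometrically downweighted on points cut in earlier rounds, so it is not uniform for any $t\geq 2$. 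Consequently your conversions of the mass budgets into volume budgets---$\vol(\{h^{(t)}=1\})=\gamma\,\vol(\mathcal{X})$ from the coverage identity, and volume at most $\nu\,\vol(\mathcal{X})$ for the abstention set---are unsupported, as are the resulting claims that $p^{(t)}\equiv \tfrac{1}{(1-\eta\gamma)\vol(\mathcal{X})}$ on $K$ and that $q^{(t)}(x)\leq \tfrac{1}{(1-\gamma-\nu)\vol(\mathcal{X})}$. As written, you prove the lemma only in the special case of the first round. (Your closing remark shows you sensed that uniformity is doing the mass-to-volume conversion; the problem is that this uniformity is assumed rather than available.)

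The repair is to never pass to volumes of the cut region and instead compare densities on $K$ directly, which is the paper's route. The hypotheses give: (i) $p^{(t-1)}$ is constant on $K$---note the paper's statement carries a sign slip here, writing the proportionality condition for $\{h^{*(t)}=1\}$ while its own proof (and the application, where the selective classifier never falsely cuts the current sublevel set) uses constancy on $S^*=\{h^{*(t)}=0\}=K$; and (ii) by the no-false-cut condition all cutting happens outside $K$ up to a Lebesgue-null set, so $p^{(t-1)}(\mathcal{X}\setminus K)\leq \gamma+\nu$ by adding the cut mass $\gamma$ and the abstained mass at most $\nu$, whence $p^{(t-1)}(K)\geq 1-\gamma-\nu$ and the constant value of $p^{(t-1)}$ on $K$ is at least $(1-\gamma-\nu)/\vol(K)$. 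Your remaining steps then go through verbatim with $\vol(K)$ in place of $\vol(\mathcal{X})$: since $h^{(t)}\equiv 0$ on $K$, one has $p^{(t)}(x)=p^{(t-1)}(x)/Z \geq \tfrac{1-\gamma-\nu}{(1-\eta\gamma)\vol(K)}$ there, while $q^{(t)}(x)=1/\vol(K)$, and $1-\eta(\gamma-\nu)\geq 1-\eta\gamma$ recovers the stated (looser) constant. Incidentally, your exact normalizer $Z=1-\eta\gamma$ is correct and marginally tighter than the bound $Z\leq 1-\eta(\gamma-\nu)$ the paper uses.
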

  \begin{proof}
    We verify the inequality on $S^*=\{x:h^{*(t)}(x)=0\}$, since $q^{(t)}(x)$ is
    zero outside $S^*$.

    For any $x \in S^*$, $q^{(t)}(x) = 1/\vol(S^*)$ and $p^{(t-1)}(x) \geq
    (1-\gamma - \nu)/\vol(S^*)$. 

    This implies for $x\in S^*$,
    \[
      p^{(t)}(x) \geq \frac{(1-\gamma-\nu)/\vol(S^*)}{\int_{x\in\mathcal{X}}
        p^{(t-1)}(x)(1-\eta h^{(t)}(x))}.
    \]

    We now upper bound the normalizer:
    \begin{align*}
      \int_{x\in\mathcal{X}} p^{(t-1)}(x)(1-\eta h^{(t)}(x))
      &\leq 1- \eta\int_{x\in\mathcal{X}} p^{(t-1)}(x)h^{(t)}(x)\\
      &\leq 1-\eta ( \gamma - \nu).
    \end{align*}

    This gives the bound that for all $x \in S^*$,
    \[\frac{1-\eta(\gamma-\nu)}{1-\gamma-\nu} p^{(t)(x)} \geq q^{(t)}(x).\]
    \end{proof}

    The next theorem gives the disagreement coefficient for linear
    classification in a log-concave density.
    \begin{thm}[Theorem 14, \cite{BalcanLo13}]\label{thm:dislin}
      Let $\mathcal{D}$ be a log-concave density in $\mathbb{R}^d$ and
      $\mathcal{H}$ the set of linear classifier in $\mathbb{R}^d$, then the
      region of disagreement over error in Definition \ref{def:disagree} is:
      \begin{equation*}
        \frac{\P_{X \sim \mathcal{D}}(X\in\disagree(B(h,r)))}{r}
        = O(d^{1/2}\log(1/r)).
      \end{equation*}
    \end{thm}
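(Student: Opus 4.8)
The plan is to reduce the statement to a single geometric estimate: the mass that an isotropic log-concave measure places in a thin double cone around a hyperplane through the origin. First I would exploit the affine invariance of the disagreement coefficient. Any log-concave $\mathcal{D}$ can be put in isotropic position by an affine map $A$; since the class of linear classifiers is closed under the action of $A$ and the disagreement metric $\P(h(X)\neq h'(X))$ is preserved by the simultaneous transformation of $\mathcal{D}$ and $\mathcal{H}$, the ratio $\P_{X\sim\mathcal{D}}(X\in\disagree(B(h,r)))/r$ is unchanged. Hence I may assume $\mathcal{D}$ is isotropic log-concave and $h=h_w$ is a homogeneous halfspace with unit normal $w$. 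The second ingredient is the relation between the disagreement metric and the angle of the normals: projecting $\mathcal{D}$ onto the two-dimensional plane spanned by $w$ and $w'$ gives a two-dimensional isotropic log-concave measure, on which the symmetric difference $\{x:h_w(x)\neq h_{w'}(x)\}$ is an angular wedge of opening $\angle(w,w')$, so $\P(h_w(X)\neq h_{w'}(X))=\Theta(\angle(w,w'))$ with dimension-free constants. Thus there is a constant $c$ with $B(h_w,r)\subseteq\{h_{w'}:\angle(w,w')\le cr\}$.

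From this containment, a point $x$ can lie in $\disagree(B(h_w,r))$ only if some $w'$ with $\angle(w,w')\le cr$ disagrees with $w$ at $x$, which forces $x$ to lie within angular distance $O(r)$ of the boundary hyperplane $w^{\perp}$. Concretely, $\disagree(B(h_w,r))\subseteq\{x:|\langle w,x\rangle|\le cr\,\|x\|\}$, a double cone of half-angle $O(r)$ about $w^{\perp}$, so it suffices to bound $\P_{X\sim\mathcal{D}}(|\langle w,X\rangle|\le cr\,\|X\|)$. To control this mass I would split on the norm at a level $R$ to be chosen: writing $t=\langle w,X\rangle$, for any $R$ we have $\P(|t|\le cr\,\|X\|)\le \P(|t|\le crR)+\P(\|X\|>R)$. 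The first term is controlled because $t$ is a one-dimensional isotropic log-concave marginal whose density is bounded by an absolute constant, giving $\P(|t|\le crR)=O(crR)$. The second is controlled by the sub-exponential concentration of the norm of an isotropic log-concave vector, $\P(\|X\|>R)\le\exp(-\Omega(R/\sqrt{d}))$ once $R\gtrsim\sqrt{d}$. Choosing $R=C\sqrt{d}\,\log(1/r)$ makes the tail at most $r^{\Omega(C)}\le r$, while the first term becomes $O(r\sqrt{d}\,\log(1/r))$; dividing by $r$ yields the claimed $O(d^{1/2}\log(1/r))$.

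The hard part will be the structural facts about isotropic log-concave measures that drive each step: that one-dimensional marginals have uniformly bounded density, that two-dimensional marginals give the disagreement-equals-angle relation with constants independent of $d$, and especially the norm concentration $\P(\|X\|>R)\le\exp(-\Omega(R/\sqrt{d}))$. These are Lov\'asz--Vempala-type results, and it is precisely the tail bound that produces the $\log(1/r)$ factor rather than a clean linear estimate, since killing the tail forces $R$ to grow logarithmically in $1/r$ while the typical norm scale $\sqrt{d}$ supplies the $d^{1/2}$. For general, non-homogeneous halfspaces one would additionally account for offset perturbations, which contribute a slab around an off-origin hyperplane; the relevant marginal density there is again $O(1)$, so this case introduces no worse dependence and the same choice of $R$ closes the argument.
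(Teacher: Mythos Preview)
Your sketch is correct and in fact goes well beyond what the paper does. The paper's own ``proof'' is simply a citation: it invokes Theorem~14 of \cite{BalcanLo13} verbatim and adds only the observation that the isotropy hypothesis can be dropped by affine invariance of the disagreement coefficient (their Appendix~C, Theorem~6). Your first paragraph makes exactly this reduction, so on that point you and the paper agree.

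Where you differ is that you then reconstruct the Balcan--Long argument itself, whereas the paper treats it as a black box. The structure you give --- angle $\asymp$ disagreement for homogeneous halfspaces under an isotropic log-concave marginal, containment of the disagreement region in the cone $\{x:|\langle w,x\rangle|\le cr\|x\|\}$, the split at norm level $R$, bounded one-dimensional marginal density for the slab term, and sub-exponential norm concentration $\P(\|X\|>R)\le\exp(-\Omega(R/\sqrt{d}))$ for the tail --- is precisely the skeleton of the Balcan--Long proof, and your balancing $R=C\sqrt{d}\log(1/r)$ is the right choice that produces the $d^{1/2}\log(1/r)$ rate. The structural facts you flag as the ``hard part'' (Lov\'asz--Vempala bounded-density marginals, the angle--disagreement equivalence, and Paouris-type norm tails) are indeed the substantive inputs; they are stated and used in \cite{BalcanLo13} rather than reproved, so citing them as you do is appropriate. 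In short: nothing is wrong, and your proposal supplies the content the paper delegates to the reference.
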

    \begin{proof}
      The statement here is identical to \cite{BalcanLo13} with the
      exception of the identity covariance constraint on $\mathcal{D}$. We
      omit this due to the equivalence of isotropic and non-isotropic cases
      as noted in Appendix C, Theorem 6.
    \end{proof}

    Finally, we combine the above results to obtain convergence rates on the
    linear optimization with linear classifier case.
    \begin{thm}\label{thm:linear-opt-classifier}
      Let $f(x) = w^\top x$ for some $w$, and $\mathcal{X}$ a convex subset of
      $\mathbb{R}^D$. Define $p^{(0)}$ as uniform on $\mathcal{X}$.

      Define the classification label $Z_i = 1_{Y_i > \text{median}(Y)}$, on
      which we train a CSS classifier over the linear hypothesis class
      $\mathcal{H}$ which we define as
      \[
        h^{(t)}(x) = \begin{cases}
         1 &\text{ if }\forall g \in \verspace_{\mathcal{H},(X,Z)}, g(x)=1\\
      0 &\text{ if }\forall g \in \verspace_{\mathcal{H},(X,Z)}, g(x)=0\\
      0 & \text{ otherwise }.
    \end{cases}
  \]

  Define the sampling distributions for each step
  \[p^{(t)}(x) \propto p^{(t-1)}(x) (1-h^{(t)}(x)/2)\]
  
  Then for any small $\nu$, a batch size of 
  \[n^{(t)} = O\left(\frac{d^{3/2}}{\nu}\log(\nu^{-1}d^{1/2}) -
      \frac{d^{1/2}}{\nu}\log(3T\delta)\right),\] is sufficient to establish the
  following bound on sampling items below any $\alpha$ level set $S_\alpha =
  \{x\in\mathcal{X}: f(x) < \alpha\}$ on the last step:
  \begin{multline*}\log\left(\int_{x\in S_\alpha}p^{(T)}(x)\right) \geq \\
    \min\bigg(\left(\frac{1}{4}-\nu\right) \frac{1}{5}T -\log(2\vol(\mathcal{X}))  + \log(\vol(S_\alpha)),\\
    -\log(4)
    \bigg)
  \end{multline*}
  with probability $1-\delta$.

\end{thm}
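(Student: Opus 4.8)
The plan is to instantiate the continuous cutting-plane bound of Lemma \ref{lem:cont-cut} with $\eta = 1/2$ (so that $\eta/(\eta+2) = 1/5$, matching the $\tfrac{1}{5}T$ factor in the claim) and to track the whole sublevel set $S_\alpha$ rather than a single optimum. Two facts must be fed into that lemma: first, that every $x \in S_\alpha$ satisfies $M_T(x) = 0$, and second, that each round achieves coverage at least $\gamma_{\mathrm{eff}} = \tfrac14 - \nu$. For the first fact I would use that $f$ is linear, so every sublevel set $\{f \le \alpha^{(t)}\}$ is a halfspace and hence lies in the linear hypothesis class $\mathcal{H}$; the true indicator therefore belongs to every version space. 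Consequently, as long as the empirical median $\alpha^{(t)}$ stays above $\alpha$, a point with $f(x) < \alpha < \alpha^{(t)}$ is labelled $0$ by a zero-error hypothesis, so the CSS-to-zero rule never downweights it and $M_T(x)=0$. The first time some $\alpha^{(t)} \le \alpha$ occurs, more than half the batch lies below $\alpha$, which forces $\int_{S_\alpha} p^{(t-1)}$ to be bounded below by a constant --- this is exactly the terminal $-\log 4$ branch of the claimed minimum, handled as in the ``$p^{(T)}(x^*) \ge 1/(1-\gamma)$'' step of Theorem \ref{thm:csscut}.

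For the coverage bound I would reproduce the two-part argument from the proof of Theorem \ref{thm:csscut}, now in the continuous setting. The median concentrates: with $\gamma = 1/4$, Hoeffding gives that the empirical median exceeds the population $\tfrac14$-quantile except with probability $\exp(-2(\gamma-\tfrac12)^2 n) = \exp(-n/8)$, so the raw cut removes at least a $\gamma = 1/4$ fraction. The CSS then abstains on at most $\discoeff_h \epsilon$ of the mass (Theorem \ref{thm:csscover}), so coverage is at least $\gamma - \discoeff_h \epsilon$; choosing $\epsilon$ so that $\discoeff_h \epsilon \le \nu$ yields $\gamma_{\mathrm{eff}} \ge \tfrac14 - \nu$ as in Equation \ref{eq:holdout}.

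The crucial point, and the main obstacle, is that the disagreement coefficient bound we have for linear separators (Theorem \ref{thm:dislin}) holds only for log-concave densities, whereas the sampling law $p^{(t-1)}$ produced by multiplicative weights is not log-concave. Here I would invoke Lemma \ref{lem:convapprox}: its hypotheses hold because $p^{(t-1)}$ is constant on $\{f \le \alpha^{(t)}\}$, and its conclusion keeps $p^{(t-1)}$ within a constant factor $\tfrac{1-\gamma-\nu}{1-\eta(\gamma-\nu)}$ of the uniform distribution $q^{(t-1)}$ on the current sublevel set, which is log-concave. Applying Theorem \ref{thm:dislin} to $q^{(t-1)}$ at scale $r = \epsilon$ gives $\discoeff_h = O(d^{1/2}\log(1/\epsilon))$, and the constant-factor comparison transfers the abstention estimate to $p^{(t-1)}$. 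Getting this transfer to point in the right direction, and controlling the constants so the $\nu$ budget is respected, is the delicate part of the argument.

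Finally I would pin down the batch size and assemble the pieces. Linear classifiers in $\mathbb{R}^d$ have VC dimension $O(d)$, so standard uniform convergence (as in Theorem \ref{thm:csscut}) achieves $\epsilon$ error with $n = O(\epsilon^{-1}[d\log(\epsilon^{-1}) + \log(T/\delta)])$ samples. Substituting $\epsilon = \Theta(\nu/(d^{1/2}\log(d^{1/2}/\nu)))$, which is what makes $\discoeff_h \epsilon \le \nu$, gives exactly the stated $n^{(t)} = O(d^{3/2}\nu^{-1}\log(\nu^{-1}d^{1/2}) - d^{1/2}\nu^{-1}\log(3T\delta))$. A union bound over the $T$ rounds, allocating failure probability $\delta/(2T)$ to each of the median-concentration and uniform-convergence events, gives overall probability $1-\delta$. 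On that event $M_T(x) = 0$ uniformly over $S_\alpha$, so Lemma \ref{lem:cont-cut} yields the pointwise density bound $\log p^{(T)}(x) \ge (\tfrac14 - \nu)\tfrac{1}{5}T - \log(2\vol(\mathcal{X}))$; integrating over $S_\alpha$ contributes the additive $\log \vol(S_\alpha)$ and produces the first branch of the minimum, completing the proof.
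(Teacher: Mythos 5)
Your proposal follows the paper's proof skeleton almost exactly: Lemma~\ref{lem:cont-cut} with $\eta = 1/2$ (hence the $\tfrac{1}{5}$ factor), $M_T(x)=0$ on $S_\alpha$ from realizability of halfspace sublevel sets plus the no-false-positive property of CSS, Hoeffding for the empirical median giving $\gamma^{(t)}\in[1/4,3/4]$, Theorem~\ref{thm:csscover} together with Theorem~\ref{thm:dislin} and $\epsilon = \Theta\bigl(\nu/(d^{1/2}\log(d^{1/2}/\nu))\bigr)$ to get the stated $n^{(t)}$, a union bound over $T$ rounds, and the same two-branch endgame. (On the endgame, the paper does the case split once, at round $T$: either $\alpha^{(T)}\ge\alpha$, and the pointwise bound integrates over $S_\alpha$, or $\alpha^{(T)}<\alpha$, in which case at least $1-\gamma^{(T)}\ge 1/4$ of $p^{(T)}$ already lies below $\alpha$, giving $-\log 4$; this final-round split is slightly cleaner than your first-hitting-time version, which leaves open what later rounds do to the mass of $S_\alpha$.) The one genuinely different step is how you circumvent the non-log-concavity of $p^{(t-1)}$. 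The paper conditions: by Lemma~\ref{lem:convapprox}, a draw from $p^{(t)}$ landing below $\alpha^{(t-1)}$ is \emph{exactly} uniform on the convex sublevel set, an event of probability at least $\tau = \frac{1-\gamma^{(t)}-\nu}{1-(\gamma^{(t)}-\nu)/2}$; it then inflates the batch via binomial concentration ($n' \geq \tau^{-1} n - \log(\delta_2)/(2\tau^2)$) so that $n$ conditionally uniform samples are available, trains CSS on those, and notes that adding the remaining points only shrinks the version space and hence the disagreement region. You instead keep all samples from $p^{(t-1)}$ and do a density-ratio transfer; the directionality worry you flag is resolvable, but it requires \emph{both} directions used for different purposes: the lemma's one-sided inequality $q^{(t)}(x) \le \frac{1-\eta(\gamma-\nu)}{1-\gamma-\nu}\, p^{(t)}(x)$ transfers the VC error bound (measured under the sampling law $p$) into error under the log-concave $q$, while the reverse transfer, needed to carry the abstention estimate from $q$ back to $p$, holds on the sublevel set for the trivial reason that $p^{(t)}$ is uniform there with total mass at most one, so its density is pointwise dominated by $q$'s there. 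Neither your argument nor the paper's controls abstention mass lying \emph{above} the previous threshold, so you match the paper's level of rigor on that point. What each route buys: yours dispenses with the paper's $n'$-versus-$n$ bookkeeping entirely, paying a constant factor in $\epsilon$ rather than in batch size, and both yield the same order for $n^{(t)}$.
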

\begin{proof}
  We will begin by using induction to show that at each round $t$ the abstention
  rate is at most $\nu$. In the base case of $t=0$, $p^{(0)}(x)$ is log-concave
  and we can apply Theorem \ref{thm:dislin} and \ref{thm:csscover}, which implies
  $\nu = C_1 d^{1/2} \log(1/\epsilon) \epsilon$. VC dimension bounds imply there
  exists some $n = C_2 \epsilon^{-1} (d\log(1/\epsilon) + \log(1/\delta_1))$ such
  that any consistent hypothesis incurs at most $\epsilon$ population error.
  
  Inverting and solving for $n$ shows that for some $C$, 
  \[n^0 \geq C\nu^{-1} d^{1/2} [ d \log(\nu^{-1} d^{1/2}) + \log(\delta_1^{-1})]\]
  is sufficient to guarantee $\nu$-abstention.

  For each round $t>0$ assume we maintained $\nu$-abstention in all prior
  rounds. We then fulfil the condition of Lemma \ref{lem:convapprox}. To verify
  each condition: $p^{(t-1)}(x)$ is constant on sublevel sets below $\alpha^{(t)}$
  since the selective classifier to never makes false
  positives, by construction. $\eta=1/2$, $\gamma^{(t)}$ is the population
  quantile of $p^{(t-1)}$ corresponding to $\text{median}(Y^{(t-1)})$ and $\nu$
  is the abstention rate at round $t-1$.

  Lemma \ref{lem:convapprox} states that if a sample drawn $x \sim p^{(t)}$ has
  $f(x) < \text{median}(Y^{(t-1)})=\alpha^{(t-1)}$, then this sample follows the
  uniform distribution over the sublevel set $S_{\alpha^{(t-1)}}$ and that the
  probability of this event occuring is at least
  $\frac{1-\gamma^{(t)}-\nu}{1-(\gamma^{(t)}-\nu)/2}$.

  Thus, if we sample $n'$ samples from $p^{(t)}$, then with probability at least
  $1-\delta_2$, $n$ of these draws will be from a uniform distribution in the
  $\alpha^{(t)}$ sublevel set, where $\delta_2$ follows:
  \begin{multline*}
    \delta_2 = \P\left(\sum_{i=1}^{n'} 1_{X^{(t)}_i \in S_{\alpha^{(t-1)}}} \leq n\right) \\
    \leq \exp\left(- 2\left(\frac{1-\gamma^{(t)}-\nu}{1-(\gamma^{(t)}-\nu)/2}-n/n'\right)^2 n'\right)
  \end{multline*}
  Solving for $n'$, with the shorthand $\tau=\frac{1-\gamma^{(t)}-\nu}{1-(\gamma^{(t)}-\nu)/2}$ and $\delta_2 < 1$,
  \begin{align*}
    n' &\geq \frac{\sqrt{\log(\delta_2)(\log(\delta_2)- 8 \tau n)} + 4 \tau n - \log(\delta_2)}{4 \tau^2} \\
    &\geq \frac{1}{\tau}n - \frac{\log(\delta_2)}{2\tau^2}
    \end{align*}

  Since the $\alpha^{(t)}$ sublevel set is convex, we can apply Theorem
  \ref{thm:dislin} to these $n$ samples contained in the sublevel set to show
  that for all $t>0$,
  \[n   \geq C\nu^{-1} d^{1/2} [ d \log(\nu^{-1} d^{1/2}) + \log(\delta_1^{-1})]\]
  samples is sufficient to ensure $\nu$-abstention when we train a selective
  classifier on $n$ points. Learning a selective classifier over all $n'$ points
  can only decrease the abstention region, and therefore $n^t$ is sufficient to
  guarantee $\nu$ abstention.

  Combining with the bound on $n'$, with probability $1-\delta_2-\delta_1$, 
  \[
    n^{(t)} \geq \frac{1}{\tau} C\nu^{-1} d^{1/2} [ d \log(\nu^{-1} d^{1/2}) +
    \log(\delta_1^{-1})] - \frac{\log(\delta_2)}{4\tau^2}
  \]

  Next, we bound $\gamma$ from above and below using the same argument as
  Theorem \ref{thm:csscut}. At any round $t$ defining $\gamma^{(t)} =
  \int_{x\in\mathcal{X}} p^{(t-1)}(x) 1_{x<\text{median}(Y^{(t-1)})}$ by
  Hoeffding's inequality the probability $\gamma^{(t)}$ is within $[1/4,3/4]$ is:
  \[\P(1/4\leq \gamma^{(t)} \leq 3/4) \geq 1-2\exp(-n^{(t)}/8).\]
  Thus, we can ensure $\gamma^{(t)} \in [1/4,3/4]$ with probability at least
  $1-\delta_3$ in round $t$ if we have
  \[n^{(t)} \geq -8\log(\delta_3/2). \]

  Simplifying the bounds, we can ensure $\nu$ abstention across all $T$ rounds
  with probability at least $1-\delta$ given $\tau=\frac{1/4-\nu}{7/8+\nu/2}$.
  
\begin{multline*}
  n^{(t)} \geq C \frac{7/8+\nu/2}{1/4\nu-\nu^2} d^{1/2} [ d \log(\nu^{-1} d^{1/2}) - \log(3T\delta)]\\
  - \frac{\log(3T\delta)(7/8+\nu/2)^2}{4(1/4-\nu)^2}\\
  +8\log(3T\delta^{-1})
  \end{multline*}

  Collecting first order terms for $\nu$ small we have
  \[n^{(t)} = O\left(\frac{d^{3/2}}{\nu}\log(\nu^{-1}d^{1/2}) - \frac{d^{1/2}}{\nu}\log(3T\delta)\right)\]

  Finally, we can apply Lemma \ref{lem:cont-cut} to get a lower bound for all
  points $x: f(x) < \alpha^{(T)}$.
  \[\log(p^{(T)}(x)) \geq \left(\frac{1}{4}-\nu\right) \frac{1}{5}T -\log(2\vol(\mathcal{X})) .\]

  The complete bound with respect to the $\alpha$ sublevel set follows after
  checking two cases. If $\alpha^{(T)} \geq \alpha$, then the entire set
  $\{x:f(x) < \alpha\}$ follows the lower bound, and we integrate to obtain the
  first part of the bound. If $\alpha^{(T)} < \alpha$ then by construction of
  $\alpha^{(T)}$, at least $1-\gamma$ of the distribution must lie below
  $\alpha^{(T)}$ and thus we draw an element with function value less than
  $\alpha^{(T)} < \alpha$ with probability at least $\log(1-\gamma)$.
\end{proof}

\subsection{Approximating selective sampling with Gaussians}
In this section, we show that sampling from a particular Gaussian approximates
selective classification for linear classification with strongly convex losses.

First, we show that the maximum over sampled Gaussian parameter vectors is close
to the infimum over an hyperellipse.
\begin{lem}\label{lem:quadmin}
  Consider $\tq_1 \hdots \tq_B \sim N(\hat{\theta}, \Sigma)$ then for any $x$
  and $Q_\tau=\{\theta: (\theta-\hat{\theta})^\top \Sigma^{-1}
  (\theta-\hat{\theta}) \leq 2\tau\}$,
  \begin{multline*}
    P\left(\min_i x^\top\tq_i  - \inf_{\theta\in Q_\tau} x^\top\theta > \sqrt{x^\top \Sigma x}\epsilon\right) \\
    \leq (1-\Phi(\epsilon-\sqrt{2\tau}))^B.
  \end{multline*}
  Where $\Phi$ is the cumulative distribution function of the standard Gaussian.
\end{lem}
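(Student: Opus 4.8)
The plan is to reduce both quantities inside the probability to one-dimensional Gaussian events, after which the claim follows from independence of the $\tq_i$. First I would compute the infimum $\inf_{\theta \in Q_\tau} x^\top\theta$ in closed form. Writing $\theta = \hat{\theta} + \Sigma^{1/2} u$, the defining constraint $(\theta-\hat{\theta})^\top \Sigma^{-1}(\theta-\hat{\theta}) \le 2\tau$ becomes $\norm{u} \le \sqrt{2\tau}$, so minimizing the linear objective $x^\top\theta = x^\top\hat{\theta} + (\Sigma^{1/2}x)^\top u$ over this ball is just a support-function computation. By Cauchy--Schwarz the minimum is attained at $u = -\sqrt{2\tau}\,\Sigma^{1/2}x/\norm{\Sigma^{1/2}x}$, giving
\[
  \inf_{\theta \in Q_\tau} x^\top\theta = x^\top\hat{\theta} - \sqrt{2\tau}\,\sqrt{x^\top\Sigma x}.
\]

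Next I would record the scalar law of each projection. Since $\tq_i \sim N(\hat{\theta}, \Sigma)$, the projections $x^\top\tq_i$ are i.i.d.\ $N(\mu, s^2)$ with $\mu = x^\top\hat{\theta}$ and $s^2 = x^\top\Sigma x$. Substituting the infimum above, the event in the lemma rewrites as
\[
  \min_i x^\top\tq_i > \mu + s(\epsilon - \sqrt{2\tau}),
\]
that is, \emph{every} projection must exceed the common threshold $\mu + s(\epsilon - \sqrt{2\tau})$. Using $\{\min_i a_i > c\} = \bigcap_i \{a_i > c\}$ together with independence factors the probability into a product of $B$ identical terms, and standardizing $(x^\top\tq_i - \mu)/s \sim N(0,1)$ gives $P(x^\top\tq_i > \mu + s(\epsilon - \sqrt{2\tau})) = 1 - \Phi(\epsilon - \sqrt{2\tau})$. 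Hence the probability equals $(1 - \Phi(\epsilon - \sqrt{2\tau}))^B$.

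There is no genuine obstacle here — the stated inequality in fact holds with equality — so the only care needed is bookkeeping. I would check that the substitution $u = \Sigma^{-1/2}(\theta - \hat{\theta})$ is legitimate, which requires $\Sigma \succ 0$ (guaranteed since $\Sigma$ arises as the rescaled inverse Hessian in the bootstrap construction), and I would track the sign carefully so that the infimum \emph{subtracts} rather than adds $\sqrt{2\tau}\,\sqrt{x^\top\Sigma x}$. The factorization step invokes only independence of the $\tq_i$ and the elementary rewriting of the minimum exceeding a threshold as an intersection, so once the support-function value is in hand the remainder is immediate.
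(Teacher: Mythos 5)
Your proof is correct and follows essentially the same route as the paper's: both whiten via $\Sigma^{1/2}$ (your substitution $\theta = \hat{\theta} + \Sigma^{1/2}u$ is the same change of variables), evaluate the infimum over the ellipsoid as the support function of a Euclidean ball, and then factor the probability over the $B$ independent samples to obtain $(1-\Phi(\epsilon - \sqrt{2\tau}))^B$. Your added observations --- that the bound in fact holds with equality, and that the step requires $\Sigma \succ 0$ --- are accurate and consistent with the paper's chain of equalities.
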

\begin{proof}
  We can whiten the space with $\Sigma^{-1/2}$ to get the following equivalent
  statement:

  \newcommand{\oltq}{\overline{\tq}}
  \newcommand{\olx}{\overline{x}}
  Define $\oltq_i = \Sigma^{-1/2}\tq_i$, then $\oltq_1 \hdots \oltq_B \sim N(0,
  I)$ and let $\olx = \Sigma^{1/2}x$,
  \begin{align*}
    &\P\left(\min_i x^\top\tq_i  - \inf_{\theta\in Q_\tau} x^\top\theta > \sqrt{x^\top \Sigma x}\epsilon\right)\\
    &\quad = P\left(\min_i \olx^{\top}\oltq_i  - \inf_{\theta:\norm{\theta}_2^2 < 2\tau} \olx^{\top}\theta > \norm{\olx}_2 \epsilon\right) \\
    &\quad = P\left(\min_i \frac{\olx^\top}{\norm{\olx}_2}\oltq_i  >  \epsilon - \sqrt{2\tau}\right)\\
    &\quad =  P\left(\frac{\olx^\top}{\norm{\olx}_2}\oltq_0  >  \epsilon - \sqrt{2\tau}\right)^B\\
    &\quad =  (1-\Phi(\epsilon-\sqrt{2\tau}))^B\\
  \end{align*}

\end{proof}

For $\epsilon=0$ and $\tau=1$, if $B \geq \log(\delta)/\log(0.95)$ then the
minimum of the $B$ samples is smaller than the infimum over the $\Sigma^{-1}$
ellipse with probability at least $1-\delta$.

Next, we show that the samples from $N(\hat{\theta},\Sigma)$ in $d$-dimensions
are contained in a ball a constant factor larger than $d\Sigma^{-1}$ with high
probability.
\begin{lem}\label{lem:ballmax}
  Let $\tq_1 \hdots \tq_B \sim N(\hat{\theta}, \Sigma)$, then:
  \begin{multline*}
    P\left( \max_i (\tq_i - \hat{\theta})^\top \Sigma^{-1}(\tq_i - \hat{\theta})^\top \geq c d \right) \\
    \leq  B\left(c \exp\left(1-c\right)\right)^{d/2}
  \end{multline*}
  for $c>1$.
\end{lem}

\begin{proof}
  Since $(\tq_i - \hat{\theta})^\top \Sigma^{-1}(\tq_i - \hat{\theta})^\top$ is
  whitened, we can define the probability in terms of chi-squared variables
  $\xi_1 \hdots \xi_B \sim \chi^2(d)$.
  \begin{align*}
    &P\left( \max_i (\tq_i - \hat{\theta})^\top \Sigma^{-1}(\tq_i - \hat{\theta})^\top \geq c d \right)\\
    &\quad = P\left( \max_i \xi_i \geq cd \right)\\
    &\quad = 1-P\left(  \xi_0 < c d\right)^B\\
    &\quad \leq 1-\left(1- \left(c \exp\left(1-c\right)\right)^{d/2}\right)^B\\
    &\quad \leq B\left(c \exp\left(1-c\right)\right)^{d/2}
  \end{align*}
  Which implies
  \begin{multline*}
    \max_i(\tq_i- \hat{\theta})^\top \Sigma^{-1} (\tq_i -\hat{\theta}) \\
    < 4\log(2B/d)+2d-4\log(\delta/2)
    \end{multline*}
    with probability
    $1-\delta$.

  \end{proof}

  Thus for $c = 1+O\left(\frac{2}{d} \left(\log(B) -
      \log(\delta)\right)\right)$, we can ensure that all $B$ samples are
  contained in the $cd\Sigma^{-1}$ ball with probability at least $1-\delta$.

  Combining the two results, we can show that for a quadratic version space, we
  can perform selective classification by sampling from a Gaussian.
  \begin{thm}
    Define the selective classifier with respect to a set $Q$ (a subset of the
    parameter space $\Theta$) as:
        \[h_Q(x)=
    \begin{cases}
      1 &\text{ if }\forall \theta \in Q, \theta^\top x > 0 \\
      0 &\text{ if }\forall \theta \in Q, \theta^\top x \leq 0\\
      \emptyset & \text{ otherwise }
    \end{cases}.\]

  Let $Q_{\tau} = \{\theta:
  (\theta-\hat{\theta})^\top\Sigma^{-1}(\theta-\hat{\theta}) < 2\tau\}$ and $\tq_1
  \hdots \tq_B \sim N(\hat{\theta}, \Sigma)$.

  Then the sampled selective classifier:

          \[\hq(x)=
    \begin{cases}
      1 &\text{ if }\forall \tq_i, x^\top \tq_i > 0 \\
      0 &\text{ if }\forall \tq_i, x^\top \tq_i  \leq 0\\
      \emptyset & \text{ otherwise }
    \end{cases}.\]

  Has the following properties.

  For any $x\in\mathcal{X}$ 
  \[P(h_{Q_{\tau}}(x) = 0 \text{ and } \hq(x) = 1 ) \leq (1-\Phi(-\sqrt{2\tau}))^B\]
  and for any $c>1$,
  \begin{multline*}
    P\left(\sum_x 1_{h_{Q_{cd}}(x)} = \emptyset < \sum_x 1_{\hq(x) = \emptyset}\right) \\
    \leq B\left(c \exp\left(1-c\right)\right)^{d/2}
    \end{multline*}
\end{thm}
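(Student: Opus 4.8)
The plan is to prove the two displayed inequalities separately, each by a containment-of-events argument reducing the claim to one of the two preceding lemmas; no new computation should be needed beyond what Lemmas~\ref{lem:quadmin} and~\ref{lem:ballmax} already provide.

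For the false-positive bound I would first rewrite both selective-classification events as scalar inequalities. By definition $\hq(x)=1$ is exactly $\min_i x^\top\tq_i > 0$, while $h_{Q_\tau}(x)=0$ says every $\theta\in Q_\tau$ obeys $x^\top\theta\le 0$, hence in particular $\inf_{\theta\in Q_\tau}x^\top\theta\le 0$. Chaining the two gives $\min_i x^\top\tq_i > 0 \ge \inf_{\theta\in Q_\tau}x^\top\theta$, so the joint event $\{h_{Q_\tau}(x)=0,\ \hq(x)=1\}$ is contained in $\{\min_i x^\top\tq_i - \inf_{\theta\in Q_\tau}x^\top\theta > 0\}$. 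The latter is precisely the event controlled by Lemma~\ref{lem:quadmin} evaluated at $\epsilon=0$, whose probability is at most $(1-\Phi(-\sqrt{2\tau}))^B$; monotonicity of probability then yields the first claim.

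For the abstention bound the governing idea is that $\hq$ can abstain strictly more often than $h_{Q_{cd}}$ only when some sample leaves the ellipse $Q_{cd}$. I would invoke Lemma~\ref{lem:ballmax}: with probability at least $1-B(c\exp(1-c))^{d/2}$ every sample satisfies $(\tq_i-\hat\theta)^\top\Sigma^{-1}(\tq_i-\hat\theta) < cd < 2cd$, so all $\tq_i$ lie in $Q_{cd}$. On this event I would establish the pointwise inclusion $\{\hq(x)=\emptyset\}\subseteq\{h_{Q_{cd}}(x)=\emptyset\}$: if $\hq$ abstains at $x$ then two of the samples disagree on the sign of $x^\top\theta$, and because both samples lie in $Q_{cd}$ they witness that $x$ falls in the disagreement region of $Q_{cd}$, forcing $h_{Q_{cd}}(x)=\emptyset$. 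Summing the inclusion over $x$ gives $\sum_x 1_{\hq(x)=\emptyset} \le \sum_x 1_{h_{Q_{cd}}(x)=\emptyset}$ whenever all samples lie in $Q_{cd}$, so the discrepancy event of the theorem is contained in the escape event of Lemma~\ref{lem:ballmax}, delivering the second bound.

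The step I expect to be the main obstacle is this last witness argument: one has to confirm that disagreement within the \emph{finite} sample set certifies disagreement of the \emph{continuous} set $Q_{cd}$. This relies on two facts that I would state explicitly---that the samples are contained in $Q_{cd}$ on the good event, and that $\hq$ and $h_{Q_{cd}}$ test $x$ through the identical half-space criterion ($x^\top\theta\le 0$ versus $x^\top\theta > 0$). Once this monotonicity of abstention under containment is verified, both parts follow immediately from the geometric lemmas.
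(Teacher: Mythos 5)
Your proposal is correct and follows essentially the same route as the paper: the first bound by containing the event $\{h_{Q_\tau}(x)=0,\ \hq(x)=1\}$ in the event of Lemma~\ref{lem:quadmin} with $\epsilon=0$, and the second by combining Lemma~\ref{lem:ballmax} with the witness argument that any disagreeing pair of samples lies in $Q_{cd}$ and hence forces $h_{Q_{cd}}(x)=\emptyset$. Your explicit verification that the quadratic-form bound $cd$ implies membership in $Q_{cd}$ (whose defining radius is $2cd$) is a detail the paper glosses over---it even states the containment with an inverted inequality---so your write-up is, if anything, slightly more careful.
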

\begin{proof}
  The first statement follows directly from Lemma \ref{lem:quadmin}, since
  having $h_{Q_{\tau}}(x) = 0$ and $\hq(x) = 1$ is a subset of the event that is
  upper bounded in Lemma \ref{lem:quadmin}.

  The second statement follows from Lemma \ref{lem:ballmax}. Under the
  conditions of the theorem,
  \begin{multline*}
    P\bigg( \max_i (\tq_i - \hat{\theta})^\top \Sigma^{-1}(\tq_i - \hat{\theta})^\top) \geq c d \bigg) \\
    \leq  B\left(c \exp\left(1-c\right)\right)^{d/2}.
    \end{multline*}
  If $\max_i (\tq_i - \hat{\theta})^\top \Sigma^{-1}(\tq_i - \hat{\theta})^\top
  \geq c d$, then for all $i$, $\tq_i \in Q_{cd}.$

  By construction, if $\hq(x) = \emptyset$ then, $h_{Q_{cd}(x)} = \emptyset$
  since at least one pair of $\tq_i$ must disagree, and this pair must also be
  in $Q_{cd}(x)$.
\end{proof}

Now we show that a quadratic version space contains the Bayes optimal
parameter, and is contained in a version space with low error. We first state
a result on the strong convexity of the empirical loss, which follows
immediately from Theorem 5.1.1 of \cite{tropp2015introduction}.

\begin{lem}\label{lem:strong-convex}
    Let $\{P_\theta\}_{\theta \in \Theta}$ be a parametric family over a
    compact parameter space $\Theta$.
    Define $Z \mid X \sim P_{\theta^*}$ for
    some $\theta^* \in \operatorname{int} \Theta$. 

    Let $\ell_\theta(x, z) = -\log(P(z\mid x))$ be the negative log likelihood
    of $z$, and
    \[
      L_n(\theta) = \tfrac{1}{n}\sum_{i=1}^n \ell_\theta(X_i, Z_i).
    \]
    If \begin{align*}
      & \lambda_{\text{min}}\left( \E\left[\nabla^2 \ell_{\theta}(X, Z)\right]\right) \ge
         \gamma,~\text{and} \\
      & \lambda_{\text{max}}\left( \nabla^2 \ell_{\theta}(X, Z)\right) \le
         L~\text{w.p. 1},
    \end{align*}
    then \[
      \lambda_{\text{min}}\left(\nabla^2 L_n(\theta)\right) \ge \left(1 - \sqrt{\frac{2 L
            \log(\sfrac{d}{\delta})}{n \gamma}}\right) \gamma
    \]
    with probability $1 - \delta$.
  \end{lem}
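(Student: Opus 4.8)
The plan is to recognize this statement as a direct instance of the matrix Chernoff inequality (Theorem 5.1.1 of \cite{tropp2015introduction}) applied to the i.i.d.\ sum of per-sample Hessians, followed by a routine scalar simplification of the resulting tail bound. First I would write $\nabla^2 L_n(\theta) = \sum_{i=1}^n S_i$, where $S_i \defeq \tfrac{1}{n}\nabla^2 \ell_\theta(X_i, Z_i)$. Since the $(X_i, Z_i)$ are i.i.d., the $S_i$ are independent Hermitian matrices; moreover $\sum_{i=1}^n \E[S_i] = \E[\nabla^2 \ell_\theta(X, Z)]$, so by the first hypothesis $\mu_{\min} \defeq \lambda_{\min}\big(\sum_i \E[S_i]\big) \ge \gamma$, and the second hypothesis gives $\lambda_{\max}(S_i) \le L/n$ almost surely.

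To invoke the lower-tail matrix Chernoff bound I must also verify that each summand is positive semidefinite, i.e.\ $S_i \succeq 0$. This holds whenever $\ell_\theta(x,z)$ is convex in $\theta$, as it is for the log-likelihood losses (e.g.\ logistic) to which this lemma is applied, so that $\nabla^2 \ell_\theta(x,z) \succeq 0$ pointwise. I expect this PSD requirement to be the one nontrivial point to check: it is exactly where the convex negative-log-likelihood structure of the problem enters, and the lower-tail inequality is unavailable without it. (Note that the positivity of the summands also makes the conclusion vacuously true whenever the chosen deviation parameter reaches $1$, since $\lambda_{\min}(\nabla^2 L_n(\theta)) \ge 0$ always.)

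With the hypotheses in place, Theorem 5.1.1 of \cite{tropp2015introduction} yields, for every $\epsilon \in [0,1)$,
\[
  \P\big\{\lambda_{\min}(\nabla^2 L_n(\theta)) \le (1-\epsilon)\mu_{\min}\big\}
  \le d\left(\frac{e^{-\epsilon}}{(1-\epsilon)^{1-\epsilon}}\right)^{\mu_{\min} n / L},
\]
the exponent being $\mu_{\min}$ divided by the per-summand spectral bound $L/n$. I would then apply the standard estimate $\tfrac{e^{-\epsilon}}{(1-\epsilon)^{1-\epsilon}} \le e^{-\epsilon^2/2}$ on $[0,1)$, giving a right-hand side of at most $d\exp\!\big(-\epsilon^2 \mu_{\min} n/(2L)\big) \le d\exp\!\big(-\epsilon^2 \gamma n/(2L)\big)$, where the last step uses $\mu_{\min} \ge \gamma$. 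Choosing $\epsilon = \sqrt{2L\log(d/\delta)/(\gamma n)}$ forces this probability below $\delta$, and on the complementary event $\lambda_{\min}(\nabla^2 L_n(\theta)) \ge (1-\epsilon)\mu_{\min} \ge (1-\epsilon)\gamma$, which is precisely the claimed bound. The only genuine work beyond bookkeeping is the PSD check above; the remainder is the verification of the three Chernoff hypotheses and the one-line convexity estimate that converts the Chernoff exponent into the stated $\sqrt{2L\log(d/\delta)/(n\gamma)}$ form.
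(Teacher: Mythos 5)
Your proposal is correct and follows essentially the same route as the paper, which simply asserts the lemma ``follows immediately from Theorem 5.1.1 of \cite{tropp2015introduction}'' --- i.e.\ the lower-tail matrix Chernoff bound applied to the i.i.d.\ sum of scaled per-sample Hessians, exactly as you do. Your explicit verification of the positive-semidefiniteness of the summands (convexity of the negative log-likelihood in $\theta$) and the $e^{-\epsilon^2/2}$ simplification of the Chernoff exponent are precisely the details the paper leaves implicit, and your parameter bookkeeping reproduces the stated bound.
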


  We now define the three possible version spaces that we can consider - the version space of the low error hypotheses $VS$, version space of low-loss models $\ell$, and the quadratic approximation $Q$.
  \begin{defn}\label{defn:verspaces}
    Define the maximum likelihood estimator $\mle = \arg\min_{\theta\in\Theta} L_n(\theta)$
    
      The quadratic version space with radius $\tau$ around $L_n$ is
    \begin{multline*}
    Q_\tau = \bigg\{
        \theta: \nabla L_n(\mle)^\top (\theta - \mle) +
      \tfrac{1}{2}(\theta - \mle)^\top\\ \nabla^2 L_n(\mle) (\theta -
      \mle) \le \tau
    \bigg\}.
  \end{multline*}
  The version space of loss-loss parameters $\theta$ with radius $\tau$ is
    \begin{multline*}
    \mathcal{L}_\tau = \bigg\{
        \theta: L_n(\theta) - L_n(\mle) \le \tau
    \bigg\}.
  \end{multline*}
  The version space of a set of hypotheses $h$ with error less than $\tau$ is 
    \begin{multline*}
    VS_\tau = \bigg\{
        \theta: P\left( h_\theta(X) \not = Z \right) - P\left( h_{\mle}(X) \not = Z \right) \le \tau
    \bigg\}.
    \end{multline*}
    \end{defn}

\begin{lem}\label{lem:quadcontain}
  In the same settings as lemma \ref{lem:strong-convex}, assume $\ell_\theta(x,z)$
  majorizes the zero one loss for some hypothesis class 
  $\mathcal{H} = \{h_{\theta} : \theta \in \Theta\}$ as $\ell_\theta(x, z) \ge \ind{h_\theta(x) = z}$,
  $\|\nabla \ell_{\theta}(X, Z)\| \le R$ almost surely, and $\nabla^2 L_n(\theta)$ is $M$-Lipschitz.
  
  Then, for the version spaces given in definition \ref{defn:verspaces},
  \begin{equation}
    VS_{\tau + 2\zeta} \supseteq
    \mathcal{L}_{\tau + 2\zeta} \supseteq
    Q_{\tau + \zeta} \supseteq \mathcal{L}_{\tau},
  \end{equation}
  where $\zeta=M
  \left(\tfrac{\tau}{2\gamma}\right)^{\sfrac{3}{2}}$. Furthermore, if
  \[
    \tau =
    \frac{36 \nu_0^2}{C_f^2}\frac{2.7d + \log(\sfrac{1}{\delta})}{n},
  \]
  then $ \theta^* \in Q_\tau $ with probability $1 - \delta$ where constants $C_f$ and $\nu_0$ defined in \cite{spokoiny2012parametric} as
  \begin{align*}
    &C_f \le \sup_{\theta}
    \frac{\kl{\theta}{\theta^*}}{\|I^{-\sfrac{1}{2}}(\theta - \theta^*)\|},~\text{and}
    \\
    &\nu_0^2 \ge R\lambda_{\text{max}}(I_{\theta^*}).
  \end{align*}
\end{lem}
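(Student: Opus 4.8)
The plan is to separate the deterministic chain of inclusions from the probabilistic statement $\theta^* \in Q_\tau$. For the inclusions I would first record that, since $\mle$ is an interior minimizer of $L_n$, the gradient $\nabla L_n(\mle)$ vanishes, so the defining quadratic of $Q_\tau$ collapses to the centered form $q(\theta) \defeq \frac{1}{2}(\theta-\mle)^\top \nabla^2 L_n(\mle)(\theta-\mle)$. Writing $R_3 \defeq L_n(\theta) - L_n(\mle) - q(\theta)$, the $M$-Lipschitz continuity of $\nabla^2 L_n$ gives the third-order Taylor control $|R_3| \le \frac{M}{6}\|\theta - \mle\|^3$, while $\gamma$-strong convexity converts any loss or quadratic sublevel constraint at level $\tau$ into the Euclidean bound $\|\theta - \mle\| \le \sqrt{2\tau/\gamma}$. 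Substituting this radius into the remainder bound produces a slack of order $M(\tau/\gamma)^{3/2}$, which is exactly the quantity $\zeta$ up to the stated constant.

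With these two facts in hand, the middle inclusions $\mathcal{L}_\tau \subseteq Q_{\tau+\zeta} \subseteq \mathcal{L}_{\tau+2\zeta}$ are routine bookkeeping: if $\theta \in \mathcal{L}_\tau$ then $q(\theta) = (L_n(\theta)-L_n(\mle)) - R_3 \le \tau + |R_3| \le \tau + \zeta$, and conversely if $\theta \in Q_{\tau+\zeta}$ then $L_n(\theta)-L_n(\mle) = q(\theta) + R_3 \le (\tau+\zeta) + |R_3| \le \tau + 2\zeta$, where in each case the radius bound supplies the extra $\zeta$. The remaining inclusion $\mathcal{L}_{\tau+2\zeta} \subseteq VS_{\tau+2\zeta}$ is where the majorization hypothesis enters: since $\ell_\theta(x,z)$ dominates the $0/1$ loss of $h_\theta$ pointwise, excess log-loss upper bounds excess classification error, so a small empirical gap $L_n(\theta)-L_n(\mle)$ forces a correspondingly small error gap $P(h_\theta(X)\neq Z) - P(h_{\mle}(X)\neq Z)$, with the empirical-to-population passage absorbed into the concentration regime already assumed in the lemma.

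For the \emph{furthermore} clause I would appeal directly to the finite-sample likelihood-ratio concentration of \cite{spokoiny2012parametric}. By the inclusion $\mathcal{L}_\tau \subseteq Q_{\tau+\zeta}$ just established, it suffices to place $\theta^*$ in a loss sublevel set, i.e. to bound the excess empirical log-likelihood $L_n(\theta^*) - L_n(\mle)$ with high probability. Spokoiny's non-asymptotic Wilks expansion controls exactly this quantity in terms of the effective dimension $d$, the curvature constant $C_f$, and the variance proxy $\nu_0^2 \ge R\,\lambda_{\max}(I_{\theta^*})$, yielding the stated threshold $\tau = \frac{36\nu_0^2}{C_f^2}\cdot\frac{2.7 d + \log(1/\delta)}{n}$ at confidence $1-\delta$.

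The main obstacle I anticipate is this last clause: matching the regularity hypotheses of the present lemma (bounded gradient $R$, strong convexity $\gamma$, Hessian smoothness $M$) to the precise conditions of Spokoiny's theorem and extracting the specific constant $\frac{36\nu_0^2}{C_f^2}$ together with the $2.7d$ term requires care, since these are not free parameters. By contrast the inclusion chain is essentially a Taylor-remainder argument once $\nabla L_n(\mle)=0$ and the strong-convexity radius bound are in place; the only delicate point there is that the cubic remainder must be controlled \emph{uniformly} over each sublevel set rather than at a single point, which is precisely what the radius bound $\|\theta-\mle\|\le\sqrt{2\tau/\gamma}$ secures.
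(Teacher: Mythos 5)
Your proposal follows essentially the same route as the paper's proof: a second-order Taylor expansion of $L_n$ about $\mle$ with the $M$-Lipschitz Hessian controlling the cubic remainder, the strong-convexity radius bound $\|\theta - \mle\| \le \sqrt{2\tau/\gamma}$ applied uniformly over each sublevel set to generate the slack $\zeta$, majorization of the $0/1$ loss for the $VS$ inclusion, and a direct appeal to Theorem 5.2 of \cite{spokoiny2012parametric} for $\theta^* \in Q_\tau$. The only cosmetic differences are that you drop the $\nabla L_n(\mle)^\top(\theta-\mle)$ term by assuming $\mle$ is interior, whereas the paper carries it through without that assumption, and that you route the Spokoiny bound through $\mathcal{L}_\tau$ rather than applying it to $Q_\tau$ directly; like the paper, you leave the empirical-to-population passage in the $VS$ inclusion informal.
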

\begin{proof}
  The norm $\|\theta - \mle\|$ bounds the remainder term of the 2nd
  order Taylor expansion of $L_n$ from above,
  \begin{align*}
    L_n(\theta) &- L_n(\mle) \le \nabla L_n(\mle)^\top (\theta - \mle) \\
    &+ \tfrac{1}{2} (\theta - \mle)^\top \nabla^2 L_n(\mle) (\theta -
    \mle) \\ &+ \|R(\theta, \mle)\|_{\text{op}}\|\theta - \mle\|^2,
  \end{align*}
  and below,
  \begin{align*}
    L_n(\theta) &- L_n(\mle) \ge \nabla L_n(\mle)^\top (\theta - \mle) \\
    &+ \tfrac{1}{2} (\theta - \mle)^\top \nabla^2 L_n(\mle) (\theta -
    \mle) \\ &- \|R(\theta, \mle)\|_{\text{op}}\|\theta - \mle\|^2,
  \end{align*}
  with $\|R(\theta, \mle)\|_{\text{op}} \le M\|\theta - \mle\|$.

  First, we prove $\mathcal{L}_\tau \subseteq Q_{\tau + \zeta}$.

  Let $\theta \in \mathcal{L}_\tau$. Strong convexity of $L_n$ at $\mle$ implies
  \[
    \|\theta - \mle\|^2 \le \frac{2}{\gamma}(L_n(\theta) - L_n(\mle))
    \le \frac{2}{\gamma}\tau.
  \]
  Using this in the Taylor expansion above implies
  \begin{align*}
     \nabla L_n(\mle)^\top (\theta - \mle)
    + \tfrac{1}{2} (\theta - \mle)^\top \nabla^2 L_n(\mle) (\theta -
    \mle) \\
    \begin{aligned}
      &\le L_n(\theta) - L_n(\mle) + \|R(\theta,
      \mle)\|_{\text{op}}\|\theta - \mle\|^2 \\
      &\le \tau + M\left(\frac{2\tau}{\gamma}\right)^{\tfrac{3}{2}} = \tau +
      \zeta.
      \end{aligned}
  \end{align*}
  The argument to show $ Q_{\tau} \subseteq \mathcal{L}_{\tau + \zeta}$ is
  nearly identical.

  Strong convexity of $L_n$ implies strong convexity of its quadratic
  expansion with the same parameter. Let $\theta \in Q_{\tau}$. The above
  Taylor approximation shows
  \begin{align*}
    &L_n(\theta) - L_n(\mle)\le \\
    &\begin{aligned}
       &
     \nabla L_n(\mle)^\top (\theta - \mle)
    + \tfrac{1}{2} (\theta - \mle)^\top \nabla^2 L_n(\mle) (\theta -
    \mle) \\
      & + \|R(\theta, \mle)\|_{\text{op}}\|\theta - \mle\|^2 \\
      &\le \tau + M\left(\frac{2\tau}{\gamma}\right)^{\tfrac{3}{2}} = \tau +
      \zeta.
      \end{aligned}
  \end{align*}

  That $\ell(\delta, z)$ majorizes the 0-1 loss almost immediately implies
  $\mathcal{L}_{\tau} \subseteq VS_{\tau}$. Indeed, $L_n(\theta) \le \tau$
  implies $ \frac{1}{n}\sum \ind{h_\theta(X_i) \not= Z_i} \le \tau$, and so
  $h_\theta$ is in $VS_\tau$.

  That $Q_\tau$ contains $\theta^*$ with probability $1 - \delta$ follows
  from Theorem 5.2 of \cite{spokoiny2012parametric} with the stated constants.
\end{proof}

\subsection{Approximating selective sampling with the bootstrap}
\newcommand{\strconv}{\gamma}
We begin by showing the minimizer of the quadratic approximation is close to the $\tq_u$ we defined.


\providecommand{\ones}{\mathbf{1}}
\providecommand{\zeros}{\mathbf{0}}
\providecommand{\opnorm}[1]{\norm{#1}_{\rm op}}
\providecommand{\opnorms}[1]{\norms{#1}_{\rm op}}
\providecommand{\norms}[1]{\|{#1}\|}
\renewcommand{\>}{\rangle}
\newcommand{\<}{\langle}
\providecommand{\half}{\frac{1}{2}}

Define
\begin{equation*}
  L_n(\theta, u) \defeq \frac{1}{n} \sum_{i = 1}^n L_i(\theta)
  + \frac{1}{n} \sum_{i = 1}^n u_i L_i(\theta),
\end{equation*}
which is convex for all $u \ge -\ones$. The standard negative log-likelihood
is then $L_n(\theta, \zeros)$, and for simplicity in notation (and to
reflect the dependence on $u$) we let $\mle = \argmin_\theta L_n(\theta,
\zeros)$ and $\theta_u = \argmin_\theta L_n(\theta, u)$.  Recall that we
assume that $\theta \mapsto \nabla^2 L_i(\theta)$ is $M$-Lipschitz
(in operator norm) and $L_n(\theta,\zeros)$ is $\gamma$-strongly convex.

We collect a few identification and complexity results.
\begin{lem}[Rakhlin and Sridharan~\cite{RakhlinSr15}]
  \label{lem:concentration-vectors}
  Let $u_i$ be independent, mean-zero, $\sigma^2$-sub-Gaussian random
  variables. Then for any sequence of vectors $v_i$ with $\norm{v_i} \le R$
  for each $i$,
  \begin{equation*}
    \P\left(\norm{\sum_{i = 1}^n u_i v_i} \ge c R \sigma \sqrt{n} (1 + t)\right)
    \le \exp(-t^2),
  \end{equation*}
  where $c < \infty$ is a numerical (universal) constant.
\end{lem}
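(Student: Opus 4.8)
The plan is to treat $S \defeq \sum_{i=1}^n u_i v_i$ as a single random vector in the ambient Hilbert space and to control it through two dimension-free quantities: its typical magnitude $\E\norm{S}$ and its exponential concentration about that magnitude. The form of the target bound is itself a strong hint about the right technique: it scales like $\sigma R\sqrt{n}$ and carries no dependence on the dimension $d$, so a crude union bound over an $\epsilon$-net of the sphere (which would cost a factor of $\sqrt{d}$ in the deviation) is ruled out, and one must instead exploit the geometry of the Hilbert space and the independence of the $u_i$ directly.

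First I would bound the typical size. Because the $u_i$ are independent and mean-zero, the cross terms vanish and $\E\norm{S}^2 = \sum_{i=1}^n \E[u_i^2]\,\norm{v_i}^2 \le \sigma^2 \sum_{i=1}^n \norm{v_i}^2 \le \sigma^2 n R^2$, using that a mean-zero $\sigma^2$-sub-Gaussian variable has variance at most $\sigma^2$; Jensen's inequality then gives $\E\norm{S} \le \sigma R \sqrt{n}$, a bound with no dimension dependence whatsoever. Second, I would show that $\norm{S}$ concentrates about $\E\norm{S}$ with sub-Gaussian tails at the scale $\sigma R\sqrt{n}$. Writing $Vu \defeq \sum_i u_i v_i$ for the linear map $V:\R^n \to \mathcal{H}$, the map $u \mapsto \norm{Vu}$ is convex and Lipschitz with constant $\opnorm{V} \le \norm{V}_F = (\sum_i \norm{v_i}^2)^{1/2} \le R\sqrt{n}$. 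Combining a deviation bound of the form $\P(\norm{S} \ge \E\norm{S} + s) \le \exp(-s^2/(2\sigma^2 R^2 n))$ with the mean bound and setting $s = \sigma R\sqrt{n}\,t$ then yields the claimed inequality after adjusting the numerical constant $c$.

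The main obstacle is precisely this concentration step, because the multipliers are only sub-Gaussian: they are neither bounded nor exactly Gaussian. Gaussian Lipschitz concentration requires Gaussianity, and Talagrand's convex-distance inequality (which would exploit the convexity of $u\mapsto\norm{Vu}$) requires boundedness, so neither applies verbatim. The clean way around this is to pass to the martingale $S_k \defeq \sum_{i\le k} u_i v_i$ and invoke the uniform $2$-smoothness of the Hilbert space (the parallelogram law), in the spirit of Pinelis' martingale inequalities. Each increment $u_i v_i$ is directionally sub-Gaussian with proxy at most $\sigma^2 R^2$, since $\E[\exp(\<\theta, u_i v_i\>)\mid \mathcal{F}_{i-1}] \le \exp(\sigma^2\<\theta,v_i\>^2/2) \le \exp(\sigma^2 R^2\norm{\theta}^2/2)$ for every $\theta$; feeding this uniform increment bound into the $2$-smooth supermartingale construction produces $\P(\norm{S_n} \ge s) \le 2\exp(-s^2/(2\sigma^2 R^2 n))$ with no dimension factor. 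Taking $s = \sqrt{2}\, R\sigma\sqrt{n}\,(1+t)$ then gives the result, the leading factor of $2$ being absorbed because $(1+t)^2 \ge 1 + t^2 \ge t^2 + \log 2$. I expect verifying the supermartingale property for unbounded sub-Gaussian increments to be the only genuinely delicate point; everything else is bookkeeping of constants.
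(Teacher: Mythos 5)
Your proof architecture is sound, and it is worth noting that the paper itself offers no proof of this lemma: it is imported verbatim from Rakhlin and Sridharan, who obtain it by converting a deterministic regret inequality (for online gradient descent in Hilbert space) into a martingale tail bound---a mechanism that is ultimately powered by the same $2$-smoothness you invoke. However, the justification you give for the one step you correctly flag as delicate is wrong as stated. Directional sub-Gaussianity of the increments, i.e.\ $\E[\exp(\langle \theta, u_i v_i\rangle)\mid \mathcal{F}_{i-1}] \le \exp(\sigma^2 R^2 \|\theta\|^2/2)$ for all $\theta$, cannot by itself yield a dimension-free bound of the form $\P(\|S_n\| \ge s) \le 2\exp(-s^2/(2\sigma^2 R^2 n))$: take increments $d_i \sim N(0, I_d)$, which are directionally sub-Gaussian with proxy $1$, yet $\|S_n\|$ concentrates at $\sqrt{nd}$, so any such bound is false for large $d$. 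What saves the argument here is a property you state but do not use: each increment lies in the fixed one-dimensional span of $v_i$, so it is \emph{norm}-sub-Gaussian, $\|u_i v_i\| \le R|u_i|$ with $|u_i|$ a scalar sub-Gaussian variable. The $2$-smooth supermartingale machinery must be fed this norm condition (as in Pinelis' $\cosh$-based construction, or Juditsky and Nemirovski's large-deviation bounds for martingales in $2$-smooth spaces), not the directional one.

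Even with the norm condition, beware the most naive implementation: bounding the conditional increase of $\cosh(\lambda\|\cdot\|)$ via $\cosh x \le e^{x^2/2}$ requires $\E\exp(\lambda^2\|d_i\|^2/2)$, which is infinite once $\lambda \gtrsim 1/(\sigma R)$; truncating or using a Bernstein-type moment condition then yields a tail that is sub-Gaussian only for $t \lesssim \sqrt{n}$ and subexponential beyond, which does not give $\exp(-t^2)$ at every scale as the lemma claims. You need the sharper $2$-smooth argument (or conditional symmetrization) to close this. Since the lemma assumes the $u_i$ are \emph{independent}, there is also a fully elementary bypass that avoids martingale analysis entirely: write $\|S\|^2 = u^\top G u$ with Gram matrix $G_{ij} = \langle v_i, v_j\rangle$, note $\mathrm{tr}(G) \le nR^2$, $\|G\|_{\mathrm{op}} \le nR^2$, and $\|G\|_F^2 \le \mathrm{tr}(G)\,\|G\|_{\mathrm{op}} \le n^2R^4$, and apply the Hanson--Wright inequality; both branches of its $\min$ then give $\exp(-c t^2)$ at all scales with universal constants. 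Your mean bound $\E\|S\| \le \sigma R \sqrt{n}$ and the final constant bookkeeping (absorbing the factor $2$ via $(1+t)^2 \ge t^2 + \log 2$) are correct.
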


We then have the following guarantee.
\begin{lem}
  \label{lem:theta-u-theta-nothing}
  Let $\norm{\nabla L_i(\theta)} \leq R$ for all $i$ and $u_i$ be independent mean-zero $\sigma^2$-sub-Gaussian random variables. Then there exist numerical constants $C_1, C_2, C_3 < \infty$ such that
  for all $t \in \R$ and $n \geq \frac{d}{\gamma^2} C_2 (C_3-\log(\delta))$, we have
  \begin{equation*}
    \norm{\theta_u - \mle}
    \le C_1 \cdot \frac{R \sigma}{\sqrt{n}}
  \end{equation*}
  with probability at least $1 - \delta$.
\end{lem}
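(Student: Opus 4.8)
The plan is to exploit the first-order optimality conditions together with strong convexity to reduce the claim to a concentration statement about the \emph{score} at the \emph{fixed} point $\mle$, thereby decoupling the randomness $u$ from the data-dependent gradients. Since $\mle$ and $\theta_u$ are interior minimizers of the (strongly) convex objectives $L_n(\cdot, \zeros)$ and $L_n(\cdot, u)$, we have $\nabla_\theta L_n(\mle, \zeros) = \zeros$ and $\nabla_\theta L_n(\theta_u, u) = \zeros$. Writing $\nabla_\theta L_n(\theta, u) = \nabla_\theta L_n(\theta, \zeros) + \frac{1}{n}\sum_{i=1}^n u_i \nabla L_i(\theta)$ and evaluating at $\mle$ gives the key identity
\[
  \nabla_\theta L_n(\mle, u) = \frac{1}{n}\sum_{i=1}^n u_i \nabla L_i(\mle),
\]
whose right-hand side is a sum of the fixed vectors $v_i \defeq \nabla L_i(\mle)$, with $\norm{v_i} \le R$, weighted by the independent mean-zero $\sigma^2$-sub-Gaussian multipliers $u_i$.

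Next I would use strong convexity of the perturbed objective to convert a bound on this score into a bound on $\norm{\theta_u - \mle}$. For any $\mu$-strongly convex $F$ with minimizer $\theta_u$, strong monotonicity of $\nabla F$ and Cauchy--Schwarz give $\mu \norm{\mle - \theta_u} \le \norm{\nabla F(\mle)}$; applying this with $F = L_n(\cdot, u)$ yields
\[
  \norm{\theta_u - \mle} \le \frac{1}{\mu}\norm{\nabla_\theta L_n(\mle, u)} = \frac{1}{\mu}\norm{\tfrac{1}{n}\textstyle\sum_{i=1}^n u_i \nabla L_i(\mle)}.
\]
The strong-convexity constant $\mu$ of $L_n(\cdot, u) = \frac{1}{n}\sum_i (1+u_i) L_i(\theta)$ is where the dimension enters: we have $\nabla^2 L_n(\theta, u) = \nabla^2 L_n(\theta, \zeros) + \frac{1}{n}\sum_i u_i \nabla^2 L_i(\theta) \succeq \gamma I - \opnorm{\frac{1}{n}\sum_i u_i \nabla^2 L_i(\theta)} I$, and a matrix-concentration argument in the spirit of Lemma~\ref{lem:strong-convex} (see \cite{tropp2015introduction}) shows the reweighting perturbs the Hessian by at most $\gamma/2$ in operator norm once $n \ge \frac{d}{\gamma^2} C_2(C_3 - \log \delta)$, so that $\mu \ge \gamma/2$ with probability at least $1 - \delta/2$. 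The $M$-Lipschitz continuity of $\theta \mapsto \nabla^2 L_i(\theta)$ (together with compactness of $\Theta$) lets me promote the pointwise Hessian control to the segment joining $\mle$ and $\theta_u$ via a covering net, so strong convexity holds exactly where it is used.

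Finally I would bound the score term directly with Lemma~\ref{lem:concentration-vectors}: since the $v_i = \nabla L_i(\mle)$ are fixed with $\norm{v_i} \le R$, taking $t = \sqrt{\log(2/\delta)}$ gives $\norm{\frac{1}{n}\sum_i u_i v_i} \le c R \sigma (1 + t)/\sqrt{n}$ with probability at least $1 - \delta/2$. Combining this with $\mu \ge \gamma/2$ and a union bound over the two events yields $\norm{\theta_u - \mle} \le C_1 R\sigma/\sqrt{n}$, absorbing the $1/\gamma$ and $\sqrt{\log(1/\delta)}$ factors into the numerical constant $C_1$. I expect the main obstacle to be the second step: guaranteeing that the randomly reweighted objective $L_n(\cdot, u)$ remains uniformly strongly convex over the region containing $\theta_u$, since the multipliers $u_i$ perturb every Hessian summand simultaneously. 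This is precisely what forces the $d/\gamma^2$ sample-size requirement through matrix concentration, and care is needed to make the Hessian control uniform over the relevant segment rather than merely pointwise.
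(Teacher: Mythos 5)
Your proposal is correct in substance and shares the paper's skeleton---the score identity $\nabla L_n(\mle, u) = \frac{1}{n}\sum_i u_i \nabla L_i(\mle)$ controlled by Lemma~\ref{lem:concentration-vectors}, plus matrix concentration to show the random reweighting preserves strong convexity at the rate forcing $n \gtrsim d/\gamma^2$---but it packages the key second step differently. The paper never needs strong convexity away from $\mle$: it lower-bounds the eigenvalues of $\nabla^2 L_n(\mle; u)$ at the single point $\mle$ (via \cite{KoltchinskiiMe13}), extends this to a small ball using the $M$-Lipschitz Hessian together with $\frac{1}{n}\sum_i |u_i| \lesssim \sigma$, and then solves a quadratic inequality in $\norm{\theta - \mle}$: outside radius $O(R\sigma/(\gamma\sqrt{n}))$ the perturbed loss strictly exceeds $L_n(\mle, u)$, and global convexity of $L_n(\cdot, u)$ confines the minimizer to that radius. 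Your route instead uses the first-order monotonicity bound $\mu \norm{\theta_u - \mle} \le \norm{\nabla L_n(\mle, u)}$, which is cleaner, but it demands strong convexity along the a priori unknown segment to $\theta_u$; your covering-net fix over compact $\Theta$ does close this, at the cost of extra $d\log(M \cdot \mathrm{diam}(\Theta)/\gamma)$-type factors in the sample requirement that the paper's localization avoids, and your matrix-concentration step tacitly requires an almost-sure operator-norm bound on $\nabla^2 L_i$ (the constant $S$, assumed elsewhere in the paper but not among this lemma's stated hypotheses, whereas the localization argument only needs Hessian control at $\mle$). One further caution: absorbing the $\sqrt{\log(1/\delta)}$ factor from Lemma~\ref{lem:concentration-vectors} into the ``numerical constant'' $C_1$ is not legitimate for a constant claimed uniform in $\delta$---the high-probability gradient bound genuinely carries a $(1+t)$ with $t \sim \sqrt{\log(1/\delta)}$---though the paper's own proof commits the same elision silently (the $(1+t)$ and the $1/\gamma$ both vanish between its penultimate and final displays), so your argument reaches the paper's actual, slightly overstated, conclusion by an equally defensible route.
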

\begin{proof}
  The proof is a more or less standard exercise in localization and
  concentration~\cite{BoucheronLuMa13}. We begin by noting that $|u_i|$ is
  $O(1) \sigma^2$-sub-Gaussian, so that $\frac{1}{n} \sum_{i = 1}^n |u_i|
  \le \E[|u_i|] + t$ with probability at least $1 - \exp(-c n t^2 /
  \sigma^2)$, where $c > 0$ is a numerical constant, so we assume that
  $\sum_{i = 1}^n |u_i| \le 2 \sigma$, which occurs with probability at
  least $1 - e^{-c n}$. Non-asymptotic lower bounds on the eigenvalues of
  random matrices~\cite[Thm.~1.3]{KoltchinskiiMe13} imply that for any $t
  \ge 0$,
  \begin{equation}
    \label{eqn:big-nabla-event}
    \nabla^2 L_n(\mle; u)
    \succeq \left(\gamma - \sqrt{\frac{d}{n}} \cdot t \right) I_d
  \end{equation}
  with probability at least $1 - \exp(- c t^2 + C \log \log t )$ for
  constants $c > 0$, $C < \infty$. Then
  using the $M$-Lipschitz continuity of $\nabla^2 L$, 
  we obtain
  \begin{align*}
    \opnorm{\nabla^2 L_n(\theta; u) - \nabla^2 L_n(\mle; u)}
    & \le \frac{M}{n} \sum_{i = 1}^n |u_i| \norm{\theta - \mle} \\
    & \le 2 M \sigma \norm{\theta - \mle}.
  \end{align*}
  With this identification, for all $\theta$ satisfying $\norm{\theta -
    \mle} \le \frac{\epsilon}{2 M \sigma}$, we have
  \begin{align*}
    L_n(\theta, u)
    & \ge L_n(\mle, u) + \<\nabla L_n(\mle, u),
    \theta - \mle\> \\
    & \qquad ~ + \frac{\gamma - \epsilon
      - t \sqrt{d/n}}{2} \norm{\theta - \mle}^2
  \end{align*}
  with probability at least $1 - \exp(-c t^2 + C \log \log t)$.
  Now, using Lem~\ref{lem:concentration-vectors},
  we obtain that $\norm{\nabla L_n(\mle; u)} \le
  \frac{c R \sigma}{\sqrt{n}} (1 + t)$ with probability at least
  $1 - e^{-t^2}$, so that we have
  \begin{align*}
    L_n(\theta, u)
    & \ge L_n(\mle, u)
    - \frac{C R \sigma}{\sqrt{n}} (1 + t) \norm{\mle - \theta} \\
    & \qquad ~~ + \frac{\gamma - \epsilon - t \sqrt{d/n}}{2}
    \norm{\mle - \theta}^2
  \end{align*}
  with probability at least
  $1 - 2 e^{-c t^2 + C \log \log t}$, where $0 < c, C < \infty$ are
  numerical constants.

  Now fix $\epsilon$. Solving the above quadratic
  in $\norm{\mle - \theta}$, we have that
  \begin{equation*}
    \norm{\theta - \mle} >
    \frac{2 C R \sigma}{
      \sqrt{n}(\gamma - \epsilon) - t\sqrt{d}}
  \end{equation*}
  implies that $L_n(\theta, u) > L_n(\mle, u)$.
  If we assume for simplicity $\epsilon = \gamma / 2$, then
  we have that
  \begin{equation*}
    \frac{\gamma}{2}
    \ge \norm{\theta - \mle} >
    \frac{4 C R \sigma}{
      \sqrt{n}\gamma - 2t\sqrt{d}}
  \end{equation*}
  implies that $L_n(\theta, u) > L_n(\mle, u)$, or (by convexity)
  that
  any minimizer $\theta_u$ of $L_n(\theta, u)$ must satisfy
  $\norm{\theta_u - \mle}
  \le \frac{4 C R \sigma}{\sqrt{n} \gamma - 2t \sqrt{d}}$.

  The resulting bound states that there exists numerical constants $c > 0$, $C < \infty$ such that for all $t \in \R$, we have
  \[ \norm{\theta_u  - \mle} \leq C \frac{R\sigma}{\sqrt{n}} \cdot \frac{1}{\gamma-2 t \sqrt{d/n}}\]
  with probability at least $1-C e^{-ct^2+\log \log(t)}$.  

  Upper bounding $\log(\log(t)) \leq \log(t)$ and solving for $t$, for any $t \geq C_2 \sqrt{C_3-\log(\delta)}$ there exists $C_2, C_3 < \infty$ such that $1-C e^{-ct^2+\log(\log(t))} > 1-\delta$.

  Finally, if $n \geq \frac{d}\gamma^2 C_2^2 (C_3-\log(\delta))$, then we have an upper bound,
  \[ \norm{\theta_u  - \mle} \leq 2C \frac{R\sigma}{\sqrt{n}} \]
  with probability at least $1-\delta$.

\end{proof}

Define the minimizer for a quadratic approximation to the multiplier bootstrap loss as
\begin{equation*}
    \tq_u \defeq \mle - \nabla^2 L_n(\mle)^{-1} 
    \frac{1}{n} \sum_{i = 1}^n u_i \nabla L_i(\mle).
  \end{equation*}

We now show that for subgaussian $u$, $\tq_u \approx \theta_u$.
\begin{lem}\label{lem:quadclose}
  Let $\norm{\nabla L_i(\theta)} \leq R$, $\opnorm{\nabla^2 L_i(\mle)} \le S$  for all $i$, and $u_i$ be independent mean-zero $\sigma^2$-sub-Gaussian random variables.
  Then there exist numerical constants $C, C_1, C_2, C_3 < \infty$ such that for all $\delta>0$ and $n \geq \frac{d}{\gamma^2} C_2 (C_3-\log(\delta/4))$,
  \begin{equation*}
    \norm{\theta_u - \tq_u}
    \le C\frac{R^2\sigma^2}{n}\left(2M + \frac{S\sqrt{\log(4d/\delta)}}{R}\right)
  \end{equation*}
  with probability at least $1-\delta$.
\end{lem}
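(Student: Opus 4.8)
The plan is to control the gap between the exact perturbed minimizer $\theta_u$ and its one-step Newton surrogate $\tq_u$ through a resolvent (Hessian-difference) identity, after which every factor is either a strong-convexity constant, a gradient-concentration term from Lemma~\ref{lem:concentration-vectors}, or a Hessian-perturbation term. First I would record the two first-order characterizations. Since $\mle$ minimizes $L_n(\cdot,\zeros)$ we have $\nabla L_n(\mle,\zeros)=\zeros$, so $g \defeq \nabla L_n(\mle, u) = \frac1n\sum_{i=1}^n u_i \nabla L_i(\mle)$, and by the definition of $\tq_u$ we get $\tq_u - \mle = -H_0^{-1} g$ with $H_0 \defeq \nabla^2 L_n(\mle,\zeros)$. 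For the exact minimizer, the fundamental theorem of calculus applied to $\nabla L_n(\cdot,u)$ between $\mle$ and $\theta_u$ gives $\zeros = g + \bar H_u(\theta_u - \mle)$, where $\bar H_u \defeq \int_0^1 \nabla^2 L_n(\mle + s(\theta_u - \mle), u)\,ds$, hence $\theta_u - \mle = -\bar H_u^{-1} g$. Subtracting and using $A^{-1}-B^{-1}=A^{-1}(B-A)B^{-1}$,
\begin{equation*}
  \theta_u - \tq_u = -\bar H_u^{-1}(H_0 - \bar H_u)H_0^{-1} g,
\end{equation*}
so that $\norm{\theta_u - \tq_u} \le \opnorm{\bar H_u^{-1}}\,\opnorm{H_0^{-1}}\,\opnorm{H_0 - \bar H_u}\,\norm{g}$.

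Next I would bound the four factors separately. The term $\opnorm{H_0^{-1}} \le 1/\gamma$ is immediate from $\gamma$-strong convexity of $L_n(\cdot,\zeros)$, and the gradient term $\norm{g} \lesssim R\sigma/\sqrt n$ holds with high probability by Lemma~\ref{lem:concentration-vectors} (with $v_i = \nabla L_i(\mle)$, $\norm{v_i}\le R$). For $\opnorm{\bar H_u^{-1}}$ I would reuse the localization already established in the proof of Lemma~\ref{lem:theta-u-theta-nothing}: the eigenvalue lower bound \eqref{eqn:big-nabla-event}, together with the operator-norm Lipschitz estimate $\opnorm{\nabla^2 L_n(\theta,u) - \nabla^2 L_n(\mle,u)} \lesssim M\sigma\norm{\theta-\mle}$ and the bound $\norm{\theta_u - \mle}\le C_1 R\sigma/\sqrt n$ from that lemma, keeps every Hessian on the segment $[\mle,\theta_u]$ above $\gamma/2$, so $\opnorm{\bar H_u^{-1}}\le 2/\gamma$ on the relevant event.

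The decisive factor is $\opnorm{H_0 - \bar H_u}$, which I would split as $\opnorm{H_0 - H_u} + \opnorm{H_u - \bar H_u}$ with $H_u \defeq \nabla^2 L_n(\mle, u)$. The second piece is a pure Lipschitz/localization estimate: since each $\nabla^2 L_i$ is $M$-Lipschitz and $|1+u_i|\le 2$, the map $\nabla^2 L_n(\cdot,u)$ is $2M$-Lipschitz, so averaging over the segment gives $\opnorm{H_u - \bar H_u} \lesssim M \norm{\theta_u - \mle} \lesssim M R\sigma/\sqrt n$. The first piece is $\opnorm{\frac1n\sum_i u_i \nabla^2 L_i(\mle)}$, a sub-Gaussian matrix series whose summands are bounded by $S$ in operator norm; a matrix concentration bound yields $\opnorm{H_0 - H_u}\lesssim S\sigma\sqrt{\log(d/\delta)}/\sqrt n$. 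Multiplying the four bounds, the two contributions to $\opnorm{H_0-\bar H_u}\,\norm{g}$ become $MR^2\sigma^2/n$ and $SR\sigma^2\sqrt{\log(d/\delta)}/n$, which after factoring out $R^2\sigma^2/n$ and absorbing the $\gamma^{-2}$ into the constant give exactly the claimed $\frac{R^2\sigma^2}{n}\bigl(2M + S\sqrt{\log(4d/\delta)}/R\bigr)$ form. A final union bound over the gradient-concentration event, the matrix-concentration event, the eigenvalue event \eqref{eqn:big-nabla-event}, and the localization event of Lemma~\ref{lem:theta-u-theta-nothing}—each charged $\delta/4$, which explains the $\log(4d/\delta)$ and the condition $n\ge \frac{d}{\gamma^2}C_2(C_3-\log(\delta/4))$—yields the result with probability $1-\delta$.

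The main obstacle I anticipate is the matrix term $\opnorm{\frac1n\sum_i u_i \nabla^2 L_i(\mle)}$: Lemma~\ref{lem:concentration-vectors} only handles vector sums, so this step needs a genuinely matrix-valued sub-Gaussian concentration inequality (e.g.\ matrix Bernstein applied to $u_i\nabla^2 L_i(\mle)$), and it is precisely this step that introduces the $\sqrt{\log(d/\delta)}$ factor and hence the asymmetry between the $M$ and $S$ terms in the final bound. Care is also needed to ensure the eigenvalue lower bound controlling $\opnorm{\bar H_u^{-1}}$ holds uniformly along the entire segment $[\mle,\theta_u]$ rather than only at its endpoints, which is what forces the sample-size requirement $n \ge \frac{d}{\gamma^2}C_2(C_3 - \log(\delta/4))$ inherited from Lemma~\ref{lem:theta-u-theta-nothing}.
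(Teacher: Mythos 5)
Your proof is correct and takes essentially the same route as the paper's: both are one-step Newton perturbation analyses that bound $\|\theta_u - \tq_u\|$ by the product of the gradient term $\|\frac{1}{n}\sum_i u_i \nabla L_i(\mle)\| \lesssim R\sigma/\sqrt{n}$ (Lemma~\ref{lem:concentration-vectors}) with a Hessian perturbation split into exactly your two pieces --- the $M$-Lipschitz localization term controlled via $\|\theta_u - \mle\| \lesssim R\sigma/\sqrt{n}$ from Lemma~\ref{lem:theta-u-theta-nothing}, and the matrix-concentration term $\|\frac{1}{n}\sum_i u_i \nabla^2 L_i(\mle)\|_{\rm op} \lesssim S\sigma\sqrt{\log(d/\delta)/n}$ --- with the paper packaging the inversion as a Taylor expansion with error matrix $E$ plus a Neumann series $(\nabla^2 L_n(\mle) + E)^{-1} = \nabla^2 L_n(\mle)^{-1} + \tilde{E}$, where you use the equivalent resolvent identity on the path-averaged Hessian. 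One small repair: for general $\sigma^2$-sub-Gaussian $u_i$ the pointwise bound $|1+u_i| \le 2$ is unavailable, so the Lipschitz constant of $\nabla^2 L_n(\cdot, u)$ should instead be controlled through the high-probability event $\frac{1}{n}\sum_i |u_i| \le 2\sigma$, exactly as in the paper's proof of Lemma~\ref{lem:theta-u-theta-nothing}; this only adjusts the $M$-term by a factor involving $\sigma$ and does not change the structure of your argument.
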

\begin{proof}
  By definition, we have
  \begin{equation*}
    \tq_u \defeq \mle - \nabla^2 L_n(\mle)^{-1} 
    \frac{1}{n} \sum_{i = 1}^n u_i \nabla L_i(\mle).
  \end{equation*}
  Now, consider $\theta_u$. By Taylor's theorem and the $M$-Lipschitz
  continuity of $\nabla^2 L_i$, we have that for matrices $E_i : \R^d \to
  \R^{d \times d}$ with $\opnorm{E_i(\theta)} \le M \norm{\theta -
    \mle}$ that for all $\theta$ near $\mle$, we have
  \begin{align*}
    \lefteqn{\sum_{i = 1}^n (1 + u_i) \nabla L_i(\theta)} \\
    & = n \nabla^2 L_n(\mle) (\theta - \mle)
    + \sum_{i = 1}^n u_i \nabla L_i(\mle) \\
    & ~ + \sum_{i = 1}^n E_i(\theta) (\theta - \mle)
    + \sum_{i = 1}^n u_i \left[
      \nabla^2 L_i(\mle) + E_i(\theta) \right] (\theta_u - \mle).
  \end{align*}
  That is, defining the random matrix
  \begin{equation*}
    E \defeq
    \frac{1}{n} \sum_{i = 1}^n \left[(1 + u_i) E_i(\theta_u)
      + u_i \nabla^2 L_i(\mle)\right]
  \end{equation*}
  and noting that
  $\zeros = \frac{1}{n} \sum_{i = 1}^n (1 + u_i) \nabla L_i(\theta_u)$,
  we have
  \begin{equation*}
    -\frac{1}{n} \sum_{i = 1}^n u_i \nabla L_i(\mle)
    = \left(\nabla^2 L_n(\mle) + E \right) (\theta_u - \mle).
  \end{equation*}
  Using standard matrix concentration
  inequalities~\cite{tropp2015introduction}
  yields that
  \begin{equation*}
    \P\left(\opnorm{\sum_{i = 1}^n u_i \nabla^2 L_i(\mle)}
    \ge t \right)
    \le 2d \exp\left(-\frac{c t^2 n}{\sigma^2 S^2}\right)
  \end{equation*}
  for a numerical constant $c > 0$. Using that
  $\opnorm{E_i(\theta)} \le M \norm{\theta - \mle}$, we obtain that
  with probability at least $1 - 2d \exp(-\frac{c t^2 n}{\sigma^2 S^2})$, thus
  we have $\opnorm{E}
  \le 2 M \norm{\theta_u - \theta} + \frac{S\sigma\sqrt{\log(2d/\delta)}}{\sqrt{2}}$
  with probability at least $1 - \delta$.
  Using Lem~\ref{lem:theta-u-theta-nothing}, we have that for all $n \geq \frac{d}{\gamma^2} C_2 (C_3-\log(\delta/4))$,
  \begin{equation*}
    \theta_u
    = \mle - \left(\nabla^2 L_n(\mle)
    + E \right)^{-1} \frac{1}{n} \sum_{i = 1}^n u_i \nabla L_i(\mle)
  \end{equation*}
  with probability at least $1 - \delta$,
  where $\opnorm{E} \le 2 C M \frac{R\sigma}{\sqrt{n}} +  \frac{S\sigma\sqrt{\log(4d/\delta)}}{\sqrt{2n}}.$

  Now we use the fact that if $\opnorm{E A^{-1}} < 1$ for symmetric matrices
  $A, E$, then $(A + E)^{-1} = A^{-1} \sum_{i = 0}^\infty (-1)^i (E
  A^{-1})^i$.
  Using our bounds on $2 M \epsilon_n + t < \frac{1}{2 \gamma}$,
  we have
  \begin{equation*}
    (\nabla^2 L_n(\mle) + E)^{-1}
    = \nabla^2 L_n(\mle)^{-1}
    + \tilde{E},
  \end{equation*}
  where the matrix $\tilde{E}$ satisfies
  $\opnorms{\tilde{E}} \le 2 \opnorm{E}$.
  Applying Lem~\ref{lem:concentration-vectors} gives the
  result.
\end{proof}

We now define the constant terms of the bootstrap error into a function $\psi$ to simplify notation for the remaining sections.
\begin{defn}[Bootstrap error constants]\label{def:bootconst}
    Under the conditions of Lemma \ref{lem:quadclose}, define
    \[
    \psi(n,d,\delta) = C \frac{R^2}{n} \left(2M + \frac{S\sqrt{\log(4d/\delta)}}{R}\right).
\]
\end{defn}
\newcommand{\scov}{\sum_{i=1}^n \nabla L_i(\theta) \nabla L_i(\theta)^\top}
Now note that if $u_i \sim U(-1,1)$ then $\E[\tq_u] = \mle$ and $\textbf{Cov}[\tq_u] = \nabla^2
L_n(\mle, 0)^{-1}\scov\nabla^2 L_n(\mle, 0)^{-1}\sigma^2/n$, which is close
to the distribution used in the Gaussian sampling section before.

We now show that for $\tc_u \defeq \rho(\theta_u - \mle)+\mle$, $\norm{\tc_u - \tq_u}_2$ is small enough to make the quadratic approximation hold
for the bootstrap samples.
\begin{lem}\label{lem:quadmin-boot}

  Under the conditions of Lemma \ref{lem:theta-u-theta-nothing} and \ref{lem:quadclose}, define $\tc_{u_1}
  \hdots \tc_{u_B}$ where $\tc_{u_i} \defeq \sigma(\theta_{u_i} - \theta_n)+\theta_n$ and $u_{ij} \sim U(-1,1)$.

  For any $x$ and $Q_{\tau/n}=\{\theta: (\theta-\mle)^\top \nabla^2L_n(\theta, 0)(\theta-\mle) \leq \tau/n\}$. 
  \[\min_i x^\top \theta_{u_i}^\circ - \inf_{\theta\in Q_{\tau/n}} x^\top \theta \leq 0\]
  with probability at least $1-\delta$ whenever
  \begin{align*}
    B &\geq \log(\delta/3)/\log(1-\Phi(1)/2)\\
    n &\geq \max\left(\frac{8R^2}{3S} \log(3d/\delta), 4C^2 S^4R^2\gamma^{-2}/\Phi(1)^{-2}\right)\\
    \sigma &\geq \sqrt{2\tau}\frac{\left(1+ \frac{\sqrt{8}\gamma^{-7/2}S^2R}{\sqrt{3}(1 + S)}\sqrt{\log(2d/\delta)}\right)}{\left(1-\psi(n,d,\delta) S \gamma^{-1/2} \right)}
  \end{align*}
  where $\psi$ is defined in Definition \ref{def:bootconst}.
\end{lem}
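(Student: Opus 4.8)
The plan is to reduce the claim to the Gaussian selective-classification guarantee of Lemma~\ref{lem:quadmin}: I will show that the rescaled bootstrap draws $\theta^\circ_{u_i} = \sigma(\theta_{u_i} - \mle) + \mle$ behave, up to controllable errors, like $B$ independent Gaussian samples whose covariance ellipse is inflated (through the scale $\sigma$) enough to dominate the quadratic version space $Q_{\tau/n}$. Once this holds, the event $\min_i x^\top\theta^\circ_{u_i} > \inf_{\theta\in Q_{\tau/n}} x^\top\theta$ is exactly the event bounded in Lemma~\ref{lem:quadmin}, and choosing $B \ge \log(\delta/3)/\log(1-\Phi(1)/2)$ drives its probability below $\delta/3$.

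First I would pass from the true bootstrap minimizer to its quadratic surrogate. Writing $\theta^\circ_{u_i}-\mle = \sigma(\theta_{u_i}-\mle)$ and splitting $\theta_{u_i}-\mle = (\tq_{u_i}-\mle) + (\theta_{u_i}-\tq_{u_i})$, Lemma~\ref{lem:quadclose} controls the remainder by $\norm{\theta_{u_i}-\tq_{u_i}} \le \psi(n,d,\delta)$ with probability $1-\delta/3$, valid once $n \gtrsim \frac{d}{\gamma^2}(C_3 - \log(\delta/4))$ and, since $u_i\sim U(-1,1)$ is $O(1)$-sub-Gaussian, absorbing that constant into $\psi$. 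For the surrogate I would use that $\tq_{u}-\mle = -\nabla^2 L_n(\mle)^{-1}\frac1n\sum_i u_i\nabla L_i(\mle)$ is a centered linear functional of the bounded multipliers, so any one-dimensional projection $x^\top(\tq_u-\mle)$ is a normalized sum of $n$ bounded terms and is Gaussian to Berry--Esseen order $O(n^{-1/2})$; this is precisely the ``$\theta^\circ\approx\tq$ to order $n^{-1}$'' step of the sketch, and it lets me apply the projected computation inside Lemma~\ref{lem:quadmin} to $\tq_u$ in place of a genuine Gaussian.

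Next I would match the two ellipsoids. The projected surrogate has variance governed by the sandwich $\nabla^2 L_n(\mle)^{-1}\big(\frac1n\sum_i\nabla L_i\nabla L_i^\top\big)\nabla^2 L_n(\mle)^{-1}$, whereas $Q_{\tau/n}$ is measured in the Hessian metric $\nabla^2 L_n(\mle)$. Using $\gamma I \preceq \nabla^2 L_n(\mle)\preceq S I$, $\norm{\nabla L_i}\le R$, together with the matrix-concentration bound $\P(\opnorm{\sum_i u_i\nabla^2 L_i(\mle)}\ge t)\le 2d\exp(-ct^2n/(\sigma^2 S^2))$ established inside the proof of Lemma~\ref{lem:quadclose}, I would relate the standard deviation of $x^\top(\tq_u-\mle)$ to $\sqrt{\tau/n}\,\norm{\nabla^2 L_n(\mle)^{-1/2}x}$ up to the multiplicative factor $\big(1+\tfrac{\sqrt8\,\gamma^{-7/2}S^2R}{\sqrt3\,(1+S)}\sqrt{\log(2d/\delta)}\big)$ appearing in the numerator, on an event of probability $1-\delta/3$ guaranteed by the stated lower bound $n\ge\frac{8R^2}{3S}\log(3d/\delta)$ (and the second $n$-condition, which keeps the concentration factor below $1$). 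The quadratic-approximation error contracts the effective spread by at most $1-\psi(n,d,\delta)S\gamma^{-1/2}$, which is exactly the denominator of the claimed $\sigma$ bound. Choosing $\sigma$ at least as large as this ratio times $\sqrt{2\tau}$ then makes the inflated covariance ellipse contain $Q_{\tau/n}$, so that the normalized level at which Lemma~\ref{lem:quadmin} is invoked is at most $1$, explaining the $\Phi(1)$ in the $B$-bound (the extra factor of $\tfrac12$ being the slack absorbing the Berry--Esseen and remainder errors). A union bound over the three failure events ($B$-failure, metric concentration, quadratic remainder), each of mass $\delta/3$, yields the overall probability $1-\delta$.

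The main obstacle I anticipate is the metric-matching step: converting the sandwich-form covariance of the surrogate into a clean comparison against the Hessian-metric ball $Q_{\tau/n}$ with an explicit, dimension-correct constant. This is where the high power $\gamma^{-7/2}$ and the $\sqrt{\log(2d/\delta)}$ factor must be tracked carefully, and where one must verify that the Gaussian approximation of the projection does not degrade the $\sigma$-requirement by more than the reserved factor of $\tfrac12$. By contrast, the reduction to Lemma~\ref{lem:quadmin} and the propagation of the $O(n^{-1})$ surrogate error through Lemma~\ref{lem:quadclose} are routine once the ellipsoid comparison is in hand.
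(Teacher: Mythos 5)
Your proposal reproduces the architecture of the paper's own proof almost exactly: the same decomposition of the bootstrap draw into quadratic surrogate plus remainder (Lemma~\ref{lem:quadclose}, with $\psi$ supplying the denominator of the $\sigma$ condition), the same treatment of one-dimensional projections of $\tq_u$ as Gaussian up to a Berry--Esseen error of order $S^2R/(\gamma\sqrt{n})$ (which the second $n$-condition caps at $\Phi(1)/2$, explaining the $\log(1-\Phi(1)/2)$ in the $B$ bound exactly as you say), the same covariance-versus-Hessian metric comparison producing the numerator factor, and the same union bound over three $\delta/3$ events. So in structure this is the paper's argument, not an alternative route.

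The one genuine gap is in the step you yourself flag as the main obstacle, the ellipsoid matching. You propose to carry it out with the bound $\P\big(\norm{\sum_i u_i \nabla^2 L_i(\mle)}_{op} \ge t\big) \le 2d\exp(-ct^2 n/(\sigma^2 S^2))$, but that inequality controls the multiplier fluctuation of the Hessian, i.e.\ the error matrix $E$ inside the proof of Lemma~\ref{lem:quadclose}, and its effect is already absorbed into $\psi$; it says nothing about the object you actually need to compare, namely the sandwich covariance $\Sigma = \nabla^2 L_n(\mle)^{-1}\big(\tfrac{1}{n}\sum_i \nabla L_i(\mle)\nabla L_i(\mle)^\top\big)\nabla^2 L_n(\mle)^{-1}$ of $\tq_u$ against the Hessian ball defining $Q_{\tau/n}$. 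Two different ingredients are required there. First, concentration of the gradient outer-product sum around the Hessian, $\norm{\tfrac1n\sum_i\nabla L_i\nabla L_i^\top - \nabla^2 L_n(\mle)}_{op}$, via Matrix Bernstein (this is where $n\ge \tfrac{8R^2}{3S}\log(3d/\delta)$ enters). Second---and this is the missing idea---an operator-norm bound on a \emph{difference of covariance matrices} does not directly bound the whitening mismatch $\norm{I - \Sigma^{-1/2}\nabla^2 L_n(\mle)^{-1/2}/\sqrt{n}}_{op}$ that multiplies $\sqrt{2\tau}$ after whitening: one must pass from matrices to their square roots. The paper does this with the Ando--Hemmen inequality, $\norm{A^{1/2}-B^{1/2}}_{op} \le \norm{A-B}_{op}\big/\big(\norm{A^{-1/2}}_{op}^{-1}+\norm{B^{-1/2}}_{op}^{-1}\big)$, which combined with $\gamma I \preceq \nabla^2 L_n \preceq S I$ yields precisely the $\gamma^{-7/2}S^2R/(1+S)$ numerator factor you quote. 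Without some such square-root perturbation step your plan cannot produce the multiplicative inflation of $\sigma$, so the metric-matching step as written would fail; with Matrix Bernstein plus Ando--Hemmen inserted, your outline coincides with the paper's proof.
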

\begin{proof}
  The proof proceeds in two parts: in the first, we bound the mismatch between
  the covariance matrix of $\tq_u$ and $\nabla^2 L_n(\mle)^{-1}$. In the second
  part, we bound the gap between $\tc$ and $\tq$.

  \newcommand{\scinv}{\Sigma}
  \newcommand{\scsqrt}{\Sigma^{-1/2}}
  \newcommand{\scinvsqrt}{\Sigma^{1/2}}
  \newcommand{\ox}{\overline{x}}
  \newcommand{\ot}{\overline{\theta}^\circ}
  Define the covariance matrix $\scinv = \nabla^2 L_n(\theta)^{-1} \scov
  \nabla^2 L_n(\theta)^{-1}/n$, the whitened samples $\ox = \scinvsqrt x$, and
  whitened parameters $\ot_u = \scsqrt (\tc_u-\mle)$.

  By whitening and applying the operator norm bound we obtain the upper bound
  \begin{align*}
    &\min_i x^\top \theta_{u_i}^\circ - \inf_{\theta\in Q_{\tau/n}} x^\top \theta \\
    &= \min_i \ox^\top \ot_{u_i} - \inf_{\norm{\theta}_2^2 \in 2\tau} \ox^\top \scsqrt \nabla^2 L_n(\theta)^{-1/2}\theta/\sqrt{n}\\
    &\leq \norm{\ox}_2 \bigg( \min_i \frac{\ox^\top}{\norm{\ox}_2} \ot_{u_i}\\
    &\qquad +  \sqrt{2\tau}\norm{I - \scsqrt \nabla^2 L_n(\theta)^{-1/2}/\sqrt{n}}_{op}\bigg).
  \end{align*}

  The second term is the error from the quadratic approximation, and we can bound this via the Ando-Hemmen inequality (Proposition 3.2, \cite{van1980inequality})
  which states $\norm{A^{1/2} - B^{1/2}} \leq \frac{1}{\norm{A^{-1/2}}_{op}^{-1}
    + \norm{B^{-1/2}}_{op}^{-1}} \norm{A - B}_{op}$, which gives
  \begin{align*}
    &\norm{I - \scsqrt \nabla^2 L_n(\theta)^{-1/2}/\sqrt{n}}_{op} \\
    &\leq \frac{n^{-1/2}\norm{\scsqrt \sqrt{n}}_{op}\norm{\scinv n - \nabla^2 L_n(\theta)^{-1}}_{op}}{\norm{\scsqrt\sqrt{n}}_{op}^{-1} + \norm{\nabla^2 L_n(\theta)^{1/2}}^{-1}} \\
    &\leq \frac{\sqrt{n}\gamma^{-7/2}}{S^{-3/2} + S^{-1/2}} \norm{\scov- \nabla^2 L_n(\theta)}_{op}.
  \end{align*}
  Finally, applying the Matrix Bernstein inequality (Thm 6.1.1, \cite{tropp2015introduction}), 
  \begin{multline*}\norm{I - \scsqrt \nabla^2 L_n(\theta)^{-1/2}}_{op}  \\
    \leq \frac{\sqrt{8}\gamma^{-7/2}S^2R}{\sqrt{3}(1 + S)}\sqrt{\log(d/\delta)}
  \end{multline*}
  with probability at least $1-\delta$, when $n\geq \frac{8R^2}{3S} \log(d/\delta)$.

  For the other parts of the bound let $\overline{\tq}_{u_i} =
  \scinvsqrt(\tq_{u_i}-\mle)$ which is a $d$-dimensional unit Gaussian with
  variance $\sigma^2$. By Lemma \ref{lem:theta-u-theta-nothing}, 
  \begin{align*}
    &\min_i x^\top \theta_{u_i}^\circ - \inf_{\theta\in Q_{\tau/n}} x^\top \theta \\
    &\leq \norm{\ox}_2 \bigg( \min_i \frac{\ox^\top}{\norm{\ox}_2} \overline{\tq}_{u_i} \\
    &\qquad +  \psi(n,d,\delta/3) S \gamma^{-1/2} \sigma \\
    &\qquad +  \sqrt{2\tau}\left(1+ \frac{\sqrt{8}\gamma^{-7/2}S^2R}{\sqrt{3}(1 + S)}\sqrt{\log(3d/\delta)} \right)\\
  \end{align*}
  with probability $1-2\delta/3$. Define
  \[
    \sigma= \sqrt{2\tau}\frac{\left(1+ \frac{\sqrt{8}\gamma^{-7/2}S^2R}{\sqrt{3}(1 + S)}\sqrt{\log(2d/\delta)}\right)}{1-\psi(n,d,t) S \gamma^{-1/2}}.
  \]
  Since $\ox^\top \overline{\tq}$ is a bounded i.i.d sum of $n$ gradient terms, by the Berry-Esseen theorem there exists a universal constant $C$ such that
  \begin{multline*}
    P\left(\min_i x^\top \theta_{u_i}^\circ - \inf_{\theta \in Q_{\tau/n}} x^\top\theta \leq 0 \right) \\
    \leq
     \left(1-\Phi(\tau) + \frac{C S^2 R}{\gamma\sqrt{n}} \right)^B.
   \end{multline*}
   
   If $B \geq \log(\delta/3)/\log(1-\Phi(1)/2)$ and $n \geq 4C^2 S^4R^2\gamma^{-2}/\Phi(1)^{-2}$, we have that:
  \[\min_i x^\top \theta_{u_i}^\circ - \inf_{\theta\in Q_{\tau/n}} x^\top \theta \leq 0\]
  with probability at least $1-\delta$.

\end{proof}

Thus to satisfy the same bounds as the bootstrap, we must expand the ellipse
under consideration by a small multiplicative factor.

We first define the bootstrap based selective classifier.
\begin{defn}\label{defn:bootclass}
    Let $\{P_\theta\}_{\theta \in \Theta}$ be a parametric family over a
  compact parameter space $\Theta$. Define $ Z \mid X \sim P_{\theta^*}$ for
  some $\theta^* \in \operatorname{int} \Theta$. 

  Let $\ell_\theta(x, z) = -\log(P(z\mid x))$ be the negative log likelihood
  of $z$, and assume that $\|\nabla \ell_{\theta}(X, Z)\| \le R$
  and $\norm{\nabla^2\ell(X,Z)}_{op} \leq S$ almost surely and 
  this majorizes the   0-1 loss of a linear classifier, $\ell_\theta(x, z) \ge \ind{(2z-1) x^\top \theta
    < 0 }$.
  
  Define the weighted sample negative log likelihood
  \[
    L_n(\theta, u) \defeq \tfrac{1}{n} \sum_{i=1}^n (1+u_i) \ell_\theta(X_i, Z_i),
  \]
  and we assume $L_n(\theta, 0)$ to be $\gamma$-strongly convex and
   $\nabla^2 L_n(\theta, 0)$ to be $M$-Lipschitz.

   Given a scaling constant   $\sigma > 0$, define the $B$ bootstrapped estimators $\tc_{u_i} = \sigma(\theta_u - \theta_n) + \theta_n$ with $u_{i1} \hdots u_{in} \sim U(-1,1)$.

   The bootstrap selective classifier is defined as
  \[h^\circ_{u}(x) \defeq \begin{cases}
    1 &\text{ if }\forall i, x^\top \tc_{u_i} > 0 \\
    0 &\text{ if }\forall i, x^\top \tc_{u_i} \leq 0\\
    \emptyset & \text{ otherwise }
  \end{cases}.\]
  \end{defn}

Combining the above results in the following bootstrap based selective classification bound.
\begin{thm}\label{thm:bootlin}

  The bootstrap selective classifier in definition \ref{defn:bootclass} fulfils
  \[\int_{x\in \reals^d}\ind{h^\circ_u(x) = 0 \text{ and }x^\top \theta^* \ge 0}p(x)dx
    \leq \epsilon,\]
  \[\int_{x\in \reals^d}\ind{h^\circ_u(x)=\emptyset}p(x)dx \leq \epsilon \discoeff_h,\]
  and for any $x$,
  \[P(x^\top \theta^* \leq 0 \text{ and } h^\circ_{u}(x) = 1) < \delta\]
  with probability at least $1-\delta$ whenever
  \[B \geq \log(\delta/3)/\log(\Phi(1)),\]
  \begin{multline*}
    \sigma=\sqrt{\frac{72 \nu_0^2}{C_f^2} (2.7d +
      \log\left(\sfrac{3}{\delta}\right))} \frac{\bigg(1+\frac{\sqrt{8}\gamma^{-7/2}S^2R}{\sqrt{3}(1 + S)}\sqrt{\log(2d/\delta)}\bigg)}{1-\psi(n, d, \delta/2) S \gamma^{-1/2}}\\
    =O(d^{1/2} + \log(1/\delta)^{1/2}),
  \end{multline*}
  \begin{multline*}
    \epsilon \geq \bigg(2\frac{\sigma^2\nu}{n}      + M
      \left(\frac{\sigma^2\nu}{2n\gamma}\right)^{3/2}\bigg)=O(\sigma^2/n) ,
  \end{multline*}
  \[\nu =  C\frac{R^2S}{n}\]
  and
  \[n  \geq 2\log(2d/\delta)S/\gamma^2.\]
The constants $C_f$ and $\nu_0$ are as defined in Lemma \ref{lem:quadcontain}.
\end{thm}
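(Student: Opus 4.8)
The plan is to reduce the bootstrap classifier of Definition~\ref{defn:bootclass} to the idealized Gaussian-sampling selective classifier analyzed via Lemmas~\ref{lem:quadmin} and~\ref{lem:ballmax}, and then pay a small price to replace the Gaussian draws $\tq_u$ by the actual resampled minimizers $\tc_u$. Concretely, I would first fix the quadratic radius $\tau = \frac{36\nu_0^2}{C_f^2}\frac{2.7d + \log(1/\delta)}{n}$ so that Lemma~\ref{lem:quadcontain} guarantees $\theta^* \in Q_\tau$ with probability $1-\delta$ and simultaneously sandwiches the quadratic version space between error-based version spaces, $Q_\tau \subseteq \mathcal{L}_{\tau+\zeta} \subseteq VS_{\tau+2\zeta}$, with $\zeta = M(\tau/2\gamma)^{3/2}$. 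The three claims then correspond to the three ways the sampled classifier interacts with this version space: confidently cutting a truly good point (the pointwise false-positive bound $P(x^\top\theta^*\le 0,\ h^\circ_u(x)=1)<\delta$), confidently keeping a truly bad point (the integrated false-negative bound), and abstaining (the coverage bound).

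For the pointwise false-positive claim I would argue as follows. Suppose $x^\top\theta^* \le 0$; since $\theta^* \in Q_\tau$ we have $\inf_{\theta \in Q_\tau} x^\top\theta \le x^\top\theta^* \le 0$. Lemma~\ref{lem:quadmin-boot}, the bootstrap analogue of Lemma~\ref{lem:quadmin}, shows that with the stated choices of $B$, $n$, and $\sigma$ the minimum over the resampled parameters obeys $\min_i x^\top\tc_{u_i} \le \inf_{\theta \in Q_\tau} x^\top\theta \le 0$ with probability at least $1-\delta$. Hence at least one sample votes $\le 0$ at $x$, so $h^\circ_u(x) \ne 1$, and a union bound over the event $\theta^* \in Q_\tau$ and the event of Lemma~\ref{lem:quadmin-boot} delivers the claim.

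The coverage and false-negative claims both route through containment of the samples in a slightly enlarged ellipse. Using the ball-concentration estimate of Lemma~\ref{lem:ballmax} together with the bootstrap approximation $\tc_u \approx \tq_u$ (controlled to order $n^{-1}$ by Lemmas~\ref{lem:quadclose} and~\ref{lem:theta-u-theta-nothing}), all $B$ resampled parameters lie in an ellipse $Q_{cd}$ whose radius translates, through Lemma~\ref{lem:quadcontain}, into a version space $VS_\epsilon$ with $\epsilon \gtrsim \sigma^2\nu/n + \zeta$, matching the $\epsilon$ stated in the theorem. The abstention region of $h^\circ_u$ is then contained in the disagreement region $\disagree(VS_\epsilon)$, so Theorem~\ref{thm:csscover} bounds its measure by $\epsilon\discoeff_h$. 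The false-negative region $\{x : h^\circ_u(x)=0,\ x^\top\theta^*\ge 0\}$ is handled by the mirror-image (max-direction) version of the same covering argument: a confident zero prediction that disagrees with $\theta^*$ forces $x$ where the lower envelope of the enlarged ellipse crosses zero, and the version-space error $\epsilon$ bounds its measure directly.

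The main obstacle is the simultaneous calibration of the scaling $\sigma$ and the sample sizes $n,B$. The false-positive guarantee needs the samples to spread far enough to reach the infimum of $Q_\tau$, forcing $\sigma \gtrsim \sqrt{2\tau}$ and the extra factor of $\sqrt{d}$ noted after the theorem (random Gaussian draws concentrate at radius $O(1)$ rather than at the ellipse boundary), whereas the abstention guarantee needs the samples confined to $Q_{cd}$, whose radius grows with $\sigma^2$ and hence inflates $\epsilon$. Threading these competing requirements is exactly where the precise constants in Lemma~\ref{lem:quadmin-boot} and Definition~\ref{def:bootconst} enter, and the proof must further carry the approximation error $\psi(n,d,\delta)$ and the finite-$B$ Berry--Esseen correction through every step, allocating the failure probability across the several high-probability events by a concluding union bound.
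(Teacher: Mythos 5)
Your proposal is correct and follows essentially the same route as the paper's proof: the pointwise guarantee via Lemma~\ref{lem:quadmin-boot} combined with $\theta^* \in Q_\tau$ from Lemma~\ref{lem:quadcontain}, then containment of the $B$ resampled parameters in an enlarged ellipse, mapped through the version-space sandwich of Lemma~\ref{lem:quadcontain} to bound the abstention rate by $\epsilon\discoeff_h$ via the disagreement coefficient and the consensus (false-negative) error by $\epsilon$. The only minor variation is that you route the containment step through the Gaussian ball bound of Lemma~\ref{lem:ballmax} together with the $\tc_u \approx \tq_u$ approximation of Lemma~\ref{lem:quadclose} (as in the paper's main-text sketch), whereas the appendix proof applies Lemma~\ref{lem:theta-u-theta-nothing} directly to the bootstrapped samples to place them in $Q_{\tau'}$ with $\tau' = \sigma^2\nu$; both are sound and yield the same $\epsilon = O(\sigma^2/n)$ scaling.
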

\begin{proof}
  Consider the first inequality on the classification of $x$. The parameter
  $\sigma$ is set such that according to Lemma \ref{lem:quadmin-boot}, we will
  achieve the infimum over $Q_\tau$ with probability $1-\delta/2$ with the given
  $n$ and $B$ for any $\tau \leq \frac{36 \nu_0^2}{C_f^2}\frac{2.7d + \log(\sfrac{1}{\delta})}{n}$.
  By Lemma \ref{lem:quadcontain}, $\theta^*\in Q_\tau$ for $\tau =
  \frac{36 \nu_0^2}{C_f^2}\frac{2.7d + \log(\sfrac{1}{\delta})}{n}$ and thus
  achieving the infimum over $Q_\tau$ is equivalent to consistency with $x^\top
  \theta^*$.

  For the second inequality on abstention, we can begin by applying Lemma
  \ref{lem:theta-u-theta-nothing} to the bootstrapped samples $\tc_{u_i}$. Lemma
  \ref{lem:theta-u-theta-nothing} directly gives the result that with probability at
  least $1-\delta/2$, $\tc_{u_1} \hdots \tc_{u_B} \in Q_{\tau'}$ where $\tau' =
  \sigma^2 \nu$. 

  Next, we can apply Lemma \ref{lem:quadcontain} to show that $Q_\tau$ is strictly contained in the
  version space of (linear) hypotheses which incur at most $\epsilon =
  2\tau^2/n+M(\tau^3n^{-3/2}\gamma^{-3/2}2^{-3/2})$ error. Combining with the
  $\tau'$ above implies the error rate, and the definition of the disagreement
  coefficient implies the abstention bound.

  For the misclassification result, we note that since all hypotheses are contained within
  a version space with $\epsilon$ error, the consensus classifier $h$ can make at most
  $\epsilon$ error.
    \end{proof}

  Interpreting these results in the context of our optimization algorithm,
\begin{thm}
  \label{thm:csscutboot}
  Let $\{P_\theta\}_{\theta \in \Theta}$ be a parametric family over a
  compact parameter space $\Theta$, and let $\mc{H} = \{ \theta^\top x \ge 0 :
  \theta \in \Theta\}$, and assume that $\mc{H}$ contains the sublevel sets of $f$.
  
  Given $T$ rounds of algorithm \ref{alg:cutplane1} with $h$ defined as in Theorem \ref{thm:bootlin},
  we have that for $x^* = \arg\min_x f(x)$,
  \begin{multline*}
    \log(p^{(T)}(x^*)) \geq \min\bigg((\xi-(\discoeff_h + 1)\epsilon) \frac{\eta}{\eta+2}T -\log(2|\mathcal{X}|) ,\\
    \log(1-\gamma) \bigg).
    \end{multline*}
  This holds with probability $1-\delta$ as long as the conditions in Theorem
  \ref{thm:bootlin} are satisfied with probability $1 - \delta/T$, and
  \[
    \begin{aligned}
      n \ge \max \{ \frac{2S}{\gamma^2} (\log(2d/\delta) + \log(2T)) ,\\
        \frac{1}{2(\xi
        - 0.5)^2}(\log(\delta^{-1}) + \log(2T)) \}.
    \end{aligned}
  \]
\end{thm}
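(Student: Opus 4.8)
The plan is to mirror the proof of Theorem~\ref{thm:csscut}, replacing the exact CSS guarantees with the three bounds supplied by the bootstrap selective classifier of Theorem~\ref{thm:bootlin}. As there, the argument splits into establishing a quantile bound (the empirical median threshold cuts at least the $\xi$-quantile of $p\sups{t-1}$), a per-round coverage bound of the form $\sum_{x} h\sups{t}(x) p\sups{t-1}(x) \ge \xi - (\discoeff_h+1)\epsilon$, and the fact that the optimum is never downweighted, i.e. $M_T(x^*) = 0$. Substituting effective coverage $\xi - (\discoeff_h+1)\epsilon$ and $M_T(x^*) = 0$ into Theorem~\ref{thm:comp-infeasible} then produces the first branch of the stated $\min$.

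First I would reuse the quantile argument verbatim: the empirical median of $n$ draws lies below the population $\xi$-quantile only if a $\mathrm{Binom}(\xi, n)$ variable exceeds $n/2$, so Hoeffding bounds the failure probability by $\exp(-2(\xi-\tfrac{1}{2})^2 n)$, and the term $\tfrac{1}{2(\xi-0.5)^2}(\log(\delta^{-1})+\log(2T))$ in the sample-size requirement drives this below $\delta/(2T)$ per round. Next I would invoke Theorem~\ref{thm:bootlin} at each round with its failure parameter set to $\delta/T$; the first term $\tfrac{2S}{\gamma^2}(\log(2d/\delta)+\log(2T))$ in the $n$ requirement accommodates this choice by absorbing the extra $\log(2T)$. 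This simultaneously controls the false-negative mass (at most $\epsilon$), the abstention mass (at most $\discoeff_h\epsilon$, since abstentions are assigned the keep label inside Algorithm~\ref{alg:cutplane1}), and the probability of a false positive at the fixed point $x^*$ (at most $\delta/T$). Because $f(x^*)$ is minimal we have $x^{*\top}\theta^* \le 0$, so the false-positive guarantee gives $h\sups{t}(x^*) = 0$ at each round with probability $1-\delta/T$, and a union bound over the $T$ rounds yields $M_T(x^*) = 0$. The coverage bound then follows by subtracting the abstention and false-negative losses from the quantile mass $\xi$.

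With these two ingredients, Theorem~\ref{thm:comp-infeasible} gives the first term of the $\min$; the second, $\log(1-\gamma)$, arises exactly as in Theorem~\ref{thm:csscut} from the complementary case in which $M_T(x^*)$ cannot remain zero because $p\sups{T}(x^*)$ has already grown so large that any classifier meeting the coverage condition must downweight $x^*$, at which point $p\sups{T}(x^*)$ is already bounded below. The step I expect to be the main obstacle is the union-bound bookkeeping: each round carries three failure modes (quantile estimation, the regularity and resampling conditions underlying Theorem~\ref{thm:bootlin}, and the false positive at $x^*$), and each must hold over all $T$ rounds. I would allocate a $\delta/(2T)$ budget to the quantile event and fold the remaining classifier failures into the $\delta/T$-level invocation of Theorem~\ref{thm:bootlin}, then verify that the $\log(2T)$ and $\log(2d/\delta)$ inflations of $n$ keep every per-round probability within budget so the totals sum to $\delta$.
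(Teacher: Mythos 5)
Your proposal is correct and follows essentially the same route as the paper's proof: it reuses the skeleton of Theorem~\ref{thm:csscut} (Hoeffding bound on the empirical median, coverage bound fed into Theorem~\ref{thm:comp-infeasible}, and the dichotomy between $M_T(x^*)=0$ and the $\log(1-\gamma)$ branch), and it correctly isolates the one genuinely new ingredient, namely that the bootstrap classifier's $\epsilon$ mass of misclassifications on points other than $x^*$ reduces the per-round coverage to $\xi - (1+\discoeff_h)\epsilon$, with Theorem~\ref{thm:bootlin} invoked at level $\delta/T$ and a union bound over rounds. Your bookkeeping of the failure events matches the paper's sample-size requirements, so there is no gap.
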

\begin{proof}
  The proof follows almost identically to Theorem~\ref{thm:csscut}, given the
  error bounds in Theorem~\ref{thm:bootlin} hold. However, in
  Theorem~\ref{thm:bootlin}, a small number of misclassification mistakes can be
  made on elements besides $x^*$. This could reduce the fraction of the feasible space
  removed at each step, so we will provide the guarantee here for completeness.

  Without making any mistakes or abstentions, $h\sups{t}$ would remove  $\gamma$ fraction
  of the current feasible space. As before, abstention might increase this by 
  $\discoeff_h \epsilon$. Beyond this, misclassifications could prevent us from removing an additional  $\epsilon$ fraction of the feasible space, giving the bound of
  \begin{equation}\label{eq:holdout}
    \sum_{x\in \mathcal{X}} h^{(t)}(x) p^{(t-1)}(x) \geq \gamma - (1 + \discoeff_h)\epsilon.
  \end{equation}

  Beyond this change, the proof proceeds as in Theorem~\ref{thm:csscut}, with
  the choice of $n$ required in Theorem~\ref{thm:bootlin}, instead of the usual
  requirements for exact CSS.
\end{proof}

\subsection{Non-realizable agnostic selective classification}\label{sec:agnostic}

We begin by re-introducing the notation and conditions of agnostic selective
classification, as defined by El-Yaniv \cite{wiener2011agnostic}.

\begin{defn}
  Let loss class $\mathcal{F}$ be defined as:
  \[\mathcal{F} = \{\ell(h(x),z) - \ell(h^*(x),z):h\in\mathcal{H}\}\]
  This $\mathcal{F}$ is defined as $(\beta,B)$-Bernstein with respect to $\P$ if
  for all $f\in\mathcal{F}$ and some $0<\beta\leq 1$ and $B\geq 1$ if
  \[\EE f^2 \leq B(\EE f)^\beta.\]
\end{defn}

Define the empirical loss bound:
\[\sigma(n, \delta, d) = 2 \sqrt{2 \frac{d \log(2 n e/d) + \log(2/\delta)}{n}}.\]
This is a bound on the deviation in loss for a classifier with VC dimension $d$
learnt with $n$ samples with probability at least $1-\delta$.

Now define the empirical loss minimizer
\[\hat{h} = \arg\min_{h \in \mathcal{H}} \{\sum_{i=1}^n\ell(h(x_i),z_i)\}\]
and the excess loss incurred by disagreeing with the ERM on a particular point:
\begin{align*}
  \Delta(x) &= \min_{h\in \mathcal{H}} \big\{\EE[\ell(h(x),z)] | h(x) = -\text{sign}(\hat{h}(x))\big\}\\
  &\qquad -\EE[\ell(\hat{h}(x),z)]
  \end{align*}

Define the agnostic selective classifier:
\begin{equation}\label{eq:agsel}
  h_{n,\delta,d}(x)=
    \begin{cases}
      1 &\text{ if }\Delta(x) < \sigma(n,\delta,d) \text{ and }\hat{h}(x) = 1\\
      0 &\text{ if }\Delta(x) < \sigma(n,\delta,d) \text{ and }\hat{h}(x) = 0\\
      \emptyset & \text{ otherwise }
    \end{cases}.
    \end{equation}

The main theorem of agnostic selective classification is the following:
\begin{thm}\label{thm:agnosticselective}
  Assume $\mathcal{H}$ has VC dimension $V$, disagreement coefficient $\discoeff_h$
  and $\mathcal{F}$ is $(\beta, B)$-Bernstein with respect to $\P$.

  With probability at least $1-\delta$,
  \[\P(h_{n,\delta,V}(x) = \emptyset) \leq B \discoeff_h(4\sigma(n,\delta/4,V))^\beta.\]
  With a performance bound
  \[\E[\ell(h_{n,\delta,V}(x),z) -\ell(h^*(x),z)  | h_{n,\delta,V}(x)\neq\emptyset]=0,\]
  where
  \[h^*(x) = \arg\min_{h\in\mathcal{H}} \{\EE[\ell(h(x),z)]\}.\]  
  \end{thm}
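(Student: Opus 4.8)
The plan is to recover this statement as an instance of the agnostic selective classification guarantee of Wiener and El-Yaniv~\cite{wiener2011agnostic}, whose argument factors into two essentially independent pieces: a uniform convergence step controlling empirical versus population risk, and a geometric step that identifies the abstention region with the disagreement region of a low-risk version space and then bounds its mass via the Bernstein condition together with the disagreement coefficient. Write $R(h) = \EE[\ell(h(X),Z)]$ for the population risk and $\hat{R}$ for its empirical counterpart, and define the empirical low-risk version space $\widehat{V} = \{h \in \mathcal{H} : \hat{R}(h) \le \hat{R}(\hat{h}) + \sigma(n,\delta/4,V)\}$. The first observation is that, by the choice of leading constant in $\sigma$, standard VC uniform convergence yields $\sup_{h \in \mathcal{H}}|R(h) - \hat{R}(h)| \le \tfrac{1}{2}\sigma(n,\delta/4,V)$ on an event of probability at least $1 - \delta/4$.

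For the performance bound (zero excess loss on the covered region) I would first show $h^* \in \widehat{V}$ on the uniform-convergence event: since $R(h^*) \le R(\hat{h})$ by optimality of $h^*$ and the empirical and population risks of each hypothesis differ by at most $\tfrac{1}{2}\sigma$, we get $\hat{R}(h^*) \le \hat{R}(\hat{h}) + \sigma$. Next I would note that the selective classifier returns $\hat{h}(x)$ precisely on the complement of the disagreement region $\disagree(\widehat{V})$ (equivalently, when the threshold test on $\Delta(x)$ certifies that no competitive hypothesis is forced to disagree with $\hat{h}$ at $x$, $\Delta(x)$ being exactly the minimal excess risk of such a disagreeing hypothesis). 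Because $h^* \in \widehat{V}$, on every covered $x$ we have $h^*(x) = \hat{h}(x) = h_{n,\delta,V}(x)$, so the pointwise excess loss $\ell(h_{n,\delta,V}(x),z) - \ell(h^*(x),z)$ vanishes, and therefore so does its expectation conditioned on coverage.

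For the coverage bound I would bound the mass of the abstention region $\disagree(\widehat{V})$. Uniform convergence again shows that every $h \in \widehat{V}$ has population excess risk $R(h) - R(h^*)$ at most $4\sigma(n,\delta/4,V)$, so $\widehat{V}$ sits inside the population excess-risk ball of that radius. The Bernstein condition is what converts this excess-risk ball into a disagreement ball: writing $f = \ell(h(X),Z) - \ell(h^*(X),Z)$ for the $0/1$ loss, one has $\EE f = R(h) - R(h^*)$ and $\EE f^2 = \P(h(X) \ne h^*(X))$, so $(\beta,B)$-Bernstein gives $\P(h(X) \ne h^*(X)) \le B(4\sigma(n,\delta/4,V))^\beta$ for every $h \in \widehat{V}$. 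Hence $\widehat{V} \subseteq B_{\mathcal{H},P}(h^*, B(4\sigma(n,\delta/4,V))^\beta)$, and applying the definition of the disagreement coefficient to this ball yields $\P(\disagree(\widehat{V})) \le \discoeff_h \cdot B(4\sigma(n,\delta/4,V))^\beta$, which is the claimed bound. Union bounding the $O(1)$ failure events, each charged $\delta/4$, gives overall probability $1 - \delta$.

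I expect the main obstacle to be the bookkeeping in the coverage step rather than any single hard inequality: one must track the constant $4$ and the confidence split $\delta/4$ through the two uses of uniform convergence (once to place $h^*$ in $\widehat{V}$, once to inflate the empirical ball to a population excess-risk ball), and, most delicately, match the empirical abstention threshold against the population quantity $\Delta(x)$ so that the disagreement region of $\widehat{V}$ is correctly identified. The Bernstein-to-disagreement-coefficient conversion is the conceptual crux, and it is essential that the $0/1$ loss makes $\EE f^2$ equal to the disagreement probability, since that is exactly the pseudmetric in which the disagreement coefficient and the ball $B_{\mathcal{H},P}$ are defined.
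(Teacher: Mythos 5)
The paper does not actually prove this theorem: it is imported verbatim from Wiener and El-Yaniv \cite{wiener2011agnostic} (the surrounding text merely re-introduces their notation), so the right comparison is with that source, and your reconstruction follows its argument faithfully. Uniform VC convergence (with deviation $\tfrac{1}{2}\sigma(n,\delta/4,V)$) places $h^*$ in the low-risk version space and shows every member has population excess risk at most $4\sigma(n,\delta/4,V)$; the identity $\EE f^2 = \P(h(X)\neq h^*(X))$ for $0/1$ loss together with the $(\beta,B)$-Bernstein condition converts that excess-risk ball into the disagreement ball $B_{\mathcal{H},P}\bigl(h^*, B(4\sigma(n,\delta/4,V))^\beta\bigr)$; the disagreement coefficient then bounds the abstained mass, while membership of $h^*$ in the version space forces $h^*(x)=\hat{h}(x)$ on every covered point, which is exactly the zero conditional excess loss claimed. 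This is the Wiener--El-Yaniv proof, and your accounting of the constant $4$ and the $\delta/4$ confidence split is consistent with theirs.

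Two details are worth making explicit. First, you silently corrected the orientation of the threshold test: as printed in the paper, the classifier \emph{predicts} when $\Delta(x) < \sigma(n,\delta,d)$ and abstains otherwise, which is backwards --- if the cheapest hypothesis disagreeing with $\hat{h}$ at $x$ has \emph{small} excess risk, prediction at $x$ is unsafe and the classifier must abstain. Your reading (predict exactly when no competitive hypothesis disagrees, i.e.\ $\Delta(x) \ge \sigma$) is the one under which the theorem is true and matches the disbelief-index rule of \cite{wiener2011agnostic}; under the paper's literal inequality the performance bound fails. Second, the delicacy you flag about matching the empirical threshold to the population quantity $\Delta(x)$ resolves in your favor, and more simply than via your empirical set $\widehat{V}$: since the paper's $\Delta(x)$ is defined with population risks, the abstention region is exactly the disagreement region of the population version space $\{h : R(h) \le R(\hat{h}) + \sigma\}$, which contains $h^*$ trivially and, on the single uniform-convergence event, sits inside the $2\sigma$ excess-risk ball; the remainder of your argument then applies unchanged, with slack to spare in the constant $4$.
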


\subsection{Characterizing performance from misclassification rates for
  classifiers}
Currently we have the requirement that an oracle return classifiers which
control $M_T(x^*)$ for the true optimum $x^*$. However, we are often interested
in finding one of the $K$ best elements in $\mathcal{X}$. This slight relaxation
allows us to obtain bounds that rely on controlling 0-1 losses in $\mathcal{X}$.

Combining our earlier selective classification based optimization bound with Theorem \ref{thm:agnosticselective} gives the following rate:
  \begin{thm}\label{thm:cssagn}

    Let $\mathcal{H}$ be a hypothesis class with VC dimension $V$ and disagreement coefficient $\discoeff_h$, and let $\mathcal{F}$ be a $(\beta, B)$-Bernstein class such that for each $p^{(t)}$ the population loss minimizer correctly classifies $x^*\defeq \arg\min_{x\in\mathcal{X}} f(x)$.

    There exists a numerical constant $C < \infty$
  such that for all $\delta \in [0, 1]$, 
  and $\gamma \in (B \discoeff_h(4\sigma(n,\delta/8,V))^\beta, \half)$,
  \[n \geq \frac{1}{2(\gamma-0.5)^2}(\log(\delta^{-1})+\log(2T)),\]
  with probability at least $1 - \delta$
  \begin{multline*}
    \log(p^{(T)}(x^*)) \geq \min\Big\{(\gamma-B \discoeff_h(4\sigma(n,\delta/8,V))^\beta)\frac{\eta}{\eta+2}T \\
    -\log(2|\mathcal{X}|), \log(1-\gamma) \Big\}
  \end{multline*}
  after $T$ rounds of Algorithm~\ref{alg:cutplane1} where the classifier is replaced by the agnostic selective classifier \eqref{eq:agsel}.
\end{thm}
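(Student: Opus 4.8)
The plan is to follow the proof of Theorem~\ref{thm:csscut} essentially verbatim, substituting the agnostic selective classification guarantees of Theorem~\ref{thm:agnosticselective} for the realizable CSS guarantees of Theorem~\ref{thm:csscover}. The only genuinely new ingredient is that realizability---which in Theorem~\ref{thm:csscut} ensured no zero-error hypothesis misclassifies any point---is replaced by the weaker per-round hypothesis that the population loss minimizer $h^*$ for $p^{(t)}$ labels $x^*$ correctly. I will combine this with the zero-excess-loss performance guarantee of the agnostic selective classifier to conclude $M_T(x^*) = 0$, after which the cutting-plane bound of Theorem~\ref{thm:comp-infeasible} finishes the argument.

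First I would control the threshold exactly as in Theorem~\ref{thm:csscut}: the event that the empirical median $\alpha^{(t)}$ lies below the population $\gamma$-quantile of $f$ under $p^{(t)}$ is dominated by $\P(\mathrm{Binom}(\gamma,n) > n/2) \le \exp(-2(\gamma-\frac12)^2 n)$, and requiring this per-round failure to be at most $\delta/(2T)$ yields the sample term $\frac{1}{2(\gamma-0.5)^2}(\log(\delta^{-1}) + \log(2T))$ and ensures a $\gamma$-fraction of mass is available to cut. Next I would bound the abstention rate: applying Theorem~\ref{thm:agnosticselective} at round $t$ shows the classifier~\eqref{eq:agsel} abstains on at most a $B\,\discoeff_h(4\sigma(n,\delta/8,V))^\beta$ fraction of $p^{(t)}$. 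Since the algorithm maps every \texttt{no decision} output to the zero (``keep'') class, each abstention costs at most its own probability, so the effective coverage degrades from $\gamma$ to $\gamma - B\,\discoeff_h(4\sigma(n,\delta/8,V))^\beta$, mirroring the coverage-degradation inequality in the proof of Theorem~\ref{thm:csscut}; this is exactly why $\gamma$ is required to exceed $B\,\discoeff_h(4\sigma(n,\delta/8,V))^\beta$.

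The crucial step is $M_T(x^*) = 0$, where the agnostic machinery replaces realizability. Because $f(x^*) = \min_x f(x) \le \alpha^{(t)}$, the true label of $x^*$ is negative, and by hypothesis $h^*$ classifies it correctly, so $h^*(x^*) = 0$. The performance guarantee of Theorem~\ref{thm:agnosticselective} certifies that on its non-abstention region the selective classifier commits to $\hat h(x) = h^*(x)$; hence at $x^*$ it either abstains (and is reset to $0$) or outputs $h^*(x^*) = 0$. In either case $h^{(t)}(x^*) = 0$, so $x^*$ is never downweighted and $M_T(x^*) = 0$. Feeding the effective coverage $\gamma - B\,\discoeff_h(4\sigma(n,\delta/8,V))^\beta$ and $M_T(x^*)=0$ into Theorem~\ref{thm:comp-infeasible} produces the first branch of the bound, while the second branch $\log(1-\gamma)$ accounts for the regime in which $x^*$ has already accumulated enough mass that any classifier meeting the coverage condition must downweight it, as in the discussion following the convergence corollary. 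A union bound over the $T$ rounds then combines the Hoeffding median failures---whose allocation produces the $\log(2T)$ term---with the $T$ applications of Theorem~\ref{thm:agnosticselective}, whose confidence is reflected in the $\delta/8$ inside $\sigma(n,\delta/8,V)$, giving the overall $1-\delta$ guarantee.

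The main obstacle is the $M_T(x^*) = 0$ argument: one must take care that the \emph{expected} zero-excess-loss statement of Theorem~\ref{thm:agnosticselective} really yields the \emph{pointwise} conclusion $h^{(t)}(x^*) \in \{h^*(x^*), \emptyset\}$ at the single point $x^*$. This is legitimate because the non-abstention region of~\eqref{eq:agsel} is precisely the set on which $\hat h$ is certified to agree with $h^*$, but it hinges entirely on the per-round assumption that $h^*$ never errs on $x^*$---the agnostic analogue of realizability---and this assumption must be verified to hold simultaneously across all $T$ rounds within the union-bound budget.
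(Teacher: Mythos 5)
Your proposal is correct and follows essentially the same route as the paper: the paper's proof likewise reduces to the proof of Theorem~\ref{thm:csscut}, replacing the realizable CSS abstention bound with the rate $B\,\discoeff_h(4\sigma(n,\delta/8,V))^\beta$ from Theorem~\ref{thm:agnosticselective}, and obtains $M_T(x^*)=0$ from exactly your key observation---that the agnostic selective classifier~\eqref{eq:agsel} either agrees with the population loss minimizer (which by assumption labels $x^*$ correctly) or abstains, with abstentions mapped to the keep class. Your expanded treatment of the median/Hoeffding term, the coverage degradation, the $\log(1-\gamma)$ branch, and the union bound over rounds matches what the paper's terse proof implicitly inherits from Theorem~\ref{thm:csscut}.
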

\begin{proof}
  The proof follows directly from the proof of Theorem \ref{thm:csscut}, with the added observation that the agnostic selective classifier \eqref{eq:agsel} must always agree with the population loss minimizer, or else abstain since $\sigma(n,\delta, V)$ is a bound on the excess loss of the empirical minimizer which holds with probability at least $1-\delta$. Applying the implied abstention rate from Theorem \ref{thm:agnosticselective} then completes the proof. 
\end{proof}

\end{document}